\newcommand{\mathbbm}[1]{\text{\usefont{U}{bbm}{m}{n}#1}}
\theoremstyle{plain}
\newtheorem{theorem}{Theorem}[section]
\newtheorem{proposition}[theorem]{Proposition}
\newtheorem{lemma}[theorem]{Lemma}
\theoremstyle{definition}
\theoremstyle{remark}
\icmltitlerunning{WATCH: Adaptive Monitoring for AI Deployments via Weighted-Conformal Martingales}
\begin{document}

\twocolumn[
% \icmltitle{
% WATCH: Weighted Adaptive Testing for Changepoint Hypotheses via Weighted-Conformal Martingales
% }
\icmltitle{
\texorpdfstring{WATCH: Adaptive Monitoring for AI Deployments \\ via Weighted-Conformal Martingales}{WATCH: Adaptive Monitoring for AI Deployments via Weighted-Conformal Martingales}
}

%\section[this is a very long title I want to break manually]{\texorpdfstring{this is a very long title I\\ want to break manually}{this is a very long title I want to break manually}}

% It is OKAY to include author information, even for blind
% submissions: the style file will automatically remove it for you
% unless you've provided the [accepted] option to the icml2025
% package.

% List of affiliations: The first argument should be a (short)
% identifier you will use later to specify author affiliations
% Academic affiliations should list Department, University, City, Region, Country
% Industry affiliations should list Company, City, Region, Country

% You can specify symbols, otherwise they are numbered in order.
% Ideally, you should not use this facility. Affiliations will be numbered
% in order of appearance and this is the preferred way.
\icmlsetsymbol{equal}{*}

\begin{icmlauthorlist}
\icmlauthor{Drew Prinster}{jhu}
\icmlauthor{Xing Han}{jhu}
\icmlauthor{Anqi Liu}{jhu}
\icmlauthor{Suchi Saria}{jhu}
% \icmlauthor{Firstname1 Lastname1}{equal,yyy}
% \icmlauthor{Firstname2 Lastname2}{equal,yyy,comp}
% \icmlauthor{Firstname3 Lastname3}{comp}
% \icmlauthor{Firstname4 Lastname4}{sch}
% \icmlauthor{Firstname5 Lastname5}{yyy}
% \icmlauthor{Firstname6 Lastname6}{sch,yyy,comp}
% \icmlauthor{Firstname7 Lastname7}{comp}
% %\icmlauthor{}{sch}
% \icmlauthor{Firstname8 Lastname8}{sch}
% \icmlauthor{Firstname8 Lastname8}{yyy,comp}
%\icmlauthor{}{sch}
%\icmlauthor{}{sch}
\end{icmlauthorlist}

\icmlaffiliation{jhu}{Department of Computer Science, Johns Hopkins University, Baltimore, MD, USA}

% \icmlaffiliation{yyy}{Department of XXX, University of YYY, Location, Country}
% \icmlaffiliation{comp}{Company Name, Location, Country}
% \icmlaffiliation{sch}{School of ZZZ, Institute of WWW, Location, Country}

\icmlcorrespondingauthor{Drew Prinster}{drewprinster@gmail.com}
\icmlcorrespondingauthor{Xing Han}{aaronhan223@gmail.com}

% You may provide any keywords that you
% find helpful for describing your paper; these are used to populate
% the "keywords" metadata in the PDF but will not be shown in the document
\icmlkeywords{Machine Learning, ICML}

\vskip 0.3in
]

% this must go after the closing bracket ] following \twocolumn[ ...

% This command actually creates the footnote in the first column
% listing the affiliations and the copyright notice.
% The command takes one argument, which is text to display at the start of the footnote.
% The \icmlEqualContribution command is standard text for equal contribution.
% Remove it (just {}) if you do not need this facility.

%\printAffiliationsAndNotice{}  % leave blank if no need to mention equal contribution
% \printAffiliationsAndNotice{\icmlEqualContribution} % otherwise use the standard text.
\printAffiliationsAndNotice{}

\begin{abstract}
Responsibly deploying artificial intelligence (AI) / machine learning (ML) systems in high-stakes settings arguably requires not only proof of system reliability, but also continual, post-deployment monitoring to quickly detect and address any unsafe behavior. Methods for nonparametric sequential testing---especially conformal test martingales (CTMs) and anytime-valid inference---offer promising tools for this monitoring task. However, existing approaches are restricted to monitoring limited hypothesis classes or ``alarm criteria'' (e.g., detecting data shifts that violate certain exchangeability or IID assumptions), do not allow for online adaptation in response to shifts, and/or cannot diagnose the cause of degradation or alarm. In this paper, we address these limitations by proposing a weighted generalization of conformal test martingales (WCTMs), which lay a theoretical foundation for online monitoring for any unexpected changepoints in the data distribution while controlling false-alarms. For practical applications, we propose specific WCTM algorithms that adapt online to mild covariate shifts (in the marginal input distribution), quickly detect harmful shifts, and diagnose those harmful shifts as concept shifts (in the conditional label distribution) or extreme (out-of-support) covariate shifts that cannot be easily adapted to. On real-world datasets, we demonstrate improved performance relative to state-of-the-art baselines.
%For practical applications, we then propose specific WCTM algorithms that raise alarms when there is concept shift in the conditional label distribution or covariate shift that is ``unexpected'' in the sense that it cannot be easily adapted to.
% {\color{blue}[Revise \& add expts]}
% arguably requires not only demonstrated reliability, but also a mechanism for real-time safety monitoring to detect when performance goes awry. 
% The goal of this document is to outline ideas for how to perform online testing of any specific data generating assumptions on the joint distribution of an observed stream of data.
% around a ``general'' weighted permutation test that may offer a framework for then testing (any?) more specific null hypotheses on the data generating process. It especially builds on ideas from \citet{vovk2005algorithmic}, \citet{tibshirani2019conformal}, and \citet{prinster2024conformal}.
\end{abstract}

\section{Introduction}

As AI/ML systems become integral to real-world applications, ensuring their safety and utility under evolving conditions is essential for responsible deployment. However, even meticulously trained models with apparent reliability guarantees can fail abruptly when shifts in the data distribution or operational environment violate the underlying conditions they were designed for \citep{amodei2016concrete}. 
That is, unforeseen deployment conditions can make it impossible to guarantee reliability for all situations in advance. 
Consequently, there is growing recognition for the need to continuously monitor deployed AI systems for determining when model updates are required to mitigate downstream harm \citep{vovk2021retrain, podkopaev2021tracking, feng2022clinical, feng2025not}. In this work, we moreover argue that such monitoring methods should ideally perform at least three key functions:
 (1) maintain end-user reliability and minimize unnecessary alarms by adapting online to mild or benign data shifts; (2) rapidly detect more extreme or harmful shifts that necessitate updates; and (3) identify the root-cause of degradation to inform appropriate recovery.

For example, consider a healthcare use case where the goal is to predict the risk of sepsis, a life-threatening infection, $Y$ from electronic health record inputs $X$ (e.g., vital signs, lab tests, medical history, demographics) using an AI/ML system
(e.g., \citet{adams2022prospective}). Various clinical data shifts pose challenges to monitoring in practice \citep{finlayson2021clinician}.
% , including mild/benign shifts that may cause unnecessary alarms, and on the other hand extreme/harmful shifts that pose risks to patient safety and thus must be rapidly detected. 
Figure \ref{fig1:synthetic_examples} illustrates hypothetical synthetic-data shifts that can be interpreted through this sepsis example, where for simplicity $X$ only represents a patient's age. An example of a benign shift is a mild shift in patient demographics primarily toward young adults; Figure \ref{fig1:synthetic_examples}a shows a corresponding covariate shift in the marginal $X$ distribution. Such a mild shift to a younger population would be benign, as younger individuals are well-represented in the training data and also tend to have lower, less variable (easier to predict) sepsis risk---so, there is no need for an alarm. On the other hand, harmful shift examples include if the AI tool were deployed with a much older population than observed in the training data (e.g., Figure \ref{fig1:synthetic_examples}b) or if a new microbial strain were to arise that was especially severe in children (e.g., Figure \ref{fig1:synthetic_examples}c). In any such harmful shift case, swift detection and root-cause analysis is essential to initiate and inform retraining, to ultimately minimize harm.

\begin{figure*}[!htb]
\centering
    \includegraphics[width=\textwidth]{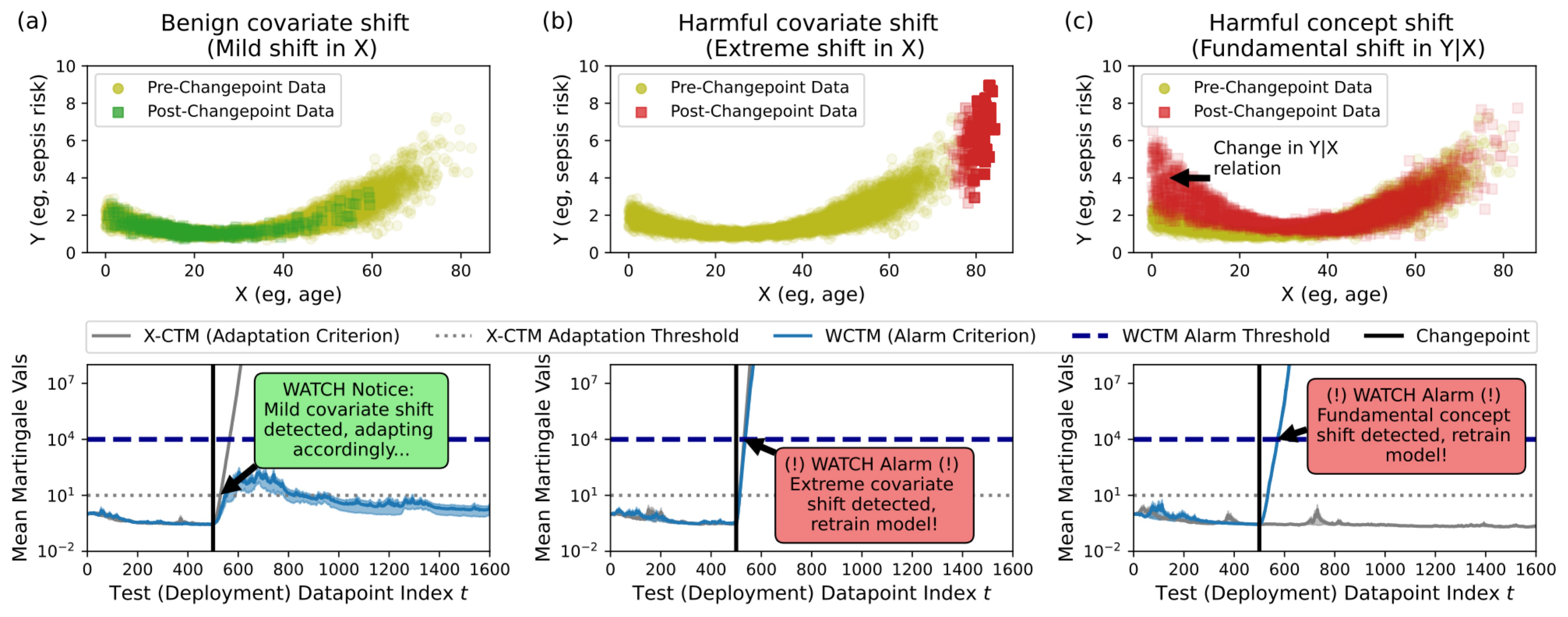}
    \vspace{-2em}
\caption{\small 
Each column represents a data shift scenario: the top row is a simulated shift example and the bottom row shows WATCH's response, averaged over 20 random seeds. WATCH raises an alarm to retrain the AI/ML once the WCTM (blue) exceeds its alarm threshold; meanwhile, an $X$-CTM (gray)---a standard CTM that only depends on inputs $X$, and thus only detects covariate shifts---dynamically initiates the WCTM's adaptation phase and aids in root-cause analysis. In (a), the $X$-CTM starts the WCTM's adaptation phase, which allows the WCTM to avoid raising an unnecessary alarm. In (b), the extreme covariate shift causes the WCTM to raise an alarm, indicating that the covariate shift is too severe to be adapted to. In (c), the illustrated concept shift causes WATCH to raise an alarm, but without the $X$-CTM detecting a shift in covariates $X$---this allows WATCH to diagnose the root-cause of the alarm as a concept shift in $Y\mid X$.
% X-CTM does not trigger an alarm for concept shifts without any corresponding covariate shift, thereby necessitating root-cause analysis to determine whether the shift occurs in $X$ or in $Y|X$.
}
\label{fig1:synthetic_examples}
\end{figure*}

Recent advances in anytime-valid inference \citep{ramdas2023game} and especially conformal test martingales (CTMs) \citep{volkhonskiy2017inductive, vovk2021testing} offer promising tools for AI monitoring with sequential, nonparametric guarantees.
However, existing CTM monitoring methods (e.g., \citet{vovk2021retrain}) all rely on some form of \textit{exchangeability} (e.g., IID) assumption in their null hypotheses---informally, meaning that the data distribution is the same across time or data batches---and as a result, standard CTMs can raise unnecessary alarms even when a shift is mild or benign (e.g., Figure \ref{fig1:synthetic_examples}a). Meanwhile, existing comparable monitoring methods for directly tracking the risk of a deployed AI (e.g., \citet{podkopaev2021tracking}) tend to be less efficient than CTMs in their speed of detecting harmful shifts, data usage, and/or computational complexity (Sec. \ref{subsec:harmful}). 

Our paper's contributions can be summarized as follows:
\vspace{-0.25cm}
\begin{itemize}
    \item Our main theoretical contribution is to propose weighted-conformal test martingales (WCTMs), constructed from sequences of online weighted-conformal $p$-values, which generalize their standard conformal precursors. WCTMs lay a theoretical foundation for sequential and continual testing of a broad class of null hypotheses beyond exchangeability, such as shifts that one aims to model and adapt to.
    \item For practical applications, we propose 
    WATCH: \textbf{W}eighted \textbf{A}daptive \textbf{T}esting for \textbf{C}hangepoint \textbf{H}ypotheses, a framework for AI monitoring using WCTMs. WATCH continuously adapts to mild or benign distribution shifts (e.g., Figure \ref{fig1:synthetic_examples}a) to maintain end-user safety and utility (and avoid unnecessary alarms), while quickly detecting harmful shifts (e.g., Figure \ref{fig1:synthetic_examples}b \& c) and enabling root-cause analysis.
\end{itemize}

\section{Background}
\label{sec:background}
% \vspace{-0.5em}
\textbf{Notation:} Assume an initial dataset $Z_{1:n}:= \{Z_i\}_{i=1}^n$, where each datapoint is a feature-label pair, $Z_i := (X_i, Y_i) \in \mathcal{X} \times \mathcal{Y}=\mathcal{Z}$. For simplicity, further assume that an AI/ML model $\widehat{\mu}$ is pretrained on a separate dataset.\footnote{This corresponds to a split conformal setting \citep{papadopoulos2008inductive}, 
% say $Z_{1:m}^0$ (
but our theory also extends to full conformal \citep{vovk2022algorithmic}, which avoids splitting data at a heavy computational cost.} After deploying the AI/ML model, the test points are observed sequentially at each time $t=1, ..., T$ (batch data can be given random ordering), though for simpler exposition, we initially focus on $t=1$. We abbreviate indices $[m]:=\{1, ..., m\}$. Random variables are denoted with capital letters (e.g., $Z_i$) and observed values with lowercase (e.g., $z_i$). For $Z := Z_{1:(n+t)}$, we let $F_Z$ denote the joint distribution function, $f_Z$ the joint density function. For a set of distributions $\mathbf{F_Z}$, we write $Z_1, Z_2, ...\sim \mathbf{F_Z}$ to mean the sequence has some unknown distribution $F_Z\in\mathbf{F_Z}$.
% Random variables (e.g., $Z_i$) 
% Capital-letterdenote random variables and lowercase are observed values. 
%Our overall goal is to monitor the deployment of the AI/ML model in three main stages: (1) Provide end-user safety assurance through conformal prediction guarantees, while adapting online to mild or benign data shifts; (2) Sequential test for and quickly detect harmful data shifts that pose risks to end-user safety in the first stage; and (3) Identify the root-cause of a harmful shift to inform retraining. 

\subsection{Conformal Prediction and Conformal $p$-Values}
\label{subsec:standard_conformal_pvals}

Standard conformal prediction (CP) \citep{vovk2022algorithmic} is an approach to \textit{predictive inference}: the task of converting a black-box AI/ML prediction, $\widehat{\mu}(X_{n+1})$, into a \textit{predictive confidence interval/set}, $\widehat{C}_{[n], \alpha}(X_{n+1})$, that should contain the true label with a user-specified rate, $1-\alpha \in (0, 1)$ (e.g., 90\%). This objective is called valid (marginal) \textit{coverage}:\footnote{Here, marginal means on average over the draw of calibration and test data; see, e.g., \citet{foygel2021limits} for more details.}
\begin{align}
    \mathbb{P}\big\{Y_{n+1} \in \widehat{C}_{[n], \alpha}(X_{n+1})\big\} \geq 1 - \alpha. \label{eq:coverage_condition}
\end{align}
Constructing the CP set, $\widehat{C}_{[n], \alpha}(X_{n+1})$, requires a labeled calibration dataset,\footnote{For $t>1$, the calibration data might be kept fixed (i.e., $Z_{1:n}$), or it may include past test points (i.e., $Z_{1:(n+t-1)}$); for now, we focus on $t=1$ to avoid this distinction and simplify exposition.} $Z_{1:n}$, and a ``nonconformity''\textit{score function}, $\mathcal{\widehat{S}}: \mathcal{X}\times\mathcal{Y} \rightarrow \mathbb{R}$, which generally uses the prefit ML predictor to quantify how ``strange'' a point $(x, y)$ is relative to the training data. A common example is the absolute-value residual score, $\mathcal{\widehat{S}}(x, y) = |y - \widehat{\mu}(x)|$. 
% The target coverage level $1-\alpha\in (0, 1)$ is user-specified (e.g., 90\%). 
% We will denote score values as $v_i := \widehat{S}(x_i, y_i)$ for all $i$.

Though not always framed as such, the CP set $\widehat{C}_{[n], \alpha}(X_{n+1})$ can be computed via a \textit{conformal p-value}, which places the ``strangeness'' of the test point's score in context of the calibration-data scores. That is, with $v_i := \widehat{\mathcal{S}}(x_{i}, y_{i})$ for all $i\in [n+1]$, \citet{vovk2022algorithmic} defines the conformal $p$-value for test point $Z_{n+1}$ relative to calibration data $Z_{1:n}$ as the fraction of the $n+1$ scores, $v_{1:(n+1)}$, that are at least as large as $v_{n+1}$, with ties broken uniformly at random:
% For a prefit (symmetric) nonconformity score $\mathcal{\widehat{S}}$, define \textbf{nonconformity scores} as in \citet{vovk2022algorithmic} for each $i$-th datapoint as $v_i := \mathcal{\widehat{S}}(z_i).$ With this notation, we adapt \citet{vovk2022algorithmic}'s definition of a \textbf{(non)conformal $p$-value} for a new test observation $Z_{n+1}$ relative to the calibration set $Z_{1:n}$ as 
\begin{align}
    p_{n+1} := \frac{|\{i : v_i > v_{n+1}\}| + u_{n+1}|\{i : v_i = v_{n+1}\}|}{n+1},\label{eq:unweighted_p_vals}
\end{align}
where $U_{n+1}\stackrel{iid}{\sim} \text{Unif}[0, 1]$; that is, $u_{n+1}$ is obtained from an independent standard uniform distribution.\footnote{Conservative conformal $p$-values set $u_{n+1}:=1$; the random variable $P_{n+1}$ corresponding to $p_{n+1}$ is called a $p$-variable, but we will often refer to both as $p$-values for more common terminology.}  The CP set $\widehat{C}_{[n], \alpha}(X_{n+1})$ can then be understood as the subset of labels $y\in \mathcal{Y}$ that would not result in too ``extreme'' of a test-point $p$-value, $p_{n+1}(X_{n+1}, y)$ (this is explicit notation for $p_{n+1}$ to emphasize its dependence on the candidate label $y$):
% defined as the subset of the label space where the corresponding test point's $p$-value
%, where $\widehat{Q}_{1-\alpha}$ is an empirical quantile function at level $1-\alpha$:
\begin{align}\label{eq:cp_set_def_via_p_val}
    & \widehat{C}_{[n], \alpha}(X_{n+1}) := \big\{y \in \mathcal{Y} : p_{n+1}(X_{n+1}, y) > \alpha \big\}.
\end{align} 
% \begin{align}\label{eq:cp_set_def_via_p_val}
%     & \widehat{C}_{[n], \alpha}(X_{n+1}) := \\
%     &\quad \big\{y \in \mathcal{Y} : \mathcal{\widehat{S}}(X_{n+1}, y) \leq \widehat{Q}_{1-\alpha}\big(V_{1:n}\cup \{\mathcal{\widehat{S}}(X_{n+1}, y)\}\big) \big\}, \nonumber
% \end{align}

\subsection{Exchangeability Underlies Standard CP Validity}

For the standard CP set in Eq. \eqref{eq:cp_set_def_via_p_val}, the coverage guarantee in Eq. \eqref{eq:coverage_condition} holds assuming that the calibration data $Z_{1:n}$ and test point $Z_{n+1}$ are all \textit{exchangeable}---intuitively, this means that the data distribution is invariant over time (e.g., independent and identically distributed or ``IID'' data). Formally, exchangeability means that the joint distribution, $F_Z$, is invariant to reorderings or permutations $\sigma$: that is, $F_Z(z_{\sigma(1)}, ..., z_{\sigma(n+1)}) = F_Z(z_{1}, ..., z_{n+1})$ for all $z_1, ..., z_{n+1}\in \mathcal{Z}$ and all $\sigma:[n+1]\rightarrow [n+1]$. 
With $\mathbf{F_Z^{\textbf{ex}}}$ denoting the set of exchangeable joint distributions, we can write the assumption of exchangeability as a (composite and nonparametric) null hypothesis:
% : i.e.,  or that their joint distribution is permutation-invariant (IID data are a special case).
\begin{align}
    \mathcal{H}_0^{\text{ex}} \ : \ Z_1, Z_2, ..., Z_{n+1} \sim \mathbf{F_Z^{\textbf{ex}}}.
    \label{eq:H_0_exchangeability}
\end{align}
Standard conformal $p$-values (Eq. \eqref{eq:unweighted_p_vals}) are valid or ``bona-fide'' $p$-values, in the usual statistical testing sense, for the exchangeablity null hypothesis \citep{vovk2022algorithmic}.
% : i.e.,  or that their joint distribution is permutation-invariant (IID data are a special case).
% \begin{align}
%     \mathcal{H}_0^{\text{ex}} \ : \ Z_1, Z_2, ..., Z_{n+1} \sim F_Z \in \mathbf{F_Z^{\textbf{ex}}}, 
%     \label{eq:H_0_exchangeability}
% \end{align}
% where $\mathbf{F_Z^{\textbf{ex}}}$ denotes the set of exchangeable joint distributions.
% \begin{align}
%     \mathcal{H}_0^{\text{ex}} : Z_1, Z_2, ...\stackrel{ex}{\sim} F_Z.
%     \label{eq:H_0_exchangeability}
% \end{align}
 That is, assuming $\mathcal{H}_0^{\text{ex}}$, then $P_{n+1}\stackrel{iid}{\sim}\text{Unif}[0,1]$ \citep{vovk2021testing} and 
\begin{align}\label{eq:cp_p_val_validity}
    \mathbb{P}_{\mathcal{H}_0^{\text{ex}}}(P_{n+1}\leq \alpha) \leq \alpha \qquad \forall \ \alpha \in (0, 1). 
\end{align}
Thus, observing an extreme value of $p_{n+1}$ is evidence against $\mathcal{H}_0^{\text{ex}}$ (Eq. \eqref{eq:H_0_exchangeability}), or evidence for a distribution shift.

% note that $P_{n+1}$ takes values in $\{\tfrac{1}{n+1}, \tfrac{2}{n+1}, ..., 1\}$. 
%For simplicity, we focus on $i=n$, but the following will hold for any $i\in [n]$.

% By standard arguments from the conformal prediction literature and similarly as the (standard or alternate) proofs of Lemma 1 in \citet{tibshirani2019conformal}, we are able to show that, under the null hypothesis of exchangeability, any $P_{n+1}$ is a valid $p$-value. That is, letting $E_z$ denote the event $\{Z_1, ..., Z_{n+1}\}=\{z_1, ..., z_{n+1}\}$, by exchangeability we know that each $V_{n+1}\mid E_z \sim \text{Unif}(\{v_1, ..., v_{n+1}\})$, and thus $P_{n+1}\sim \text{Unif}(\{\tfrac{1}{n+1}, \tfrac{2}{n+1}, ..., 1\})$. Therefore, under exchangeability
% \begin{align}
%     \mathbb{P}(P_{n+1}\leq \alpha) \leq \alpha \qquad \forall \ \alpha \in (0, 1),
% \end{align}
% meaning that each $P_{n+1}$ is a valid $p$-value under the null hypothesis of exchangeability.

\subsection{Standard Conformal Test Martingales: Testing Exchangeability Online via Betting}
\label{subsec:standard_CTM}

Standard conformal test martingales (CTMs) \citep{volkhonskiy2017inductive, vovk2021testing, vovk2021retrain} continually aggregate information from a sequence of conformal $p$-values (Eq. \eqref{eq:unweighted_p_vals}) to perform \textit{online} testing of the exchangeability null, $\mathcal{H}_0^{\text{ex}}$. They can be constructed on top of any standard CP method, and thus on top of any conformalized AI/ML model, and used to monitor for deviations from $\mathcal{H}_0^{\text{ex}}$.
% to continuously perform sequential hypothesis testing for distribution shifts that violate the exchangeability null $\mathcal{H}_0^{\text{ex}}$ (Eq. \eqref{eq:H_0_exchangeability}). 
We will describe CTMs from a game-theoretic ``testing-by-betting'' interpretation \citep{shafer2019game, shafer2021testing}.
%; they are also closely related to $e$-values (see \citet{ramdas2023game} \& Appendix \ref{sec:related_work}).

Under $\mathcal{H}_0^{\text{ex}}$, a stochastic process $M_0, M_1, ..., M_t, ...$ is a CTM if it is a nonnegative martingale---i.e., $M_t\geq 0$ for all $t$ and $\mathbb{E}_{\mathcal{H}_0^{\text{ex}}}[M_t \mid M_0, ..., M_{t-1}] = M_{t-1}$---constructed from a corresponding sequence of conformal $p$-values $p_1, ..., p_t, ...$ via an appropriately defined ``betting process.'' 
% (see \citet{fedorova2012plug, vovk2021testing} for betting functions in CTMs see or \citet{ramdas2023game, shafer2021testing} more broadly). 
In the game-theoretic interpretation, a bettor has initial wealth $M_0$ (usually set to $M_0=1$, for simplicity). Then, at each time $t=0, 1, ...$, she may use wealth $M_t$ to bet on the value of $p_{t+1}$ to be observed next; once $p_{t+1}$ is revealed, she receives her reward and/or pays her losses, resulting in a new total wealth $M_{t+1}$. The bettor bets \textit{against} the null hypothesis $\mathcal{H}_0^{\text{ex}}$: if $\mathcal{H}_0^{\text{ex}}$ is true, then $P_t \stackrel{iid}{\sim} \text{Unif}[0,1]$ and she cannot expect to outperform random guessing; so, if the bettor grows her wealth by a large factor $M_t/M_0>>1$, this is evidence that the bettor ``knows'' an alternative hypothesis more accurate than $\mathcal{H}_0^{\text{ex}}$
\citep{vovk2021testing}. $M_t/M_0$ can be taken as an ``anytime-valid'' evidence against $\mathcal{H}_0^{\text{ex}}$. In particular, $M_t/M_0$ is also an ``$e$-process'' for $\mathcal{H}_0^{\text{ex}}$ (see \citet{vovk2021values, ramdas2023game}).

% At any time $t$, a larger $M_t/M_0$ indicates greater evidence against the exchangeability null hypothesis $\mathcal{H}_0^{\text{ex}}$; in the betting interrpetation, $M_t/M_0$ can be interpreted as the factor by which a gambler has multiplied her initial wealth $M_0$ by continually betting \textit{against} the null hypothesis $\mathcal{H}_0^{\text{ex}}$ until time $t$. That is, if $\mathcal{H}_0^{\text{ex}}$ is true, then $P_t \stackrel{iid}{\sim} \text{Unif}[0,1]$ \citep{vovk2021testing} and one cannot expect to do better than random betting; so, a large value of $M_t/M_0$ is evidence that the better ``knows'' an alternative hypothesis that is more accurate than $\mathcal{H}_0^{\text{ex}}$.

More formally, define a \textit{betting function} as a function $h:[0, 1]\rightarrow [0, \infty]$ that integrates to one, i.e., $\int_0^1h(u)\text{d}u=1$. We follow \citet{vovk2021testing} and assume the betting function
\begin{align}
    h_{\epsilon}(p) := 1 + \epsilon (p - 0.5),
    \label{eq:betting_function}
\end{align}
where $\epsilon\in \mathbf{E}:= \{-1, 0, 1\}$ ($\mathbf{E}$ is selected somewhat arbitrarily). The intuition is that $\epsilon<0$ corresponds to betting on smaller $p$-values, $\epsilon>0$ corresponds to betting on larger $p$-values, and $\epsilon=0$ represents not betting. A CTM can be constructed by selecting a betting strategy $h_{\epsilon_t}$ at each time $t$ that may depend only on the past $p$-value observations $p_1, ..., p_{t-1}$; this paper uses the ``composite jumper martingale'' strategy described in \citet{vovk2022algorithmic}. 
% ({\color{red} Add algo to appendix if time}). 
Then, a conformal (test) martingale $M_t : [0, 1]^* \rightarrow [0, \infty], t=0, 1, ...$, is constructed by accumulating the ``wealth'' attained by the bets on the previous conformal $p$-values, $p_1, p_2, ..., p_{t-1}$. That is, for a single bet $\epsilon_i\in \mathbf{E}$ at each time $i$, the product $\prod_{i=1}^th_{\epsilon_i}(p_i)$ is a CTM that can be interpreted as the factor by which the bettor has multiplied her wealthy by time $t$. More broadly, CTMs may accommodate more flexible betting strategies (while only depending on past $p$-values), $\nu$, that may distribute bets across $\mathbf{E}$ at each time, as
\begin{align}
    M_t := \int \Big(\prod_{i=1}^th_{\epsilon_i}(p_i)\Big)\nu\big(\text{d}(\epsilon_0, \epsilon_1, ...)\big), \ \forall \ t.
    \label{eq:ctm_wealth_def}
\end{align}
For example, in our implementations we take $\nu$ to be the composite jumper Markov chain described by \citet{vovk2022algorithmic} to determine how the ``wealth'' values are spread across the betting space $\mathbf{E}$ at each timepoint. 

% \subsection{Testing Exchangeability Online via Standard CTMs}
% \label{subsec:ctm_testing_exchangeability}

A standard CTM (Eq. \eqref{eq:ctm_wealth_def}) can be used to test the exchangeability null $\mathcal{H}_0^{\text{ex}}$ (Eq. \eqref{eq:H_0_exchangeability}) online either by using $M_t/M_0$ as an anytime-valid evidence metric, or by raising an alarm when $M_t/M_0$ exceeds some user-defined threshold $c$. That is, by Ville's inequality for martingales \citep{ville1939etude}, CTMs achieve anytime-valid control over the false alarm rate (i.e., over the probability of an alarm despite $\mathcal{H}_0^{\text{ex}}$ being true),
\begin{align}
    \mathbb{P}_{\mathcal{H}_0^{\text{ex}}}\big(\exists \ t : M_t / M_0 \geq c\big) \leq 1/c,
    \label{eq:ctm_anytime_valid_guarantee}
\end{align}
which is sometimes referred to as \textit{strong validity} due to its control over \textit{ever} raising a false alarm \citep{vovk2021testing}.  

% However, the strength of Eq. \eqref{eq:ctm_anytime_valid_guarantee} can sometimes be overly conservative. Thus, to improve the efficiency (speed in detecting true shifts), standard CTMs can moreover be augmented by standard changepoint detection metrics such as CUSUM \citep{page1954continuous} and Shiryaev-Roberts \citep{roberts1966comparison, shiryaev1963optimum} while achieving a weaker form of multistage validity. For example, consider the Shiryaev-Roberts procedure, whose $k$-th stage alarm time is
% \begin{align}
%     \tau_k := \min\big\{t > \tau_{k-1} : \sum_{i=0}^{t-1}\frac{M_t}{M_i}\geq c\big\}, \ k \in \mathbb{N}.
%     \label{eq:SR_procedure}
% \end{align}
% This Shiryaev-Roberts procedure based on standard CTMs controls the average run length under $\mathcal{H}_0^{\text{ex}}$ \citep{vovk2021testing}
% \begin{align}
%     \mathbb{E}[\tau_1 \mid \mathcal{H}_0^{\text{ex}}]\geq c,
%     \label{eq:ctm_ARL_control}
% \end{align}
% where the procedure is expected to be reset after $c$ timesteps. 

\subsection{Weighted Conformal Prediction for Adapting to Distribution Shifts}
\label{subsec:weighted_cp}

Whereas Standard CP computes valid predictive confidence sets assuming exchangeable data, 
% (and Standard CTMs enable online testing of this key assumption), 
\textit{weighted} conformal prediction (WCP) 
(e.g., \citet{tibshirani2019conformal, podkopaev2021distribution, barber2023conformal, prinster2024conformal, barber2025unifying})
% (e.g., \citet{tibshirani2019conformal, podkopaev2021distribution,xu2021conformal, barber2023conformal, prinster2022jaws,prinster2023jaws, yang2024doubly, nair2023randomization, prinster2024conformal}) 
generalizes standard CP to attain valid coverage (Eq. \eqref{eq:coverage_condition}) even under various distribution shifts. WCP methods are thus an approach to \textit{adapting} to distribution shift, that is by computing CP sets on a reweighted version of the empirical calibration set's scores, which in effect can modulate the size of the prediction sets to maintain coverage. 

As we will see in the next section, WCP methods are associated with weighted-conformal $p$-values, a special-case of which was introduced in \citet{jin2023model} for standard covariate shifts, based on \citet{tibshirani2019conformal}. Concurrently with our paper, \citet{barber2025unifying} also leverage weighted-conformal $p$-values to unify various theories of conformal prediction. However, there are several key differences: regarding motivation, that work discusses connections to (one-shot) hypothesis testing as a means of framing conformal prediction, while in our paper (sequential) hypothesis testing is the primary focus; the conformal $p$-values in that paper are deterministic and attain conservative validity, whereas we incorporate randomness to achieve exact validity; lastly, whereas that paper focuses on a batch-data setting where all the data are observed simultaneously, we focus on an online monitoring setting where each $p$-value is determined only by the current and past data, and we importantly present new theory for when a sequence of WCP $p$-values can be distributed uniformly and \textit{independently}, which is important for monitoring with WCTMs.

\section{Theory and Methods: Weighted-\\
Conformal $p$-Values and Test Martingales}

Our main theory and method contribution is to introduce weighted-conformal test martingales (WCTMs), constructed from a generalized version of weighted-conformal $p$-values, which expand the scope of their standard conformal analogs. 
% Whereas standard CTMs are designed to perform online testing of exchangeability (Eq. \eqref{eq:H_0_exchangeability}), 
% WCTMs enable continual testing of a much broader range of null hypotheses on the data distribution. 
WCTMs enable more customizable and informative alarm criteria than standard CTMs, and our specific variants enable adaptation to mild shifts, while in response to severe shifts they raise alarms and enable root-cause analysis.
% suited for adaptation to mild or benign shifts, while in response to severe shifts raise alarms and enable root-cause identification.
% root-cause analysis to identify the cause of an alarm. 
%For instance, the specific WCTM algorithms we propose perform online adaptation in response to mild covariate shift, while raising alarms in response to either extreme covariate shift or concept shift. 

\label{sec:w_conformal_p_vals}

\subsection{Generalized Weighted-Conformal $p$-Values}

In this section we present a general version of weighted-conformal $p$-values.\footnote{Here, ``weighted'' refers to weights on the score distribution prior to computing a $p$-value, not to weighting the $p$-value itself.}
% ; a special case includes the weighted-conformal $p$-values for standard covariate shifts introduced by \citet{jin2023model}, based on \citet{tibshirani2019conformal}. More recently, \citet{barber2025unifying} also leverage weighted-conformal $p$-values to unify various theories of conformal prediction; however, their work primarily focuses on a batch-data setting where all 
Sequences of these generalized weighted-conformal $p$-values will lay a theoretical foundation for online testing of a broad range of null hypotheses beyond exchangeability.
% Our proposed \textit{weighted}-conformal test martingale (WCTMs) monitoring methods are constructed from weighted-conformal $p$-values, which enable testing of a broader range of null hypotheses beyond exchangeability. A special case of weighted-conformal $p$-values for covariate shifts were introduced by \citet{jin2023model}, based on  \citet{tibshirani2019conformal}; in this section, we present a more general form of weighted-conformal $p$-values, inspired by \citet{prinster2024conformal}.
% that we introduce here. Weighted-conformal $p$-values extend standard conformal $p$-values (Eq. \eqref{eq:unweighted_p_vals}) to enable testing a broader range of null hypotheses, that is, beyond exchangeability. 
To begin, observe that standard conformal $p$-values (Eq. \eqref{eq:unweighted_p_vals}) can be equivalently written as 
\begin{align*}
    p_{n+1} = \sum_{i=1}^{n+1}\frac{1}{n+1}\Big[\mathbbm{1}\{v_i > v_{n+1}\} + u_{n+1}\mathbbm{1}\{v_i = v_{n+1}\}\Big],
\end{align*}
where the purpose of this notation is to isolate where the comparison of $v_{n+1}$ to each $i$-th score is given uniform weight $\tfrac{1}{n+1}$. For some arbitrary weight vector $\tilde{w} = (\tilde{w}_1, ..., \tilde{w}_{n+1})\in [0,1]^{n+1}$ where $\sum_{i=1}^{n+1}\tilde{w}_i=1$, it is then straightforward to define \textit{weighted-conformal $p$-values} as 
\begin{align}
    p_{n+1}^{\tilde{w}} = \sum_{i=1}^{n+1}\tilde{w}_i\Big[\mathbbm{1}\{v_i > v_{n+1}\} + u_{n+1}\mathbbm{1}\{v_i = v_{n+1}\}\Big].
    \label{eq:weighted_p_vals_arbitrary_def}
\end{align}
Note that existing (split or full) weighted CP methods (e.g., \citet{tibshirani2019conformal, podkopaev2021distribution, prinster2024conformal} and certain methods in \citet{barber2023conformal}) can be also be defined by weighted-conformal $p$-values by plugging $p_{n+1}^{\tilde{w}}(X_{n+1}, y)$ in for $p_{n+1}(X_{n+1}, y)$ in Eq. \eqref{eq:cp_set_def_via_p_val}, where 
% \begin{align}\label{eq:weighted_cp_set_def_via_p_val}
%     & \widehat{C}_{n, \alpha}(X_{n+1}) := \big\{y \in \mathcal{Y} : p^{\tilde{w}}_{n+1}(X_{n+1}, y) > \alpha \big\}
% \end{align}
$\tilde{w}$ is an appropriately chosen weight vector.\footnote{Conservatively valid weighted CP methods would need to further set $u_{n+1}=1$ in Eq. \eqref{eq:weighted_p_vals_arbitrary_def} to reduce appropriately.} 

To understand the meaning of the weights $\tilde{w}$ for our hypothesis testing purpose, we draw from the general view of weighted CP described in \citet{prinster2024conformal}, which expounded analysis from \citet{tibshirani2019conformal}: For setup, let 
% Intuitively, each nonconformity score is given a weight 
% differently to accomodate the possibility of distribution shifts in the null hypothesis
% specifically by placing more ``weight'' on calibration scores that are ``closer'' to the test distribution, drawing on ideas from \citet{tibshirani2019conformal} and \citet{prinster2024conformal}. 
% where this notation is more helpful for our purposes than Eq. \eqref{eq:unweighted_p_vals} because it isolates the uniform weights $\tfrac{1}{n+1}$, hinting at how we can replace them to generalize the definition.
$E_z$ denote the event $\{Z_1, ..., Z_{n+1}\} = \{z_1, ..., z_{n+1}\}$, meaning that the empirical distribution of the datapoints has been observed, but we do \textit{not} know whether $Z_{i}=z_i$, and so on. Then, the oracle weights $\tilde{w}^o$ would be given by entries 
% \begin{align}
%     \tilde{w}_i:=
% \end{align}
\begin{align}\label{eq:general_weights}
    \tilde{w}_i^o = & \ \mathbb{P}\{V_{n+1} = v_i \mid E_z\} \\
    % = & \mathbb{P}\{Z_{n+1} = z_i \mid E_z\} \nonumber \\
    = & \ \frac{\sum_{\sigma:\sigma(n+1)=i}f(z_{\sigma(1)}, ..., z_{\sigma(n+1)})}{\sum_{\sigma}f(z_{\sigma(1)}, ..., z_{\sigma(n+1)})},\nonumber
\end{align}
where $f$ is the joint probability density function (PDF) and $\sigma$ is a permutation of  $[n+1]$. In words, the oracle weight $\tilde{w}_i^o$ we would ideally use for $\tilde{w}_i$ is the probability that the test score $V_{n+1}$ took on the value $v_i$, conditioned on the empirical distribution $E_z$. For further exposition, see \citet{prinster2024conformal}. While for arbitrary $f$, computing Eq. \eqref{eq:general_weights} is intractable due to requiring knowledge of $f$ and the factorial complexity, we next turn to how simplifying or approximating $\tilde{w}^o$ can be useful in practical hypothesis testing.

\subsection{Expanded Hypothesis Testing with Weighted-Conformal $p$-Values}
\label{subsec:w_cpvals_hypothesis_testing}

We now look at how the weighted-conformal $p$-values introduced in Eq. \eqref{eq:weighted_p_vals_arbitrary_def} can be used to sequentially test a variety of nonparametric null hypotheses beyond exchangeability. Let us begin by using $\mathcal{H}_0(\hat{f})$ to denote any set of assumptions on the true $f$ or some approximations on $f$ that we wish to assume are true in our null hypothesis (e.g., which may enable testing for evidence of inaccurate estimation). Hereon let $\tilde{w}:= \tilde{w}(\mathcal{H}_0(\hat{f}))$ denote the weight vector computed with this approximation. For example, if we assume exchangeability as our null hypothesis ($\mathcal{H}_0(\hat{f})=\mathcal{H}_0^{\text{(ex)}}$), then $f(z_{\sigma(1)}, ..., z_{\sigma(n+1)})=f(z_{1}, ..., z_{n+1})$ and Eq. \eqref{eq:general_weights} reduces to $\tilde{w}_i=\frac{1}{n+1}$, which recovers standard conformal $p$-values.
More generally, $\mathcal{H}_0(\hat{f})$ may denote some other set of invariance assumptions on $f$ or density-ratio estimates assumed to be accurate (see Sec. \ref{subsec:cov_concept_shift} and App. \ref{app:deriving_wctm_algos} for a worked example for our main testing objective). Then, $P^{\tilde{w}}_{n+1}$ is a valid and exact $p$-value for the null hypothesis $\mathcal{H}_0(\hat{f})$:
\begin{align}\label{eq:H0(hat(f))_one_shot}
    \mathbb{P}_{\mathcal{H}_0(\hat{f})}\big\{P_{n+1}^{\tilde{w}} \leq \alpha \big\}= \alpha.
\end{align}
% the weights $\tilde{w}$ yield a weighted empirical distribution
% \begin{align}\label{eq:weighted_conditional_score_dist}
%     V_{n+1}\mid E_z \sim \sum_{i=1}^{n+1}\tilde{w}_i\delta_{v_i},
% \end{align}
% from which it follows 
Moreover, when a sequence of weighted-conformal $p$-values is computed online, then they are not only distributed uniformly, but also \textit{independently}, at each timestep. 
% from which it follows that $P^{\tilde{w}}_{n+1}$ is IID uniformly distributed on [0, 1] and is thus a valid $p$-value for testing $\mathcal{H}_0(\hat{f})$. 
% With oracle weights $\tilde{w}^o$ (Eq. \eqref{eq:general_weights}), the null hypothesis that $P^{\tilde{w}^o}_{n+1}$ corresponds to would be trivial (i.e., that the data have joint PDF $f$, which holds by definition). In practice, however, the validity of $P^{\tilde{w}^o}_{n+1}$ is made tractable and useful for customizable hypothesis testing when $\tilde{w}^o$ is approximated by some $\tilde{w}(\hat{f})$---
% simplified (e.g., via invariance assumptions on $f$) or approximated to some $\hat{f}$, which
% % or simplified using assumptions on $f$, which then
% % the oracle weights are usually not computable and $f$ is not known, but the fact of Eq. \eqref{eq:weighted_p_vals_pf} implies that using $\tilde{w}$ as an approximation of  $\tilde{w}^o$ based on some assumptions 
% permits a $p$-value for testing the null hypothesis $\mathcal{H}_0(\hat{f})$, that the data are distributed according to $\hat{f}$. 
We next state this formally.
% whose validity is premised on the null hypothesis of those simplifying assumptions and approximations. 
% We provide a worked example next. 
\begin{theorem}(Independence and exact validity, for $\mathcal{H}_0(\hat{f})$, of online WCP $p$-values.)
\textit{For any $T\in \mathbb{N}$, let $\mathcal{H}_0(\hat{f})$ denote the set of all assumptions on the joint PDF, $f$, including that any appoximations used to estimate $\tilde{w}^o$ (Eq. \eqref{eq:general_weights}) are eactly accurate. Let  $\tilde{w}:=\tilde{w}(\mathcal{H}_0(\hat{f}))$ be the corresponding estimated weights and $\alpha_1, ..., \alpha_T\in(0, 1)^T$ some user-defined significance levels.
If $\mathcal{H}_0(\hat{f})$ holds for the true $f$, then $P^{\tilde{w}}_1, P^{\tilde{w}}_2, ...$ are IID uniform on [0,1]: 
% and are thus independent and exactly valid $p$-values for testing $\mathcal{H}_0(\hat{f})$
}
% \begin{align}
%     \mathbb{P}_{\mathcal{H}_0(\hat{f})}\big(P^{\tilde{w}}_{n+1}\leq \alpha \big) \leq \alpha \qquad \forall \ \alpha \in (0, 1).
% \end{align}
\begin{align}\label{eq:H0(hat(f))_iid_bernoulli}
    \mathbb{P}_{\mathcal{H}_0(\hat{f})}\big\{P_1^{\tilde{w}} \leq \alpha_1, ..., P_T^{\tilde{w}} \leq \alpha_T \big\}= \alpha_1 \cdots \alpha_T.
\end{align}
    \label{thm:general_weighted_p_validity}
\end{theorem}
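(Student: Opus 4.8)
The target identity \eqref{eq:H0(hat(f))_iid_bernoulli} asserts precisely that $P^{\tilde w}_1,\dots,P^{\tilde w}_T$ are mutually independent and each marginally $\mathrm{Unif}[0,1]$. Marginal uniformity at every time is exactly the one-shot statement \eqref{eq:H0(hat(f))_one_shot}, so the real content is \emph{independence}. My plan is to reduce independence to a single-step conditional-uniformity claim and then telescope: it suffices to produce, for each $t$, a $\sigma$-algebra $\mathcal E_t$ such that (i) $P^{\tilde w}_{t+1},\dots,P^{\tilde w}_T$ are $\mathcal E_t$-measurable, and (ii) $\mathbb P_{\mathcal H_0(\hat f)}\big(P^{\tilde w}_t \le \alpha \mid \mathcal E_t\big)=\alpha$ for all $\alpha\in(0,1)$. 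Granting these, (ii) says $P^{\tilde w}_t$ is $\mathrm{Unif}[0,1]$ and independent of $\mathcal E_t$, hence by (i) independent of $(P^{\tilde w}_{t+1},\dots,P^{\tilde w}_T)$; applying this for $t=T-1,T-2,\dots,1$ and multiplying conditional probabilities gives $\mathbb P(P^{\tilde w}_1\le\alpha_1,\dots,P^{\tilde w}_T\le\alpha_T)=\prod_{t=1}^T\alpha_t$, which is \eqref{eq:H0(hat(f))_iid_bernoulli}.

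The correct choice of $\mathcal E_t$ is a ``reverse-filtration'' object, mirroring the classical argument that online \emph{standard} conformal $p$-values are IID uniform under exchangeability \citep{vovk2021testing}: let $\mathcal E_t$ be generated by the \emph{unordered} multiset of the data points available through time $t$, together with the later (ordered) points $Z_{n+t+1},\dots,Z_{n+T}$ and the later tie-breaking variables $U_{n+t+1},\dots,U_{n+T}$. Then (i) holds because, for $s>t$, the $s$-th weighted-conformal $p$-value \eqref{eq:weighted_p_vals_arbitrary_def} is a function only of the calibration multiset at time $s$, the test point $Z_{n+s}$, the tie-break $U_{n+s}$, and the weight vector $\tilde w$ (itself a function of that same unordered data) — and each of these is $\mathcal E_t$-measurable, since it is built from the time-$t$ multiset augmented by points $Z_{n+t+1},\dots$. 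Meanwhile the tie-break $U_{n+t}$ used at time $t$ is, by construction, independent of $\mathcal E_t$ and $\mathrm{Unif}[0,1]$, and $P^{\tilde w}_t$ is deliberately \emph{not} $\mathcal E_t$-measurable.

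For (ii), I would condition one further level: given $\mathcal E_t$, under $\mathcal H_0(\hat f)$ the index $K$ of the point that is actually the test point $Z_{n+t}$ has conditional law $\mathbb P(K=i\mid\mathcal E_t)=\tilde w_i$. This is the sequential analogue of the oracle-weight identity \eqref{eq:general_weights}, i.e.\ the defining property of $\mathcal H_0(\hat f)$; it persists under conditioning on the later points $Z_{n+t+1},\dots,Z_{n+T}$ because $\tilde w$ is defined without reference to those points. With the scores $v_i$ fixed by the multiset, the $p$-value is $\sum_i\tilde w_i\big[\mathbbm 1\{v_i>v_K\}+U_{n+t}\mathbbm 1\{v_i=v_K\}\big]$, which is the value of the $\tilde w$-weighted empirical CDF of the $v_i$'s at the random atom $v_K$, smoothed by $U_{n+t}$; since the distinct score values are assigned intervals tiling $[0,1]$ with lengths equal to their $\tilde w$-masses, and $v_K$ lands in each such value with probability equal to that mass, a randomized probability-integral-transform argument gives that this quantity is $\mathrm{Unif}[0,1]$ given $\mathcal E_t$, not depending on the conditioning. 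Combined with the telescoping step and marginal uniformity \eqref{eq:H0(hat(f))_one_shot}, this yields the theorem.

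The main obstacle is step (ii): making precise, in the sequential setting, that conditioning on the time-$t$ multiset \emph{and} on all subsequently observed data still leaves the within-multiset position of the test point distributed according to $\tilde w$. This forces two things: (a) choosing $\mathcal E_t$ so that $P^{\tilde w}_t$ stays non-measurable — the assignment of which points are calibration versus test must remain ``hidden'', which is exactly why the reverse/multiset filtration works while a naive forward filtration on the ordered past does not (that conditioning pins down $Z_{n+t}$ and destroys the required randomness); and (b) verifying that the invariances packaged into $\mathcal H_0(\hat f)$, together with $\tilde w$ being a function of only the current multiset, force the oracle identity \eqref{eq:general_weights} to survive the extra conditioning — this is the point that must be checked case-by-case, as in the worked covariate/concept-shift example of Appendix~\ref{app:deriving_wctm_algos}. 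I would also need routine care with ties (handled by the uniform $U$'s as in \eqref{eq:weighted_p_vals_arbitrary_def}) and with the two calibration conventions ($Z_{1:n}$ kept fixed versus growing to $Z_{1:(n+t-1)}$); the growing-calibration case is the cleanest for this filtration, and the fixed case follows after conditioning on the initial dataset throughout.
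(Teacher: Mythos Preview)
Your proposal is correct and follows essentially the same approach as the paper: both arguments reverse time by conditioning on the multiset of observations, invoke the oracle-weight identity to obtain conditional uniformity of $P_t^{\tilde w}$ via a randomized probability-integral transform (your step~(ii) is the paper's Lemma~1), and then telescope to get independence. Your explicit reverse filtration $\mathcal E_t$ (augmented with the later ordered points and tie-breaks) is a repackaging of the paper's induction on the bag $E_z^{(t)}$, and the technical obstacle you flag in (b)---that the oracle identity must survive the extra conditioning---is exactly the step the paper handles when it replaces conditioning on $(E_z^{(t)},\,Z_t=z_{\sigma(t)})$ by conditioning on $E_z^{(t-1)}$.
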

\vspace{-0.75cm}
\textbf{Proof Sketch:} We defer a full proof to Appendix \ref{sec:proof}. The proof builds on those for standard CTMs in \citet{vovk2002line} and \citet{vovk2003testing}, which leverage the idea of ``reversing time,'' while drawing on ideas from \citet{tibshirani2019conformal} and \citet{prinster2024conformal}. We imagine that the sequence of data observations $(z_1, ..., z_T)$ is generated in two steps: first, the unordered bag of data observations, $\{z_1, ..., z_T\}$, is generated from some probability distribution; then---here generalizing \citet{vovk2002line} and \citet{vovk2003testing} by weighting permutations according to their likelihood---from all possible permutations $\sigma$ of the values $\{z_1, ..., z_T\}$, each possible sequence $(z_{\sigma(1)}, ..., z_{\sigma(T)})$ is chosen with probability proportional to $f(z_{\sigma(1)}, ..., z_{\sigma(T)})$, where $f:=f_Z$ is the probability-density function\footnote{More generally, the Radon–Nikodym derivative.} for the distribution $F_Z$. Roughly (ignoring borderline effects), the second step ensures that, conditionally on knowing $\{z_1, ..., z_T\}$ (and therefore unconditionally), that $P_{T}^{\tilde{w}^o}$ has a standard uniform distribution; when $Z_T=z_{\sigma(T)}$ is observed, this settles the value of $P_{T}^{\tilde{w}^o}=p_{\sigma(T)}^{\tilde{w}^o}$, and conditionally on knowing $\{z_1, ..., z_T\}$ and $Z_T=z_{\sigma(T)}$ (and therefore, after relabeling indices, on knowing $\{z_1, ..., z_{T-1}\}$), that $P_{T-1}^{\tilde{w}^o}$ also has a standard uniform distribution, and so on.

\subsection{Main Practical Testing Objective: Testing for \{Concept Shift or Unanticipated Covariate Shift\}}
\label{subsec:cov_concept_shift}

We now provide a worked example for how weighted-conformal $p$-values can be used to test a common assumption in the robust ML literature: namely, the \textit{covariate shift} \citep{shimodaira2000improving, sugiyama2007covariate} assumption, where the marginal input distribution $F_X$ may shift to some other distribution $G_X$ at test time, but the conditional label distribution $F_{Y \mid X}$ is assumed to remain invariant. 
% More generally, we consider an online setting where shifts may occur at an unknown time (rather than at deployment).
In the standard covariate shift setting considered in \citet{tibshirani2019conformal}, the oracle weights $\tilde{w}_i^o$ in Eq. \eqref{eq:general_weights} are proportional to density-ratio weights $g_X(X_i)/f_X(X_i)$. A common goal is to adapt to covariate shift by learning an approximate density-ratio function $\widehat{w}(x)\approx g_X(x)/f_X(x)$ for reweighting the data (e.g., \citet{sugiyama2007covariate, tibshirani2019conformal, yang2024doubly, zhang2024adapting}). 

However, whereas robustness results often rely on the density-ratio estimator being reasonably accurate, i.e., $\widehat{w}(x)\approx g_X(x)/f_X(x)$, there is little work on \textit{testing} for whether this reliability criterion is achieved in practice.
% That is, by Theorem \ref{thm:general_weighted_p_validity}, weighted-conformal $p$-values (Eq. \eqref{eq:weighted_p_vals_arbitrary_def}) with weights proportional to the   approximated density-ratio function $\tilde{w}_i\propto \widehat{w}(X_i)$ permit testing the following 
Let $\mathbf{G_{X}^{(\widehat{w})}}$ denote the set of distributions for $X$ with density-ratio function $\widehat{w}(x)$ with respect to the true $F_X\in \mathbf{F_X}$. 
% \begin{align*}
%     \mathbf{G_{X}^{(\widehat{w})}} := \Big\{G_{X} \ : \ g_{X}(x)/f_{X}(x) = \widehat{w}(x)\Big\}.
% \end{align*}
Then, we aim to test the following joint 
null hypothesis:
\begin{align}\label{eq:cs_null}
    \mathbf{\mathcal{H}_0^{\text{(cs)}}}  :=\begin{cases}
        (X_{i}, Y_{i}) \qquad \ \stackrel{iid}{\sim} \mathbf{F_{Y\mid X}} \times \mathbf{F_{X}} , \  i \in [n + t_{\text{ad}}-1] \\
        (X_{n+t}, Y_{n+t}) \stackrel{iid}{\sim} \mathbf{F_{Y\mid X}} \times \mathbf{G_{X}^{(\widehat{w})}}, \ t\geq t_{\text{ad}},
        % \qquad \qquad \qquad \quad F_{Y \mid X} \in \mathbf{F^{\text{ex}}_{Y \mid X}}, &
    \end{cases}
\end{align}
% \begin{align}\label{eq:cs_null}
%     \mathcal{H}_0^{\text{(cs)}}  :=\begin{cases}
%         (X_{i}, Y_{i}) \sim F_{Y \mid X}\times F_{X}=F_Z\in \mathbf{F_{Z}^{\text{ex}}} , \ & i \in [n] \\
%         (X_{n+t}, Y_{n+t}) \sim F_{Y \mid X}\times \widehat{F}_{X}^T, & t>0 \\
%         \qquad \qquad \qquad \quad F_{Y \mid X} \in \mathbf{F^{\text{ex}}_{Y \mid X}}&
%     \end{cases}
% \end{align}
which can be roughly read as assuming that $Y\mid X$ is invariant and that $\widehat{w}(x)=g_X(x)/f_X(x)$. 
Thus, observing an extreme value of $p^{\tilde{w}}_{n+1}$ would convey evidence against $\mathcal{H}_0^{\text{(cs)}}$, meaning that there has been a concept shift in $Y\mid X$ or that the density-ratio adaptation $\widehat{w}(x)$ is inaccurate. 

\subsection{Weighted-Conformal Test Martingales for Continual Monitoring}
\label{subsec:wctms}

With Theorem \ref{thm:general_weighted_p_validity} at hand, weighted-conformal test martingales (WCTMs) can be constructed from a sequence of weighted-conformal $p$-values to enable continual, anytime-valid monitoring of a customizeable null hypothesis $\mathcal{H}_0(\hat{f})$. 
Just as in the construction of standard CTMs (Section \ref{subsec:standard_CTM}), WCTMs require a betting function $h$ (we assume the betting function in Eq. \eqref{eq:betting_function}) and a strategy $\nu$ for placing bets $\epsilon_t$ at each time $t$ that can only depend on past $p$-value observations (we use the composite jumper strategy from \citet{vovk2022algorithmic}). Then, a WCTM is constructed by feeding a sequence of weighted-conformal $p$-values $p_1^{\tilde{w}}, ..., p_t^{\tilde{w}}, ...$ into Eq. \eqref{eq:ctm_wealth_def} (i.e., by setting $p_i:=p_i^{\tilde{w}}$ in Eq. \eqref{eq:ctm_wealth_def}).
% \begin{align}
%     \tilde{M}_t := \int \Big(\prod_{i=1}^tg_{\epsilon_i}(p_i^{\tilde{w}})\Big)\nu\big(\text{d}(\epsilon_0, \epsilon_1, ...)\big), \ \forall \ t.
%     \label{eq:wctm_wealth_def}
% \end{align}
Due to the $P_t^{\tilde{w}}$ being IID uniform on [0,1] under $\mathcal{H}_0(\hat{f})$ (Theorem \ref{thm:general_weighted_p_validity}), the stochastic process $\tilde{M}_0, \tilde{M}_1, ..., \tilde{M}_t$ is a nonnegative test martingale for $\mathcal{H}_0(\hat{f})$, and by Ville's inequality \citep{ville1939etude} it achieves the anytime-valid control over false alarms in the following proposition (proof in Appendix \ref{sec:proof}).
\begin{proposition}(WCTM anytime-valid false-alarm control)
\textit{Let $\tilde{M}_0, \tilde{M}_1, ..., \tilde{M}_t, ...$ be a WCTM constructed from the sequence of weighted-conformal $p$-values $p_1^{\tilde{w}}, p_2^{\tilde{w}}, ..., p_t^{\tilde{w}}, ...$. Then, assuming $\mathcal{H}_0(\hat{f})$, }
\begin{align}\label{eq:wctm_ville}
    \mathbb{P}_{\mathcal{H}_0(\hat{f})}\big(\exists \ t : \tilde{M}_t / \tilde{M}_0 \geq c \big) \leq 1/c.
\end{align}
\label{thm:wctm_anytime_valid_guarantee}
\end{proposition}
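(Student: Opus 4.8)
\textbf{Proof plan for Proposition~\ref{thm:wctm_anytime_valid_guarantee}.} The plan is the standard ``nonnegative martingale $+$ Ville's inequality'' argument, with Theorem~\ref{thm:general_weighted_p_validity} supplying the one probabilistic fact that matters. First I would recall that $\tilde{M}_t$ is obtained by setting $p_i := p_i^{\tilde{w}}$ in Eq.~\eqref{eq:ctm_wealth_def}, i.e.\ $\tilde{M}_t = \int \big(\prod_{i=1}^{t} h_{\epsilon_i}(p_i^{\tilde{w}})\big)\, \nu(\mathrm{d}(\epsilon_0,\epsilon_1,\dots))$, and take the natural filtration $\mathcal{F}_t := \sigma(P_1^{\tilde{w}},\dots,P_t^{\tilde{w}})$. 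By Theorem~\ref{thm:general_weighted_p_validity}, under $\mathcal{H}_0(\hat{f})$ the sequence $P_1^{\tilde{w}}, P_2^{\tilde{w}}, \dots$ is IID uniform on $[0,1]$; in particular each $P_t^{\tilde{w}}$ is independent of $\mathcal{F}_{t-1}$. The goal is then to show that $(\tilde{M}_t/\tilde{M}_0)_{t\ge 0}$ is a nonnegative martingale started at $1$ and to invoke Ville's inequality \citep{ville1939etude}.

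Next I would check the three ingredients. \emph{Nonnegativity and integrability}: for $\epsilon\in\mathbf{E}=\{-1,0,1\}$ the betting function of Eq.~\eqref{eq:betting_function} satisfies $h_\epsilon(p)=1+\epsilon(p-0.5)\in[\tfrac12,\tfrac32]$, so each path-product lies in $[\,2^{-t},\,(3/2)^t\,]$ and $\tilde{M}_t$, a $\nu$-mixture of such products ($\nu$ a probability measure), obeys $0\le\tilde{M}_t<\infty$ with $\tilde{M}_0=\int 1\,\nu(\mathrm{d}(\epsilon_0,\dots))=1$, a fixed positive constant. \emph{Adaptedness}: $\tilde{M}_t$ depends only on $p_1^{\tilde{w}},\dots,p_t^{\tilde{w}}$, hence is $\mathcal{F}_t$-measurable. \emph{Martingale step}: I would compute $\mathbb{E}_{\mathcal{H}_0(\hat{f})}[\tilde{M}_t\mid\mathcal{F}_{t-1}]$ by exchanging the conditional expectation with the $\nu$-integral (Tonelli, nonnegative integrand), then factoring the $\mathcal{F}_{t-1}$-measurable term $\prod_{i=1}^{t-1}h_{\epsilon_i}(P_i^{\tilde{w}})$ out of the inner conditional expectation. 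What is left is $\mathbb{E}_{\mathcal{H}_0(\hat{f})}[h_{\epsilon_t}(P_t^{\tilde{w}})\mid\mathcal{F}_{t-1}]$, which by the independence and uniformity from Theorem~\ref{thm:general_weighted_p_validity} together with the defining property $\int_0^1 h(u)\,\mathrm{d}u=1$ of betting functions equals $\int_0^1 h_{\epsilon_t}(u)\,\mathrm{d}u = 1$ for every fixed $\epsilon_t\in\mathbf{E}$ (and likewise if the bet is adaptive, since then $\epsilon_t$ is constant given $\mathcal{F}_{t-1}$). Reassembling yields $\mathbb{E}_{\mathcal{H}_0(\hat{f})}[\tilde{M}_t\mid\mathcal{F}_{t-1}] = \int\prod_{i=1}^{t-1}h_{\epsilon_i}(P_i^{\tilde{w}})\,\nu(\mathrm{d}(\epsilon_0,\dots)) = \tilde{M}_{t-1}$.

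Finally I would conclude: $X_t := \tilde{M}_t/\tilde{M}_0$ is a nonnegative martingale with $X_0=1$, so Ville's inequality gives $\mathbb{P}_{\mathcal{H}_0(\hat{f})}(\sup_{t\ge0}X_t\ge c)\le 1/c$, and since $\{\exists\, t : \tilde{M}_t/\tilde{M}_0\ge c\}\subseteq\{\sup_{t\ge0}X_t\ge c\}$ this is exactly Eq.~\eqref{eq:wctm_ville}.

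As for where the difficulty lies: there is essentially no hard step once Theorem~\ref{thm:general_weighted_p_validity} is available --- the proposition is a corollary. The one conceptually load-bearing point is that we genuinely need \emph{independence} of the $P_t^{\tilde{w}}$ across $t$, not merely marginal uniformity at each $t$: without it, $\mathbb{E}[h_{\epsilon_t}(P_t^{\tilde{w}})\mid\mathcal{F}_{t-1}]$ need not equal $1$ and the telescoping breaks, so this proof is precisely the place that cashes in the ``IID'' (rather than just ``valid'') conclusion of Theorem~\ref{thm:general_weighted_p_validity}. The only routine technicalities are the Tonelli interchange and tracking that $\tilde{M}_0$ is deterministic; the ties/borderline issues in forming the $p$-values are already handled by the external randomization $U_t$ and folded into the exact-uniformity statement we are citing.
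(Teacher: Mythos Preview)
Your proposal is correct and follows essentially the same approach as the paper: invoke Theorem~\ref{thm:general_weighted_p_validity} to obtain that the $P_t^{\tilde{w}}$ are IID $\mathrm{Unif}[0,1]$ under $\mathcal{H}_0(\hat f)$, conclude that $(\tilde M_t)$ is a nonnegative (betting) martingale, and apply Ville's inequality. The paper's own proof is terser---it simply cites \citet{vovk2021testing} for the fact that the construction in Eq.~\eqref{eq:ctm_wealth_def} applied to an IID uniform sequence yields a betting martingale and then invokes Ville---whereas you spell out the nonnegativity, adaptedness, and one-step martingale verification explicitly; your remark that \emph{independence} (not merely marginal uniformity) is the load-bearing input from Theorem~\ref{thm:general_weighted_p_validity} is exactly right.
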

\vspace{-0.5cm}

\subsection{Scheduled or Multistage WCTM-Based Monitoring}
The anytime-valid guarantee in Eq. \eqref{eq:wctm_ville} controls the probability of raise a false alarm at \textit{any} time in an infinite sequence of observations, but this can be overly strong, especially when one has a set time horizon in mind for monitoring. For such ``scheduled'' or ``multistage'' monitoring, we improve the efficiency (speed in detecting true shifts) by following \citet{vovk2021retrain, vovk2021testing} and augmenting WCTMs with standard changepoint detection metrics such as CUSUM \citep{page1954continuous} and Shiryaev-Roberts \citep{roberts1966comparison, shiryaev1963optimum} while achieving a weaker form of validity. Consider the Shiryaev-Roberts procedure applied to WCTMs $\tilde{M}_t$, whose $k$-th stage alarm time is
\begin{align}
    \tau_k := \min\big\{t > \tau_{k-1} : \sum_{i=\
    \tau_{k-1}}^{t-1}\frac{\tilde{M}_t}{\tilde{M}_i}\geq c\big\}, \ k \in \mathbb{N}.
    \label{eq:SR_procedure}
\end{align}
This Shiryaev-Roberts procedure based on standard CTMs controls the average run length (ARL) under $\mathcal{H}_0(\hat{f})$, where the procedure is expected to be reset after $c$ timesteps. 
\begin{proposition}(WCTM-based Shiryaev-Roberts ARL control) Let $\tau_1$ denote the Shiryaev-Roberts stopping time (Eq. \eqref{eq:SR_procedure}) based on a WCTM
$\tilde{M}_0, \tilde{M}_1, ..., \tilde{M}_t, ...$ constructed for the null hypothesis $\mathcal{H}_0(\hat{f})$. Then, assuming $\mathcal{H}_0(\hat{f})$, 
    \begin{align}
    \mathbb{E}_{\mathcal{H}_0(\hat{f})}\big[\tau_1\big]\geq c.
    % \label{eq:ctm_ARL_control}
    \end{align}
    \label{thm:SR-WCTM_ARL_control}
\end{proposition}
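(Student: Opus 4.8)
The plan is to recognize this as the weighted analogue of the standard Shiryaev--Roberts average-run-length bound for conformal test martingales \citep{vovk2021retrain}: once we know that, under $\mathcal{H}_0(\hat f)$, the process $\tilde M_0,\tilde M_1,\dots$ is a nonnegative martingale (which is exactly Proposition~\ref{thm:wctm_anytime_valid_guarantee}, itself a consequence of Theorem~\ref{thm:general_weighted_p_validity} and the construction in Section~\ref{subsec:wctms}), the argument does not touch the conformal machinery at all. Set $\mathcal F_t := \sigma(P_1^{\tilde{w}},\dots,P_t^{\tilde{w}})$, let $R_0 := 0$, and for $t\geq 1$ let $R_t := \sum_{i=0}^{t-1}\tilde M_t/\tilde M_i$, so that $\tau_1 = \inf\{t\geq 1 : R_t\geq c\}$ in the notation of Eq.~\eqref{eq:SR_procedure}. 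The first step is the one-step recursion $R_{t+1} = (\tilde M_{t+1}/\tilde M_t)(R_t + 1)$, obtained by pulling $\tilde M_{t+1}$ out of the sum defining $R_{t+1}$ and splitting off the new $i=t$ term; this uses $\tilde M_t > 0$ for all $t$, which holds because the betting function $h_\epsilon$ of Eq.~\eqref{eq:betting_function} satisfies $h_\epsilon(p)\in[\tfrac12,\tfrac32]$ for every $p\in[0,1]$ and $\epsilon\in\mathbf{E}$, so every wealth path appearing in Eq.~\eqref{eq:ctm_wealth_def} is bounded strictly between $0$ and $\infty$ for each fixed $t$.

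Second, I would combine the recursion with the martingale property. Since $\tilde M$ is a nonnegative martingale under $\mathcal{H}_0(\hat f)$ and $\tilde M_t$ is a positive $\mathcal F_t$-measurable random variable, $\mathbb{E}_{\mathcal{H}_0(\hat f)}[\tilde M_{t+1}/\tilde M_t \mid \mathcal F_t] = 1$; as $R_t$ is $\mathcal F_t$-measurable, the recursion yields $\mathbb{E}_{\mathcal{H}_0(\hat f)}[R_{t+1}\mid\mathcal F_t] = R_t + 1$. Hence $R_t - t$ is an $(\mathcal F_t)$-martingale started at $R_0 - 0 = 0$, and in particular $\mathbb{E}_{\mathcal{H}_0(\hat f)}[R_t] = t$ for every $t$; all integrability here is automatic since, for fixed $t$, $\tilde M_t$ and therefore $R_t$ are bounded by a ($t$-dependent) constant.

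Third, I would apply optional stopping to the zero-mean martingale $R_t - t$ at the bounded stopping time $\tau_1\wedge N$, for an arbitrary $N\in\mathbb{N}$, to get $\mathbb{E}_{\mathcal{H}_0(\hat f)}[\tau_1\wedge N] = \mathbb{E}_{\mathcal{H}_0(\hat f)}[R_{\tau_1\wedge N}]$. On $\{\tau_1\leq N\}$ we have $R_{\tau_1\wedge N} = R_{\tau_1}\geq c$ by the definition of $\tau_1$, while on $\{\tau_1 > N\}$ we have $R_{\tau_1\wedge N} = R_N\geq 0$ (nonnegativity again from $\tilde M_t > 0$); hence $\mathbb{E}_{\mathcal{H}_0(\hat f)}[R_{\tau_1\wedge N}]\geq c\,\mathbb{P}_{\mathcal{H}_0(\hat f)}(\tau_1\leq N)$. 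Letting $N\to\infty$, monotone convergence gives $\mathbb{E}_{\mathcal{H}_0(\hat f)}[\tau_1]\geq c\,\mathbb{P}_{\mathcal{H}_0(\hat f)}(\tau_1<\infty)$; and since $\mathbb{E}_{\mathcal{H}_0(\hat f)}[\tau_1] = \infty$ whenever $\mathbb{P}_{\mathcal{H}_0(\hat f)}(\tau_1 = \infty) > 0$, in all cases $\mathbb{E}_{\mathcal{H}_0(\hat f)}[\tau_1]\geq c$, as claimed.

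The one genuine subtlety --- and the step I would be most careful with --- is making the optional-stopping argument rigorous without assuming \emph{a priori} that $\tau_1$ is integrable or even finite; the truncation at $N$ together with monotone convergence is precisely what circumvents this. Beyond that, the only things to double-check are the strict positivity of $\tilde M_t$ for the composite-jumper construction actually used (addressed above via the range of $h_\epsilon$) and the harmless edge case $R_0 = 0$ (empty sum), which is consistent with the recursion; the recursion itself, the conditional identity $\mathbb{E}_{\mathcal{H}_0(\hat f)}[\tilde M_{t+1}/\tilde M_t\mid\mathcal F_t] = 1$, and the fixed-$t$ integrability of $R_t$ are all routine.
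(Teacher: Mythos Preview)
Your proof is correct and is precisely the standard Shiryaev--Roberts ARL argument that underlies the result the paper invokes. The paper's own proof (Appendix~\ref{sec:proof}) does not spell this out: it simply observes that Theorem~\ref{thm:general_weighted_p_validity} makes $\tilde M_t$ a betting martingale under $\mathcal{H}_0(\hat f)$ and then cites Proposition~4.2 of \citet{vovk2021testing} as a black box. What you have written is essentially the content of that cited proposition, so the approaches coincide; your version is just self-contained, and your care about strict positivity of $\tilde M_t$ (via the range of $h_\epsilon$) and the truncation-plus-monotone-convergence handling of $\tau_1$ fills in details the paper leaves to the reference.
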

\vspace{-0.8cm}

\subsection{WCTM Implementation in WATCH: Dynamic Online Adaptation and Root-Cause Analysis}
\label{subsec:wctm_practical_implementation}

All our implementations and experiments in this paper focus on WCTMs that continuously adapt to mild covariate shifts (e.g., Fig \ref{fig1:synthetic_examples}a) while raising alarms in response to either extreme covariate shifts (e.g., Fig \ref{fig1:synthetic_examples}b) or concept shifts in $Y|X$ (e.g., Fig \ref{fig1:synthetic_examples}c). These goals correspond to monitoring the $\mathcal{H}_0^{(cs)}$ null hypothesis given in Eq. \eqref{eq:cs_null}. Our adaptation procedure performs online density ratio estimation with an online probabilistic classifier (e.g., a multilayer perceptron or logistic regression model similar to in \citet{zhang2024adapting}) to update the estimation of $\widehat{w}^{(t)}(x)$ at each timestep $t$.

\textbf{Online Adaptation with Dynamic Initialization}~ For the initial adaptation stage, a key question is \textit{when} to begin the adaptation procedure, or when to begin estimating $\widehat{w}^{(t)}(x)$. Our methods make this determination automatically and dynamically by running a secondary standard CTM that is restricted to only monitoring for changepoints in the marginal $X$ distribution via a nearest-neighbor nonconformity score \citep{vovk2021retrain}. At deployment, the main WCTM method is initially a standard CTM (with uniform weights); then, once the secondary  ``$X$-CTM'' method exceeds a pre-determined adaptation threshold (see gray paths in Fig \ref{fig1:synthetic_examples}) at some time $t_{\text{ad}}$, this triggers the adaptation of the main WCTM monitoring method (blue paths in Fig \ref{fig1:synthetic_examples}) and corresponding weighted CP intervals. For efficiency purposes, after the adaptation time $t_{\text{ad}}$, the CP calibration set $[n+t_{\text{ad}}]$ is treated as fixed (no longer adding test points to it online), meaning the weighted-conformal $p$-values are computed summing only over $[n+t_{\text{ad}}-1]\cup \{n+t\}$,
\begin{align}
    p_{n+t}^{\tilde{w}} = \sum_{i\in[n+t_{\text{ad}}-1]\cup \{n+t\}}\tilde{w}_i^{(t)}\Bigg[\begin{matrix}\mathbbm{1}\{v_i > v_{n+t}\} \quad \quad \\
     + u_{n+t}\mathbbm{1}\{v_i = v_{n+t}\}\end{matrix}\Bigg],
    \label{eq:weighted_p_vals_fixed_cal}
\end{align}
\vspace{-0.3cm}
where here $\tilde{w}_i^{(t)}\propto \widehat{w}^{(t}(X_i)$. See Appendix \ref{app:deriving_wctm_algos} for details.

\begin{figure*}[!htb]
\centering
    \scriptsize{\textbf{Medical Expenditure Panel Survey (MEPS)}}
    \\
    \begin{subfigure}{}
        \includegraphics[width=0.18\textwidth]{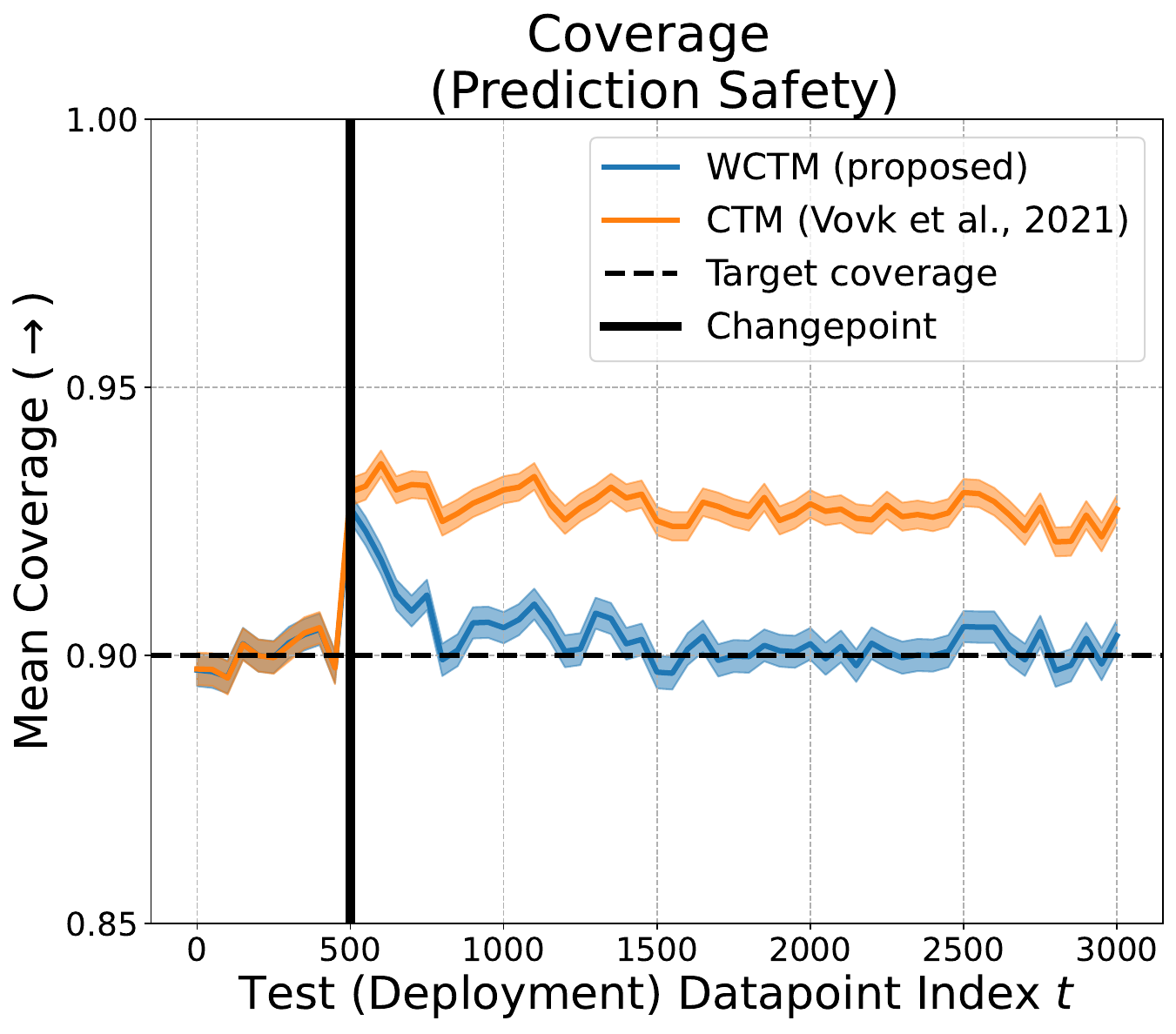}
    \end{subfigure}
    \hfill
    \begin{subfigure}{}
        \includegraphics[width=0.18\textwidth]{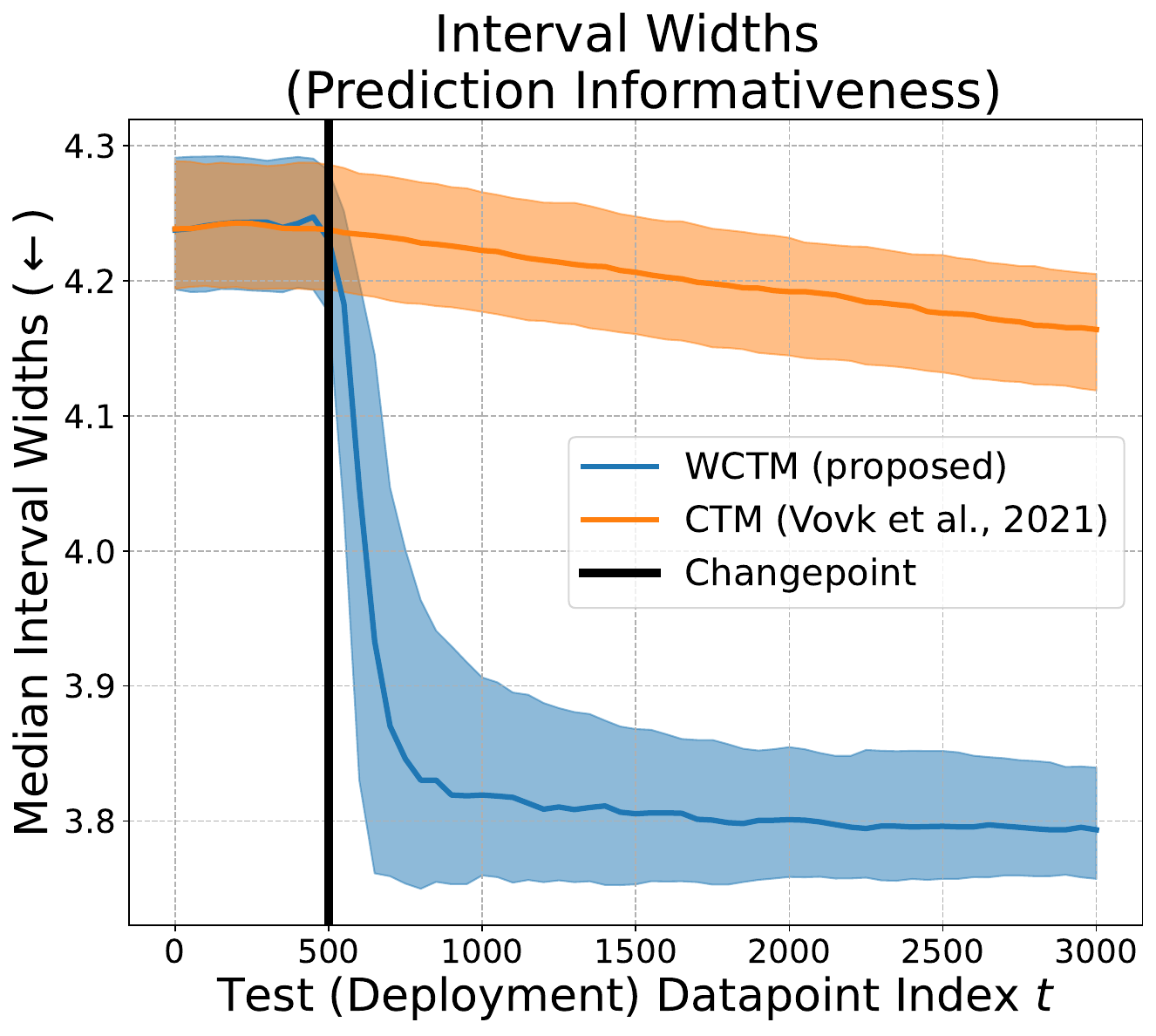}
    \end{subfigure}
    \hfill
    \begin{subfigure}{}
        \includegraphics[width=0.18\textwidth]{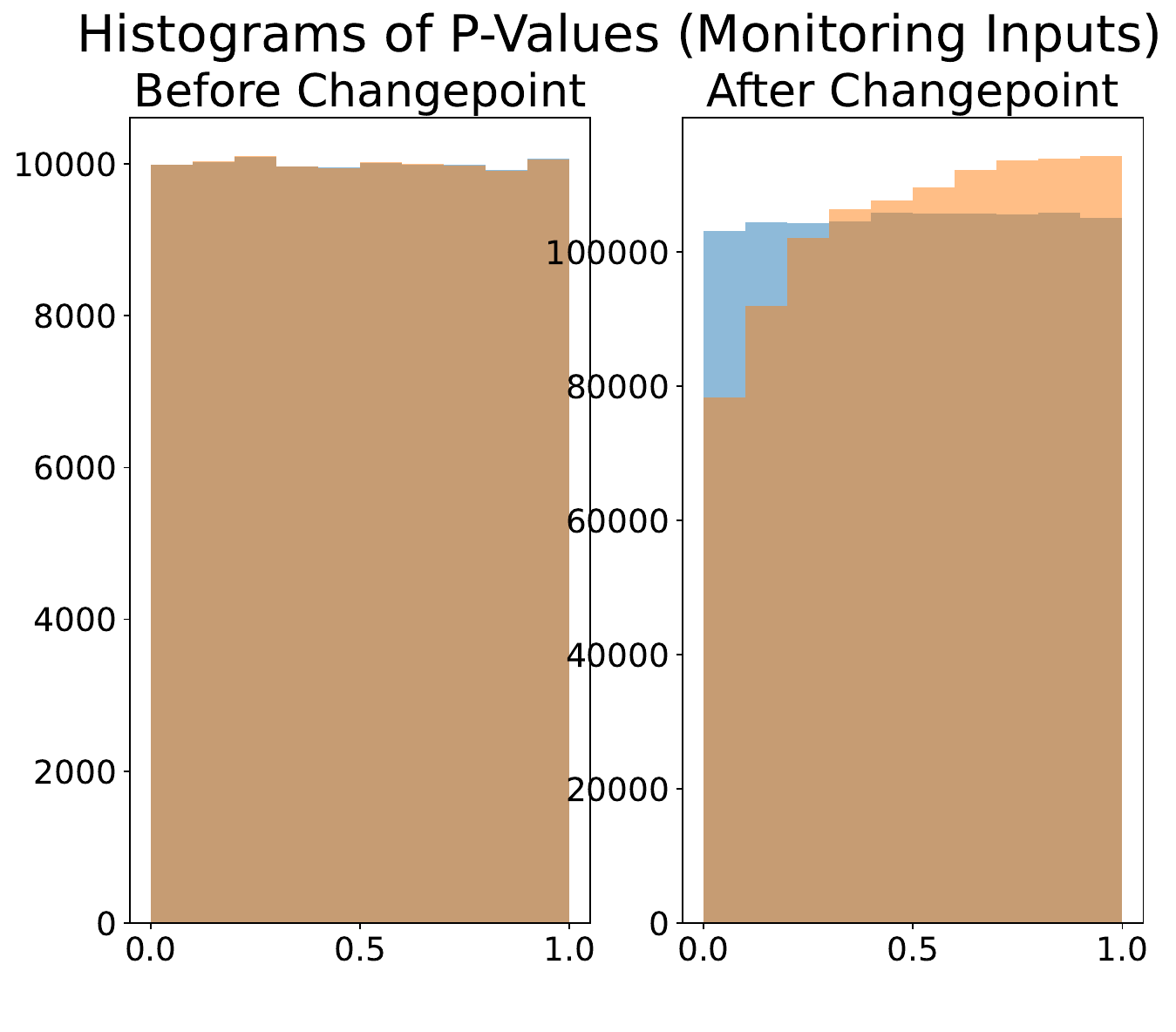}
    \end{subfigure}
    \hfill
    \begin{subfigure}{}
        \includegraphics[width=0.18\textwidth]{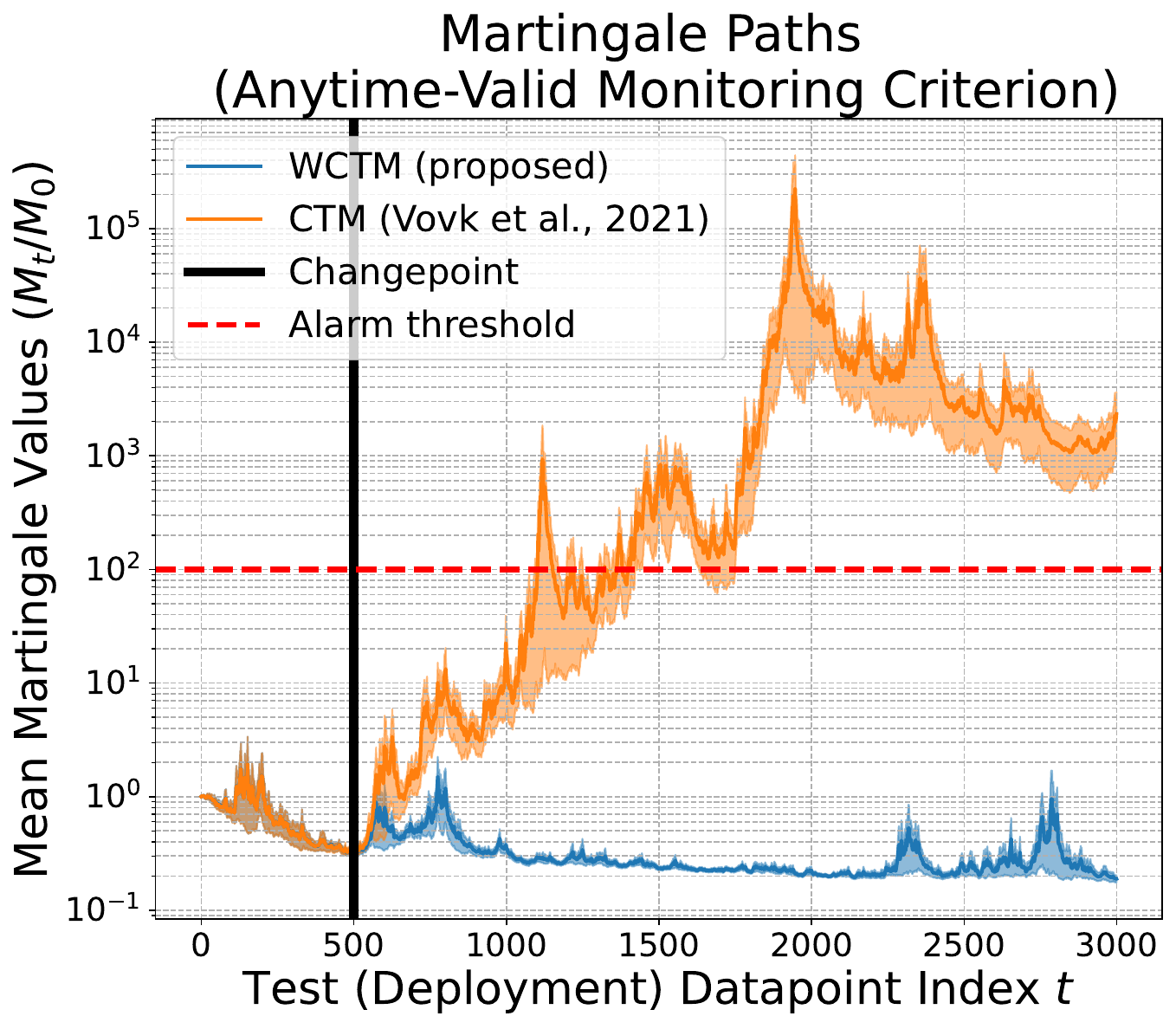}
    \end{subfigure}
    \hfill
    \begin{subfigure}{}
        \includegraphics[width=0.18\textwidth]{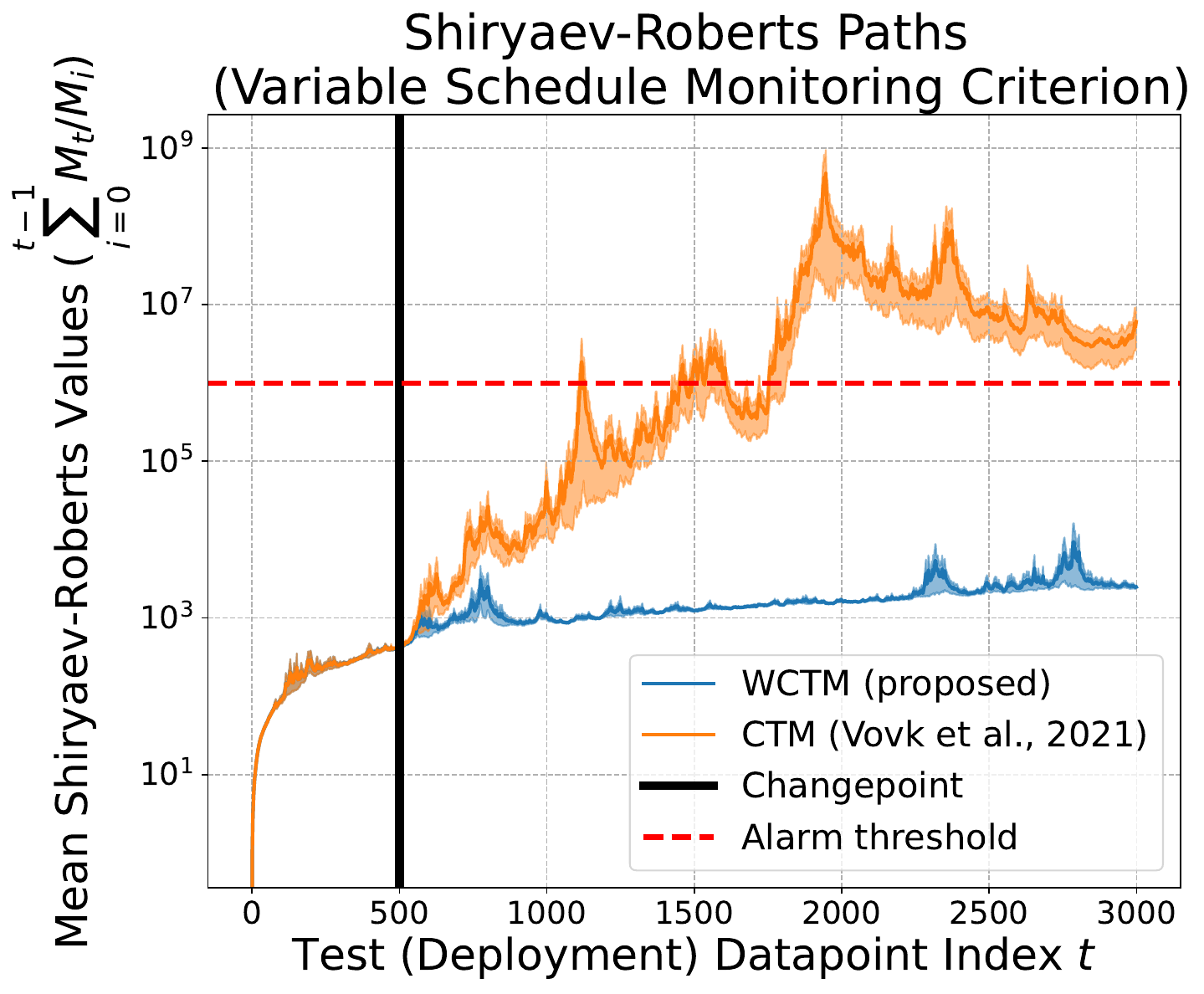}
    \end{subfigure}
    \\
    \scriptsize{\textbf{Superconductivity}}
    \\
    \begin{subfigure}{}
        \includegraphics[width=0.18\textwidth]{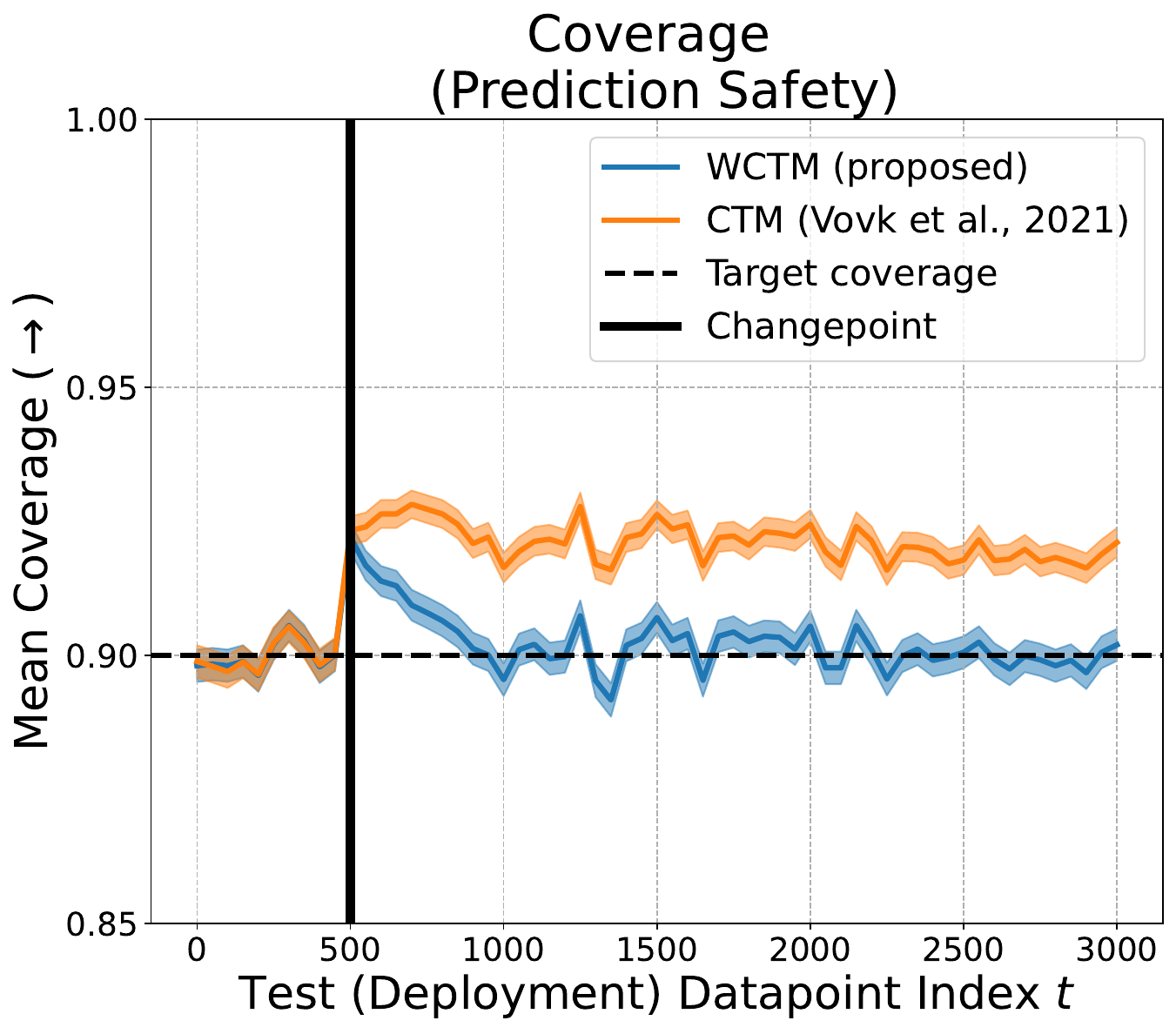}
    \end{subfigure}
    \hfill
    \begin{subfigure}{}
        \includegraphics[width=0.18\textwidth]{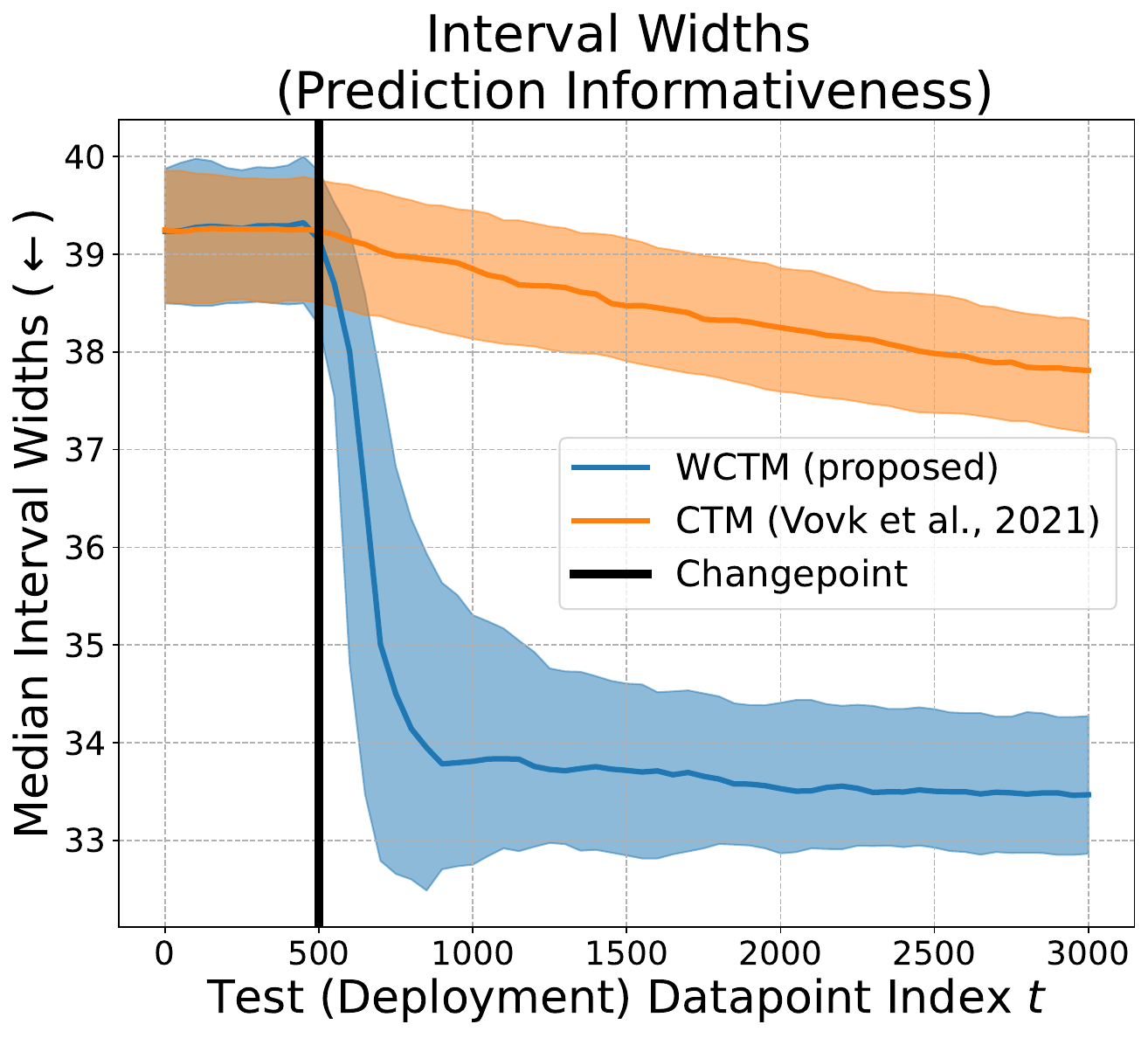}
    \end{subfigure}
    \hfill
    \begin{subfigure}{}
        \includegraphics[width=0.18\textwidth]{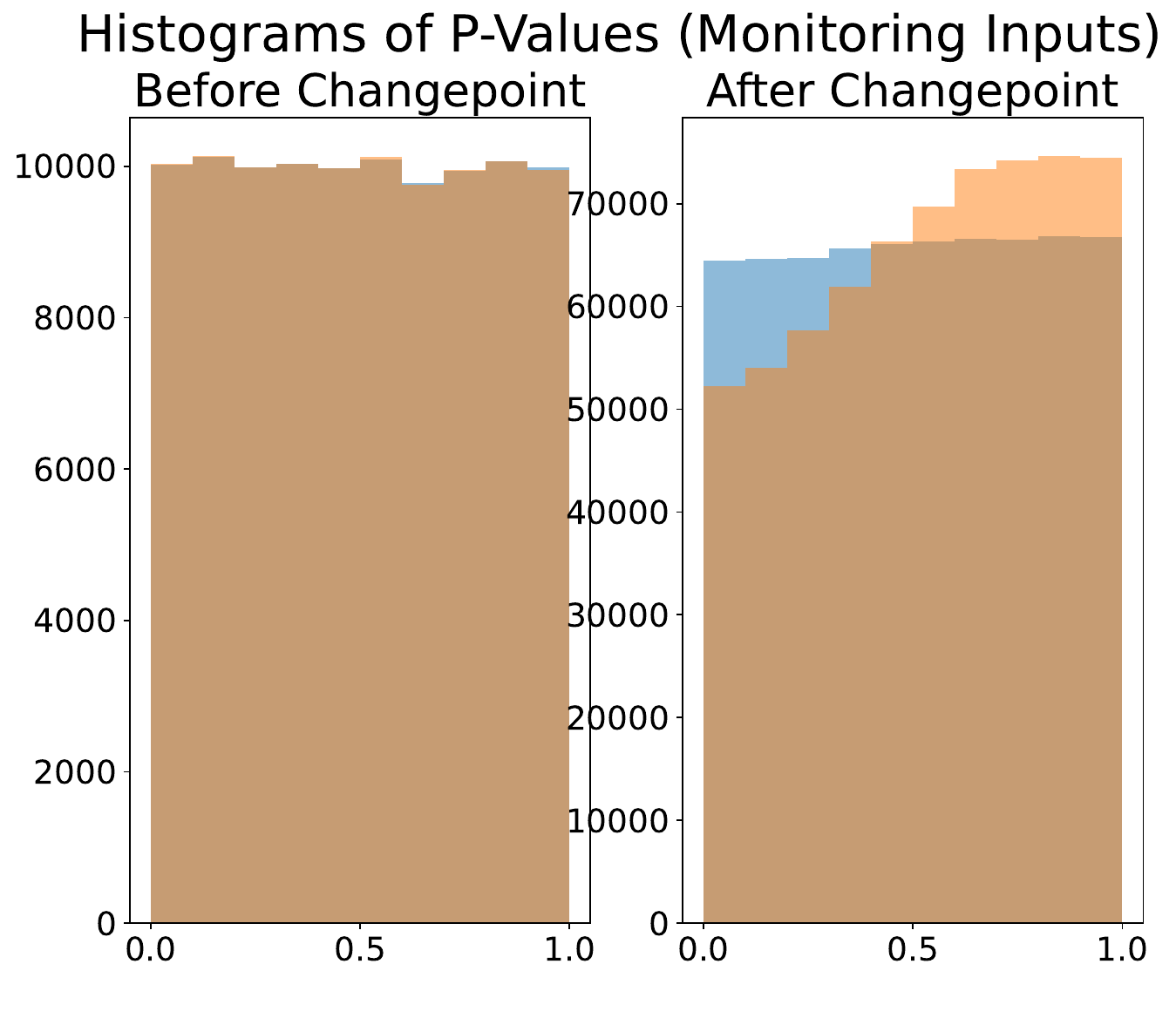}
    \end{subfigure}
    \hfill
    \begin{subfigure}{}
        \includegraphics[width=0.18\textwidth]{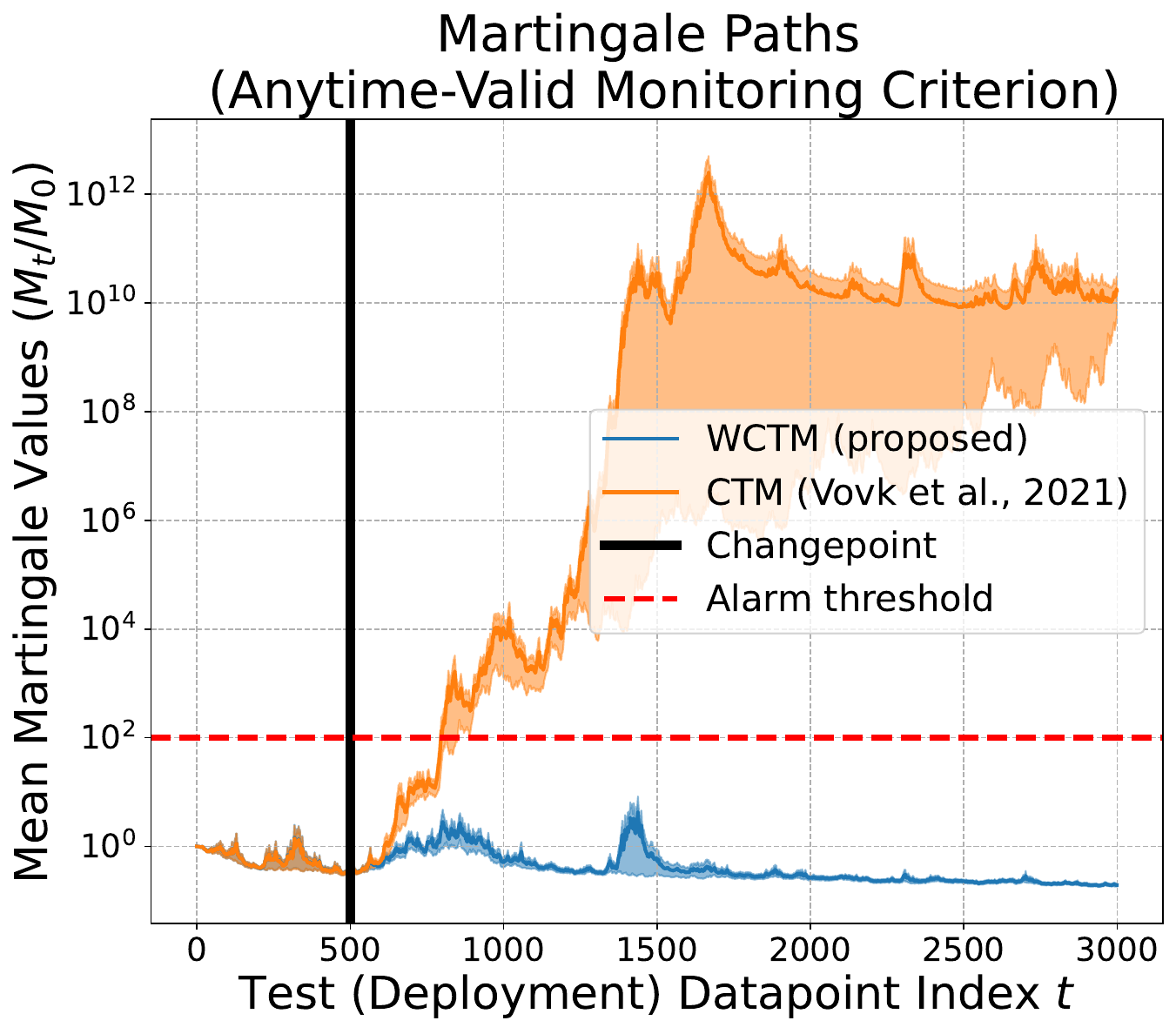}
    \end{subfigure}
    \hfill
    \begin{subfigure}{}
        \includegraphics[width=0.18\textwidth]{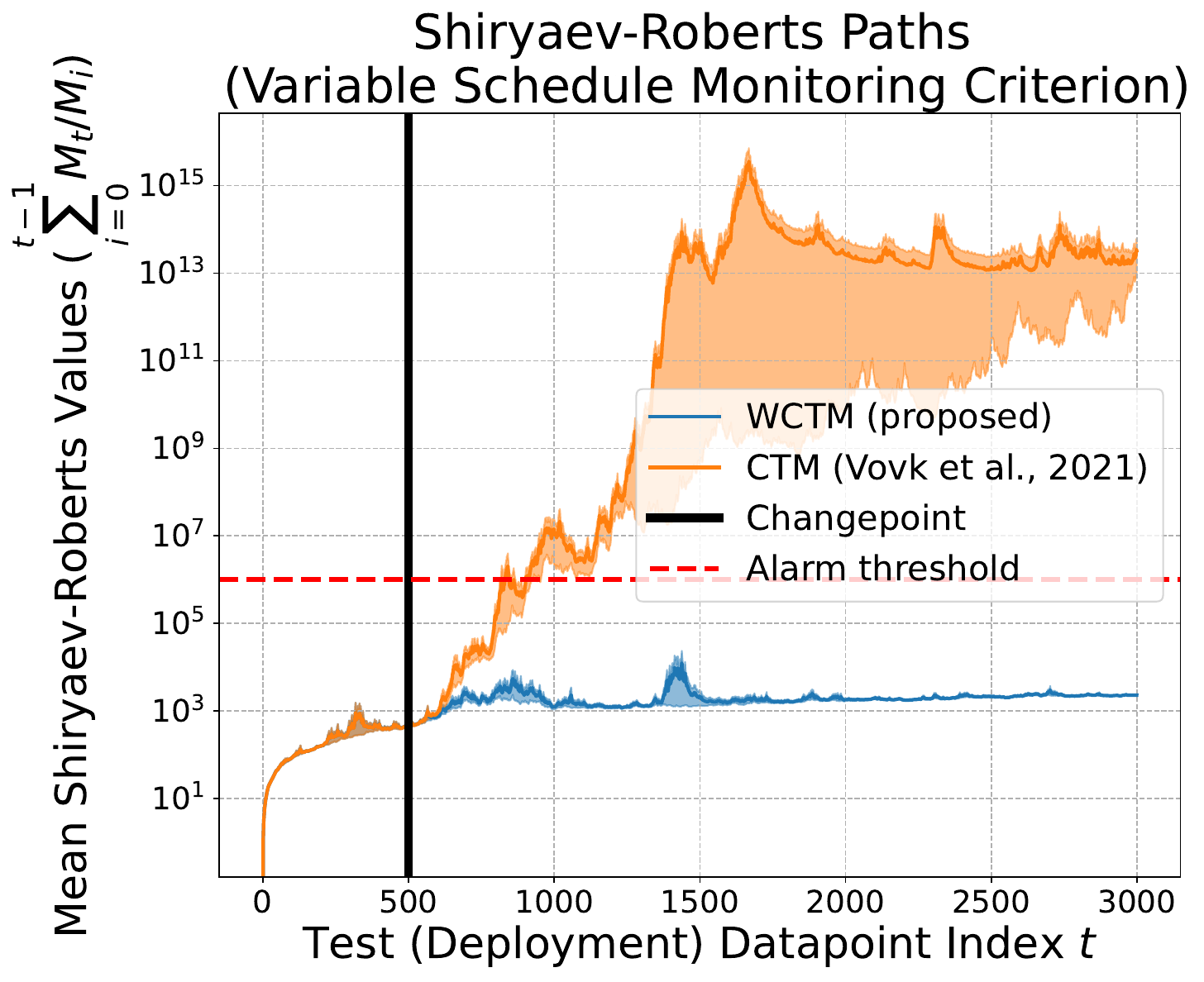}
    \end{subfigure}
    \\
    \scriptsize{\textbf{Bike Sharing}}
    \\
    \begin{subfigure}{}
        \includegraphics[width=0.18\textwidth]{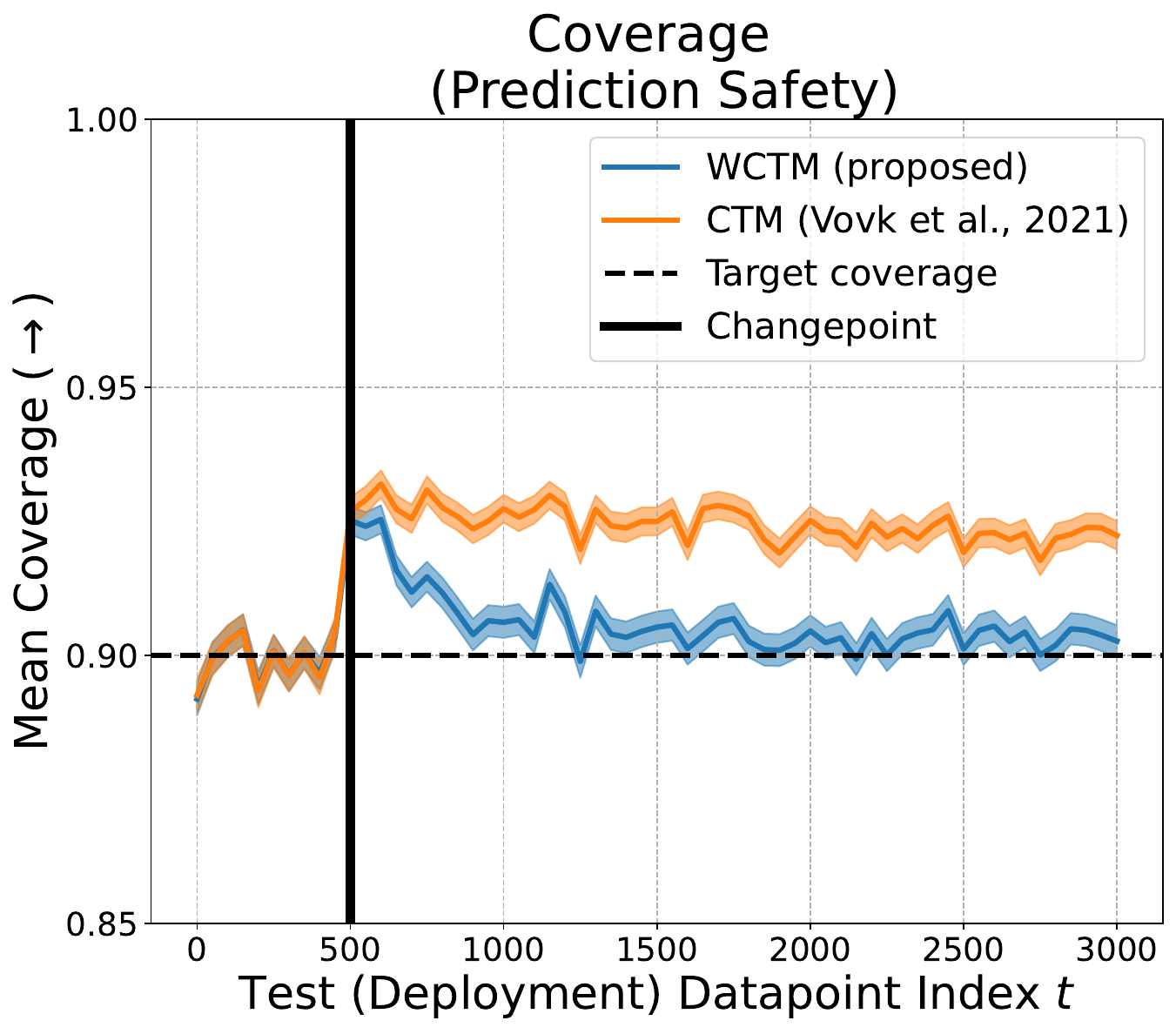}
    \end{subfigure}
    \hfill
    \begin{subfigure}{}
        \includegraphics[width=0.18\textwidth]{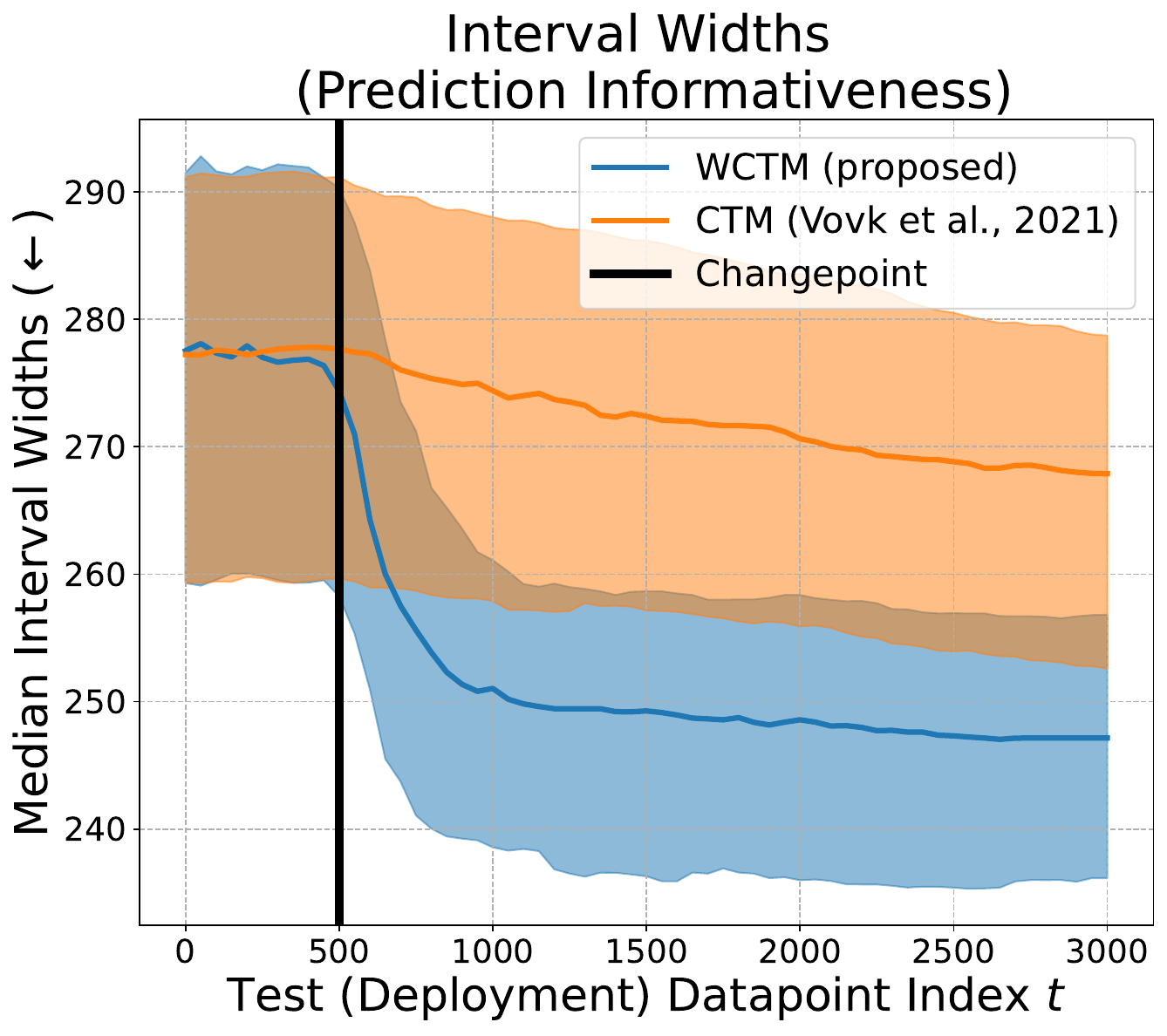}
    \end{subfigure}
    \hfill
    \begin{subfigure}{}
        \includegraphics[width=0.18\textwidth]{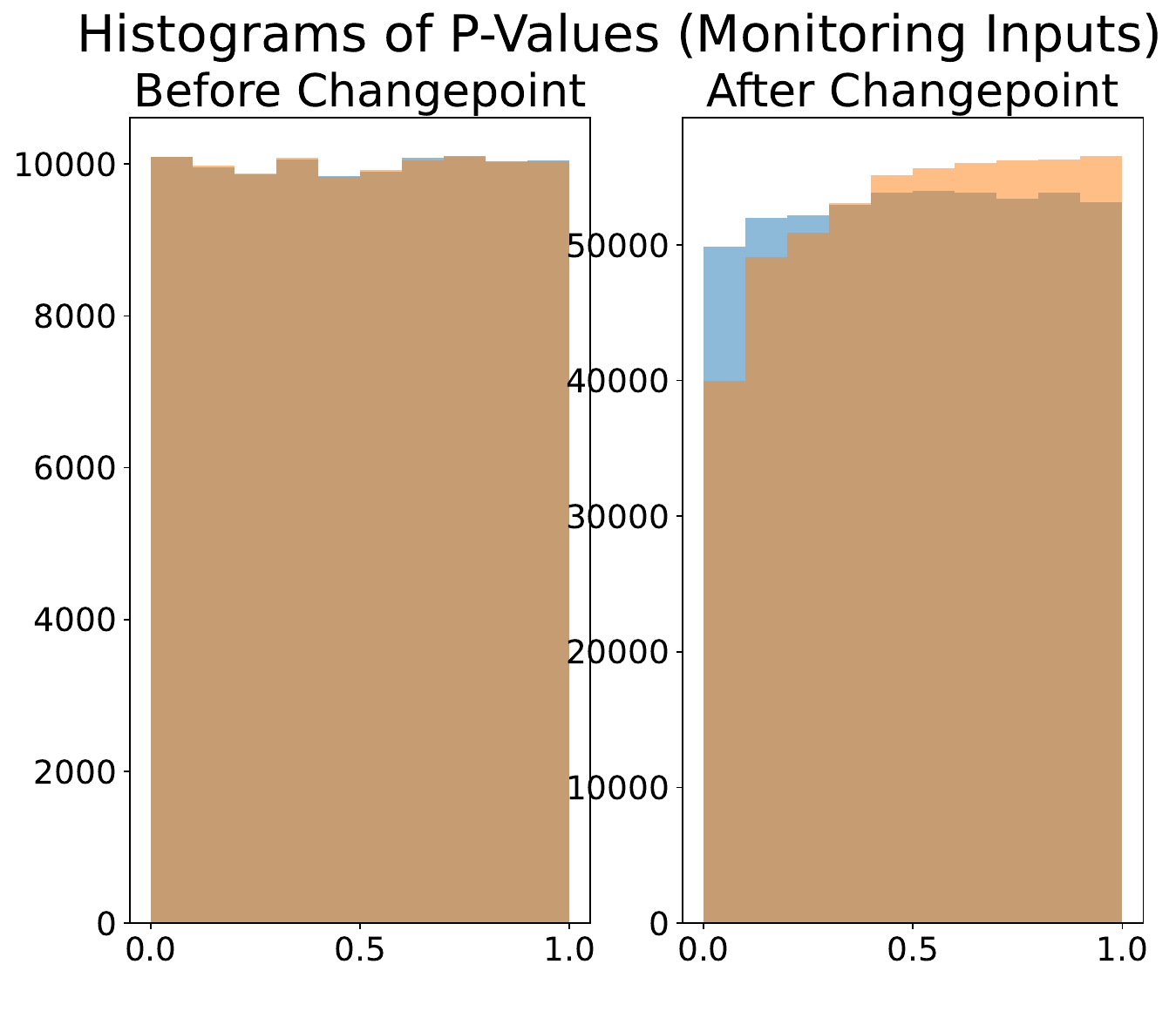}
    \end{subfigure}
    \hfill
    \begin{subfigure}{}
        \includegraphics[width=0.18\textwidth]{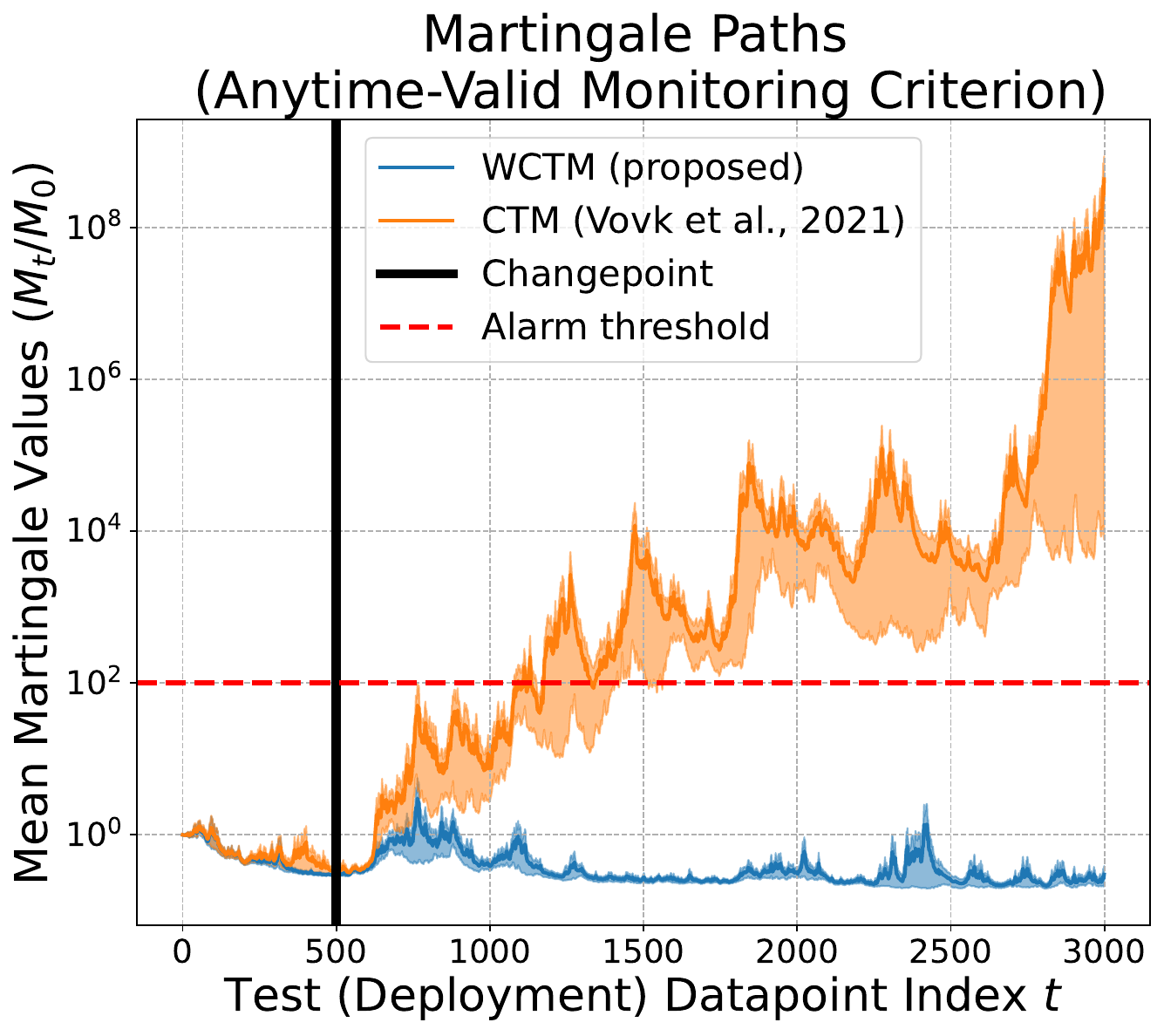}
    \end{subfigure}
    \hfill
    \begin{subfigure}{}
        \includegraphics[width=0.18\textwidth]{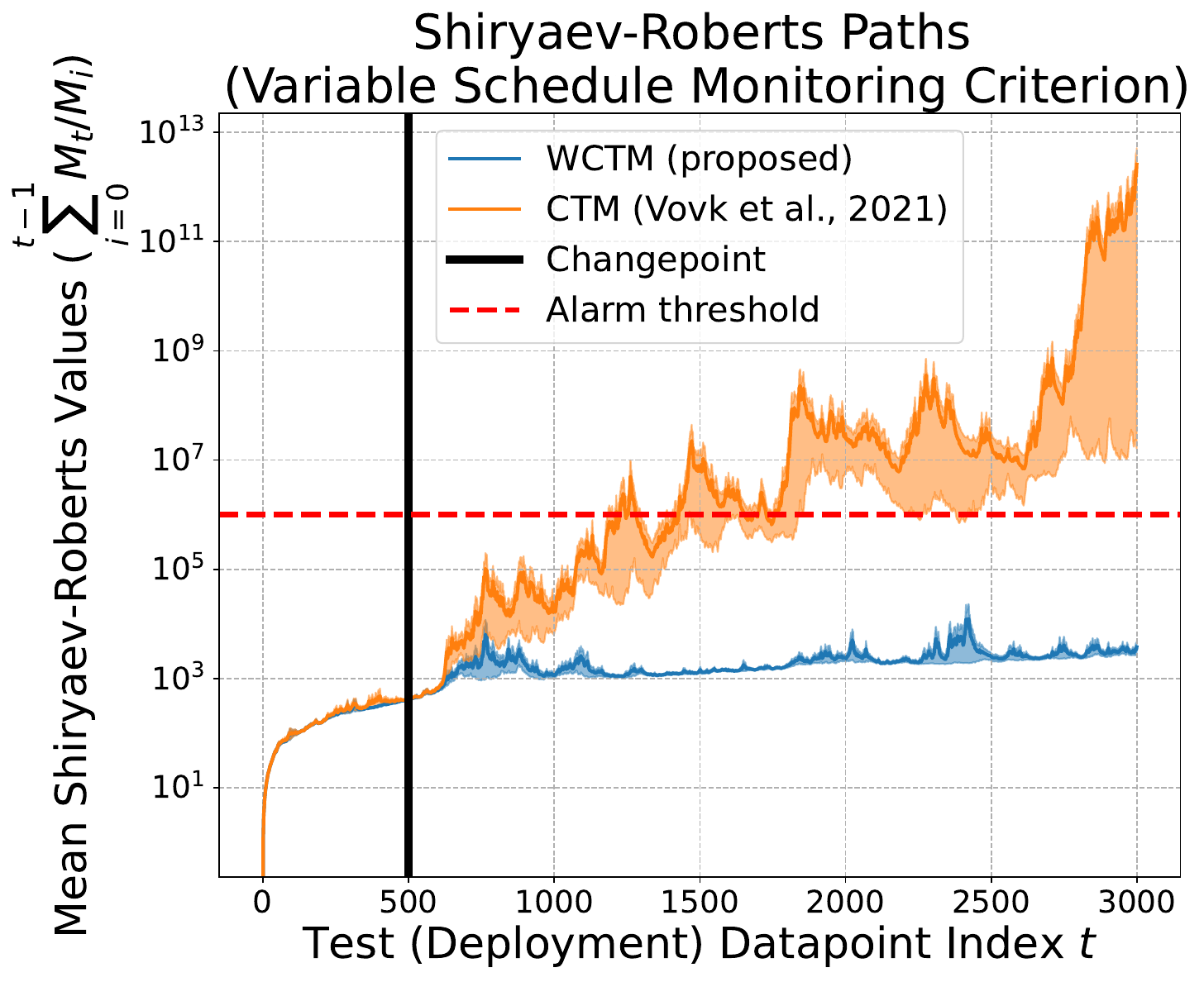}
    \end{subfigure}
    \\
    \vspace{-1em}
\caption{\small Tabular data results for the benign covariate shift setting to evaluate the adaptation ability of proposed WCTM methods (blue); all values are averaged over 200 random seeds. Training and calibration sets were sampled uniformly at random (with 1/3 of the total data used for training and calibration each), while post-changepoint test-set datapoints were bias-sampled from the remaining holdout data with probability proportional to $\exp(\lambda \cdot h(x))$. The shift-magnitude scalar $\lambda$ for each dataset was set as $\lambda_{\text{mep}}=5.0$
, $\lambda_{\text{sup}}=2.5$, and $\lambda_{\text{bik}}=5.0$. The function $h$ was selected to simuluate realistic shifts as described in the main text. Error regions represent standard errors for coverage, martingale paths, ans Shiryaev-Roberts paths, and interquartile range for interval widths. Whereas standard CTMs (orange) raise unnecessary alarms with their anytime-valid and scheduled monitoring criteria, WCTMs avoid doing so by adapting online to the shift. That is, WCTMs maintain target coverage, adapt by increasing interval sharpness, and avoid unneeded alarms.}
\label{fig:unnecessary_alarms}
\end{figure*}

\textbf{Root-Cause Analysis with Parallelized WCTM and $X$-CTM}~ The parallel implementation of both the primary WCTM and the secondary $X$-CTM method furthermore enables root-cause analysis, that is to diagnose whether performance degradation was due to a harmful covariate shift (e.g., Fig \ref{fig1:synthetic_examples}b) or a fundamental concept shift (e.g., Fig \ref{fig1:synthetic_examples}c). That is, if both the WCTM and the $X$-CTM have detected changepoints, then a harmful shift can be diagnosed as an extreme covariate shift (e.g., Fig \ref{fig1:synthetic_examples}b); on the other hand, if the $X$-CTM does not detect a changepoint, this suggests that no covariate shift is present, and the WCTM alarm was instead
due to a concept shift in $Y|X$ (Figure \ref{fig1:synthetic_examples}). 
\section{Experiments}\label{sec:exp}

We conduct a comprehensive empirical analysis of the WATCH framework on real-world datasets with various distribution shifts. Our results show that WATCH adapts effectively to benign shifts (Section \ref{subsec:unnecessary_alarm}) and triggers alarms when it fails to adapt (Section \ref{subsec:benign_to_extreme_cs}), while also quickly detecting harmful shifts with little delay (Section \ref{subsec:harmful}). Details on the datasets, models, and additional results, can be found in Appendix \ref{app:expt_details}. Code to reproduce all experiments is available at the following repository: \url{https://github.com/aaronhan223/watch}. 

\textbf{Baselines}~ We compare the proposed WCTM methods that constitute WATCH against standard CTMs \citep{vovk2021retrain} in all experiments. To evaluate detection speed on true harmful shifts (sec \ref{subsec:harmful}), we also compare against methods for directly performing sequential hypothesis testing and changepoint detection on the set-prediction miscoverage risk, as proposed by \citet{podkopaev2021tracking}.

% \textbf{Underlying ML Models} ~
In all experiments and for all baselines, the underlying ML predictor being monitored was a neural network. On the tabular data, we used the scikit-learn \citep{pedregosa2011scikit} MLPRegressor (with L-BFGS solver and logistic activation); for the image data, we used a 3-layer MLP with ReLU activations on the MNIST datasets and a ResNet-32 \citep{he2016deep} on CIFAR-10 datasets.
For weight estimation, we use a 3-layer MLP with ReLU activations to distinguish between source and target distributions.

\begin{figure*}[t!]
    \begin{minipage}{0.97\textwidth}
    \centering
    \begin{tabular}{@{\hspace{-2.4ex}} c @{\hspace{-1.5ex}} @{\hspace{-2.4ex}} c @{\hspace{-1.5ex}} @{\hspace{-2.4ex}} c @{\hspace{-1.5ex}} @{\hspace{-2.4ex}} c @{\hspace{-2.4ex}}}
        \begin{tabular}{c}
        \includegraphics[width=.27\textwidth]{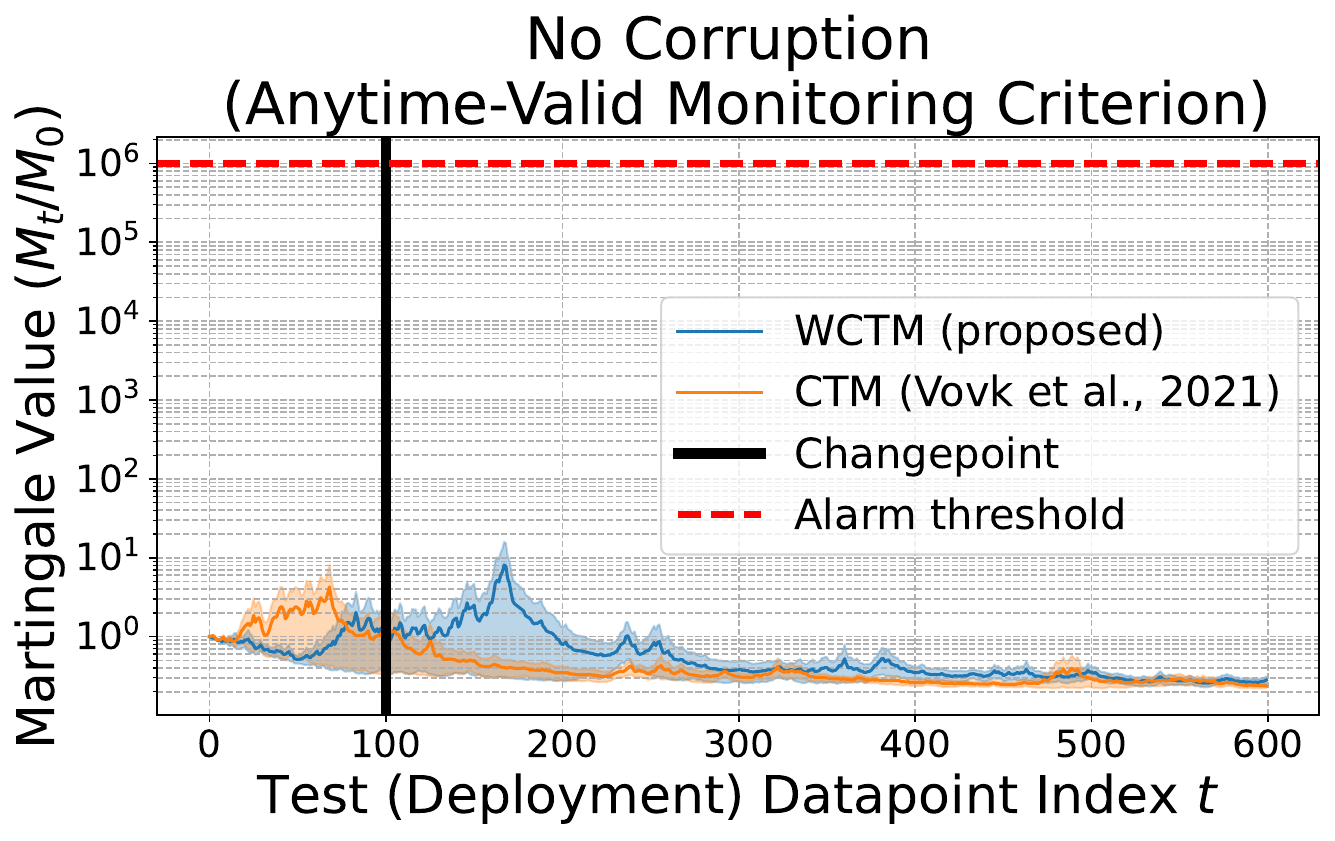}
        %\vspace{-5pt}
        \\
        {\small{(a)}}
        \end{tabular} &
        \begin{tabular}{c}
        \includegraphics[width=.27\textwidth]{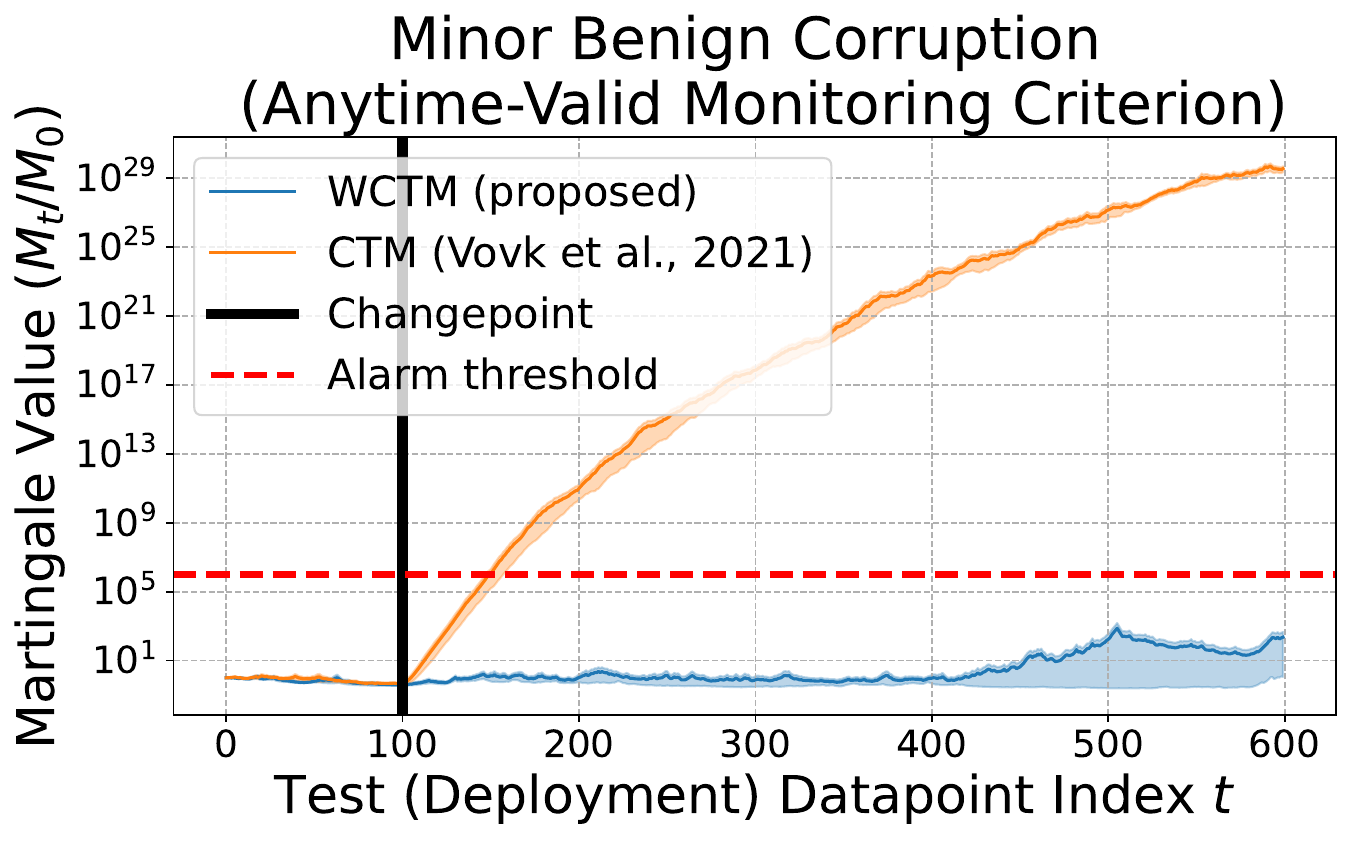}
        %\vspace{-5pt}
        \\
        {\small{(b)}}
        \end{tabular} & 
        \begin{tabular}{c}
        \includegraphics[width=.27\textwidth]{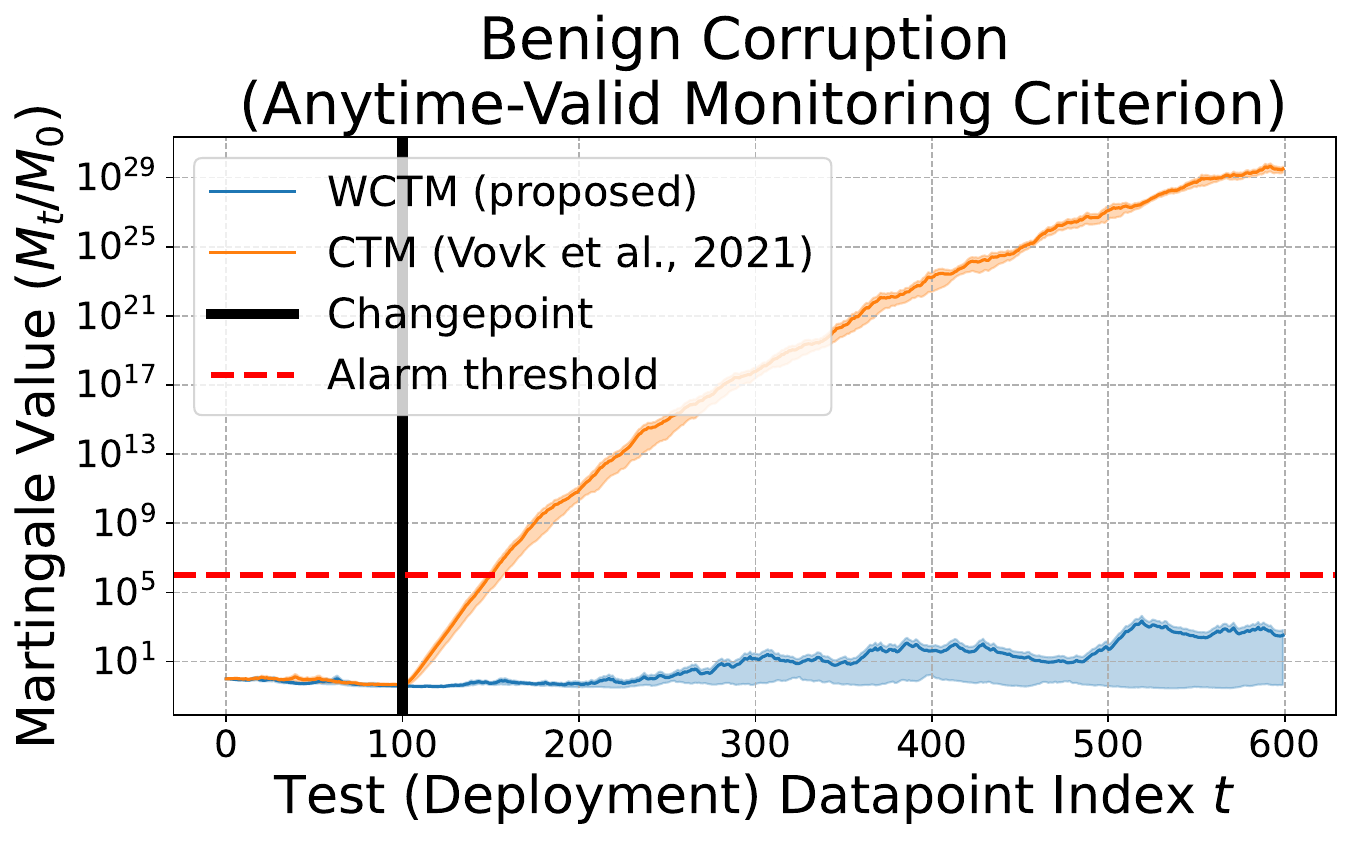} 
        %\vspace{-5pt}
        \\
        {\small{(c)}}
        \end{tabular} &
        \begin{tabular}{c}
        \includegraphics[width=.27\textwidth]{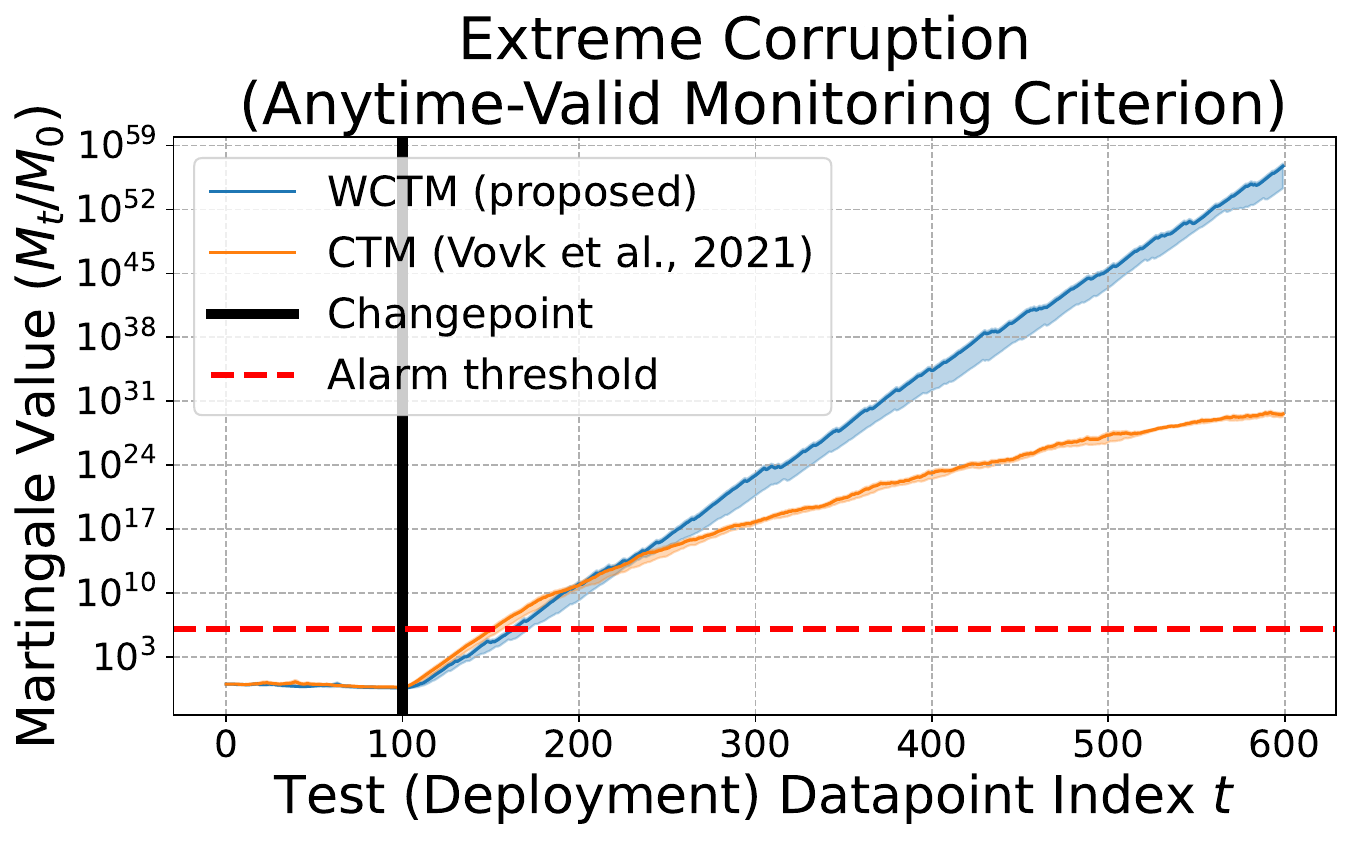}
        %\vspace{-5pt}
        \\
        {\small{(d)}}
        \end{tabular} \\
        \end{tabular}
    \end{minipage}
    \vspace{-1em}
    \caption{\small Example martingale trajectories of WCTM and CTM on CIFAR-10-C with increasing levels of corruption. WCTM adapt to milder shifts to  avoid unnecessary alarms while still detecting severe shifts.
    All plots are averaged from 5 random experiments.}
    \label{fig:image_benign}

    % \begin{table*}[htbp!]
\centering
\renewcommand\arraystretch{0.8}
\captionof{table}{\small Average detection delay (ADD) computed over 200 random seeds for each monitoring method and dataset (error bars are standard errors). Anytime-valid monitoring methods are WCTMs (proposed), CTMs \citep{vovk2021testing}, and sequential testing from \citet{podkopaev2021tracking} (PR-ST); multistage monitoring methods are Shiryaev-Roberts (SR) procedure applied to WCTMs (proposed), SR applied to CTMs \citep{vovk2021testing}, changepoint detection on the miscoverage risk from \citet{podkopaev2021tracking} in both online (PR-CD-online) and minibatched (PR-CD-50) variants. Results corresponding to our method are in {\color{NavyBlue}blue}. The best ADD are \textbf{bolded}.}
% \vspace{-1em}
\scalebox{0.9}{
\begin{tabular}{ll |cc |ccc }
\toprule
 & \textit{Monitoring Criterion} ($\rightarrow$)& \multicolumn{2}{c}{\textit{Anytime-Valid  Criterion}}   & \multicolumn{3}{c}{\textit{Scheduled, Multistage Criterion}} \\
\cmidrule(lr){3-4} \cmidrule(lr){5-7} 
 & \textit{Monitoring Method} ($\rightarrow$) & (W)CTM  & PR-ST & SR via (W)CTM & PR-CD-online & PR-CD-50 \\
\cmidrule(lr){3-4} \cmidrule(lr){5-7} \textit{Dataset} ($\downarrow$)
 & \textit{CP Type} ($\downarrow$) & \textbf{ADD ($\pm$ SE)} &   \textbf{ADD ($\pm$ SE)} &   \textbf{ADD ($\pm$ SE)} &   \textbf{ADD ($\pm$ SE)} &   \textbf{ADD ($\pm$ SE)}  \\
\cmidrule(lr){1-7} 
\multirow{2}{*}{\textbf{MEPS}} 
  & Weighted & \color{NavyBlue} \textbf{115.8 ($\pm 3.8$)} &  536.6 ($\pm 23.9$) &   \color{NavyBlue}\textbf{102.1 ($\pm 3.2$)} &  111.3 $(\pm 3.8)$  &   196.5 $(\pm 5.4)$  \\
  & Standard  & \textbf{114.8 ($\pm 3.7$)} &  475.2 ($\pm 18.8$)  &   \textbf{101.8 ($\pm 3.2$)} & 111.0 ($\pm 3.9$) &  194.5 ($\pm 5.3$)   \\
\cmidrule(lr){1-7} 
\multirow{2}{*}{\textbf{Bike Sharing}} 
  & Weighted & \color{NavyBlue} \textbf{190.0 ($\pm 8.6$)} &  602.5 $(\pm 24.7)$  &  \color{NavyBlue}\textbf{155.6 $(\pm 6.6)$} &  \textbf{154.3} $(\pm 5.9)$ &  241.2 $(\pm 6.5)$ \\
  & Standard  & \textbf{192.4 $(\pm 8.9)$} &  607.7 $(\pm 24.7)$  &  \textbf{157.4 $(\pm 6.6)$} & \textbf{157.4 $(\pm 5.9)$} & 244.3 $(\pm 6.6)$  \\
\cmidrule(lr){1-7}
\multirow{2}{*}{\textbf{Superconduct}} 
  & Weighted & \color{NavyBlue} \textbf{173.2 $(\pm 6.2)$}  & 802.9 $(\pm 30.5)$ & \color{NavyBlue}\textbf{142.8 $(\pm 5.0)$} & 192.9 $(\pm 6.0)$ & 289.1 $(\pm 6.1)$ \\
  & Standard  & \textbf{169.7 $(\pm 5.8)$} & 813.1 $(\pm 	29.9)$ &  \textbf{142.3 $(\pm 5.0)$} & 187.6 $(\pm 5.8)$  & 287.9 $(\pm 6.1)$   \\
\bottomrule
\label{tab:results}
\end{tabular}}

% \end{table*}
\end{figure*}

\subsection{WCTMs Adapt to Mild and Benign Shifts to Avoid Unnecessary Alarms} \label{subsec:unnecessary_alarm}

%% Drew: @Xing, note that I moved most of these details to a new "Datasets" section at begining of Experiments
% We evaluated the WATCH framework on the MNIST-corruption \citep{mu2019mnist} and CIFAR-10-corruption \citep{hendrycks2018benchmarking} datasets, which are standard benchmarks for assessing distribution shifts. The MNIST-corruption dataset includes 16 different types of corruption, while the CIFAR-10-corruption dataset comprises 20 corruption types, each with 5 severity levels.
% For computing p-values, we used the model’s predicted probability of the true label as the nonconformity score, $\alpha_i = 1 - p(y_i \mid x_i)$. 

% {\color{red}may condense}

\textbf{WCTMs Adapt to Benign Covariate Shifts}~ Figure \ref{fig:unnecessary_alarms} compares the average performance (across 200 random seeds) of WCTMs to standard CTMs across 3000 sequentially-observed test datapoints, where the true changepoint shift is induced after the 500th test point. From the conformal coverage plots in the first column, it is clear that all the shifts are benign in that coverage for the corresponding (standard or weighted) CP methods---a metric of prediction safety---does not degrade below the target level (0.9) after the changepoint. In fact, for the baseline method (orange), coverage \textit{increases}, which could be considered a ``beneficial'' shift; nonetheless, the standard CTM raises unnecessary alarms across all datasets, for both its anytime-valid (fourth column) and scheduled (fifth column) monitoring criteria. In contrast, the proposed WCTM (blue) avoids these unnecessary alarms across all datasets and both monitoring criteria. That is, the WCTM \textit{adapts} to the benign covariate shift by decreasing its interval widths (second column)---indicating more informative predictions---while maintaining target coverage. The relatively uniform distribution of the postchangepoint weighted $p$-values (third column, blue) is empirical evidence validating Theorem \ref{thm:general_weighted_p_validity}; meanwhile, the martingale (fourth column) and Shiryaev-Roberts (fifth column) WCTM paths avoiding alarms supports Theorems \ref{thm:wctm_anytime_valid_guarantee} and \ref{thm:SR-WCTM_ARL_control} respectively. 

% \begin{figure*}[t!]
%     \begin{minipage}{0.97\textwidth}
%     \centering
%     \begin{tabular}{@{\hspace{-2.4ex}} c @{\hspace{-1.5ex}} @{\hspace{-2.4ex}} c @{\hspace{-1.5ex}} @{\hspace{-2.4ex}} c @{\hspace{-1.5ex}} @{\hspace{-2.4ex}} c @{\hspace{-2.4ex}}}
%         \begin{tabular}{c}
%         \includegraphics[width=.27\textwidth]{fig/clean.pdf}
%         %\vspace{-5pt}
%         \\
%         {\small{(a)}}
%         \end{tabular} &
%         \begin{tabular}{c}
%         \includegraphics[width=.27\textwidth]{fig/minor_benign.pdf}
%         %\vspace{-5pt}
%         \\
%         {\small{(b)}}
%         \end{tabular} & 
%         \begin{tabular}{c}
%         \includegraphics[width=.27\textwidth]{fig/benign.pdf} 
%         %\vspace{-5pt}
%         \\
%         {\small{(c)}}
%         \end{tabular} &
%         \begin{tabular}{c}
%         \includegraphics[width=.27\textwidth]{fig/extensive.pdf}
%         %\vspace{-5pt}
%         \\
%         {\small{(d)}}
%         \end{tabular} \\
%         \end{tabular}
%     \end{minipage}
%     \vspace{-1em}
%     \caption{\small Example martingale trajectories of WCTM and CTM on CIFAR-10-C with increasing levels of corruption. WCTM adapts more finely to varying severity and only triggers alarms when necessary, whereas CTM shows limited adaptability. To better understand the reaction of the real martingale path, the plots are from one random experiment without aggregation.}
%     \label{fig:image_benign}
% \end{figure*}

% evaluated the ability of the proposed WCTMs to adapt to mild or benign covariate shifts 

\subsection{From Mild to Extreme Covariate Shifts: WCTMs Raise Alarms if Unable to Adapt}
\label{subsec:benign_to_extreme_cs}

% \textbf{Image Data Experiments}~ 
We demonstrate such a property of WCTM using image corruption experiments. 
% The models were trained on original clean data and then evaluated on corrupted variants. 
Shift severity was controlled by increasing the proportion of corrupted samples in the test set relative to training and calibration.
% To enable greater flexibility in controlling the level of distribution shift, we combine clean and corrupted samples using self-defined mixture ratios when defining the source and target distributions.
Figure \ref{fig:image_benign} illustrates the behavior of WCTM in comparison to CTM on CIFAR-10 under varying levels of brightness corruption. When the target distribution is completely clean (a), neither method triggers a false alarm. We then introduce level-1 brightness corruption while retaining over $50\%$ clean samples (b), creating a very mild, benign shift. In this scenario, CTM quickly raises an unnecessary alarm, whereas WCTM successfully avoids it. Next, we increase the corruption to level-3 and reduce the clean-sample ratio to $40\%$ (c). Although WCTM reacts to this shift, it still does not trigger an alarm. Finally, at corruption level-5, with the clean-sample ratio reduced to $30\%$ (d), WCTM does raise the alarm. See Appendix \ref{app:more_image_expts} for further experiments on corrupted image data and \ref{subsec:mild_mod_ex} for further analysis and explanation of what characterizes mild, moderate, or extreme covariate shifts.
% Throughout these changes, the baseline CTM shows little variation, while our method displays strong adaptivity to different degrees of distribution shift.

% [Will add table \ref{tab:image_results} discussion]
% \vspace{-0.3cm}

\begin{figure*}[htbp!]
\centering
    \includegraphics[width=\textwidth]{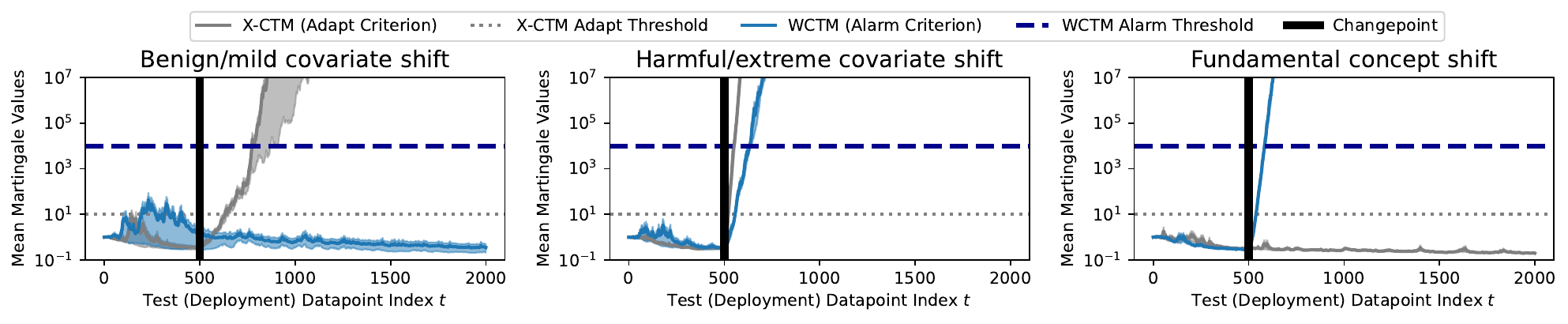}
\caption{Results for root-cause analysis with a WCTM (blue) and a secondary $X$-CTM (gray) on the MEPS dataset, averaged over 200 random seeds. The WCTM and $X$-CTM together are able to diagnose different types of shifts.}
\label{fig4:MEPS_RCA}
\end{figure*}

\subsection{WCTMs Detect Harmful Concept Shifts Faster than Sequentially Tracking Loss Metrics}
\label{subsec:harmful}

Because concept shifts fundamentally change the $Y|X$ relationship, they are generally harmful, and monitoring methods should detect them as quickly as possible. Our experiments compare the average detection delay (ADD) of WCTM methods relative to comparable CTM methods, as well as against methods for sequentially tracking the loss metrics directly with comparable false-alarm control. That is, the sequential testing procedures proposed in \citet{podkopaev2021tracking} (PR-ST) have anytime-valid control comparable to (W)CTMs, while the changepoint detection procedure in \citet{podkopaev2021tracking} (PR-CD) has average-run-length control comparable to running the Shiryaev-Roberts procedure \eqref{eq:SR_procedure} on top of (W)CTMs. Both PR-ST and PR-CD methods can be used to monitor the miscoverage risks of either weighted CP or standard CP methods, though they are less data efficient (requiring an extra holdout set for computing concentration inequalities). We compared to the betting-based 
-process in \citet{podkopaev2021tracking} (with $\epsilon_{\text{tol}}=0$) because those methods were reported to perform the best and are more comparable to  (W)CTMs, which are also betting-based. We compared WCTMs to sequential testing variant with standardized anytime-false alarm rate of 0.01; for SR-WCTMs we compared vs changepoint detection variant with common average-run-length (under the null) of 20,000.

Table \ref{tab:results} reports the average detection delay (ADD) results for monitoring methods grouped by false-alarm control type, with proposed methods in blue. Among the anytime-valid monitoring methods, WCTMs and CTMs achieve comparably fast ADD; however, relative to comparable PR-ST methods, (W)CTMs are over \textit{three times} faster. Meanwhile, among the stagewise monitoring methods, WCTM and CTM-based SR procedures are comparable, but with significantly faster ADD than PR-CD run either online or in minibatches. PR-CD-online has lower ADD than in minibatches, but its wall-clock runtime is far slower (Appendix \ref{app:runtimes}) due to the method's $\mathcal{O}(t^2)$ time complexity, whereas the proposed WCTM and SR-WCTM methods are $\mathcal{O}(t)$.

\subsection{WCTMs Diagnose Root Cause of Harmful Shifts}

\label{sec:rca}

The left plot corresponds to a benign covariate shift setting (same setting as in Figure \ref{fig:unnecessary_alarms}), and WATCH diagnoses it as such with the $X$-CTM achieving a large value while the WCTM adapts to the shift, avoiding an alarm. The middle plot corresponds to an extreme covariate shift setting, and WATCH diagnoses it by the $X$-CTM identifying covariate shift and the WCTM raising an alarm due to potential harm. Lastly, the right plot corresponds to a harmful concept shift, and WATCH identifies it as such by the WCTM raising an alarm, but with the $X$-CTM maintaining lower values, suggesting that no covariate shift has occurred.

\section{Summary and Future Directions}
\label{sec:possible_directions}

In this paper we introduced the novel methods of weighted-conformal test martingales (WCTMs), which are constructed from sequences of weighted-conformal $p$-values. We presented new theory for how online weighted-conformal $p$-values can be used in sequential hypothesis testing, and we demonstrated how WCTMs enable continual monitoring of deplyed AI/ML models.
% and demonstrated how weighted conformal $p$-values and associated weighted conformal test martingales 
% We introduced a novel weighted conformal test martingale that supports continual monitoring and adaptation to diverse shifts in deployed ML models. 
The proposed approach, WATCH, achieves three main goals: (1) \textit{adaptation} to benign shifts to avoid unnecessary alarms and improve the utility of CP prediction sets for end-users; (2) \textit{fast detection} of harmful shifts; and (3) \textit{root-cause analysis} to identify the cause of any performance degradation as a harmful covariate shift or a fundamental concept shift. 
% Our empirical results show that WATCH's adaptation reduces unnecessary alarms relative to standard CTMs \citep{vovk2021retrain} while still detecting harmful shifts faster than comparable (betting-based) methods that directly track the risk (i.e., from \citet{podkopaev2021tracking}). WATCH is further able to perform root-cause analysis to diagnose whether a harmful shift causing degradation was an extreme covariate shift (e.g., out-of-support shift in $X$) or a fundamental concept shift (in $Y\mid X$).
% substantially outperforms SOTA in these areas. 

Our contribution opens up several promising directions for future work. 
Future theory directions include developing specific WCTM algorithms for testing other nonparametric null hypotheses, exploring connections to conditional permutation tests (e.g., \citet{berrett2020conditional}), analyzing WCTMs' efficiency with respect to certain alternative hypotheses, developing robustness results for approximate weights, and finer-grained root-cause analysis. For applications, implementations of WATCH to monitor the risks of AI systems in real practical settings such as healthcare and extensions to monitor foundation models and AI agents would be valuable. We hope that WATCH encourages greater attention on post-deployment monitoring for safer AI systems, with a focus on adaptive monitoring, fast detection of harmful shifts, and diagnostics to inform recovery.

\section*{Impact Statement}
This paper presents research aimed at propelling advancements in the broad domain of machine learning. The implications of our findings are wide-ranging, with potential high-stake applications in sectors including healthcare, autonomous driving, and e-commerce. Based on our current understanding, this research does not warrant an ethics review, and a detailed discussion of the potential societal impacts is not required at the current stage.

\section*{Acknowledgements}

D.P. was funded by a PhD fellowship from the Amazon Initiative for Interactive AI (AI2AI); D.P., X.H., A.L., and S.S. were partially funded by the Gordon and Betty Moore Foundation grant \#12128; D.P., X.H., and S.S. were partially funded by National Science Foundation (NSF) grant \#1840088. A.L. is also supported by an Amazon Research Award. The authors would like to thank Yaniv Romano for helpful discussions on a draft of this paper; Jwala Dhamala and Jie Ding for helpful discussions throughout the JHU + AI2AI collaboration; and the anonymous ICML reviewers for informative feedback.

\bibliographystyle{apalike} 

\bibliography{bibliography.bib}

\begin{thebibliography}{}

\bibitem[Adams et~al., 2022]{adams2022prospective}
Adams, R., Henry, K.~E., Sridharan, A., Soleimani, H., Zhan, A., Rawat, N., Johnson, L., Hager, D.~N., Cosgrove, S.~E., Markowski, A., et~al. (2022).
\newblock Prospective, multi-site study of patient outcomes after implementation of the trews machine learning-based early warning system for sepsis.
\newblock {\em Nature medicine}, 28(7):1455--1460.

\bibitem[Amodei et~al., 2016]{amodei2016concrete}
Amodei, D., Olah, C., Steinhardt, J., Christiano, P., Schulman, J., and Man{\'e}, D. (2016).
\newblock Concrete problems in ai safety.
\newblock {\em arXiv preprint arXiv:1606.06565}.

\bibitem[Angelopoulos et~al., 2024]{angelopoulos2024theoretical}
Angelopoulos, A.~N., Barber, R.~F., and Bates, S. (2024).
\newblock Theoretical foundations of conformal prediction.
\newblock {\em arXiv preprint arXiv:2411.11824}.

\bibitem[Bar et~al., 2024]{bar2024protected}
Bar, Y., Shaer, S., and Romano, Y. (2024).
\newblock Protected test-time adaptation via online entropy matching: A betting approach.
\newblock {\em arXiv preprint arXiv:2408.07511}.

\bibitem[Barber et~al., 2023]{barber2023conformal}
Barber, R.~F., Candes, E.~J., Ramdas, A., and Tibshirani, R.~J. (2023).
\newblock Conformal prediction beyond exchangeability.
\newblock {\em The Annals of Statistics}, 51(2):816--845.

\bibitem[Barber and Tibshirani, 2025]{barber2025unifying}
Barber, R.~F. and Tibshirani, R.~J. (2025).
\newblock Unifying different theories of conformal prediction.
\newblock {\em arXiv preprint arXiv:2504.02292}.

\bibitem[Bashari et~al., 2023]{bashari2023derandomized}
Bashari, M., Epstein, A., Romano, Y., and Sesia, M. (2023).
\newblock Derandomized novelty detection with fdr control via conformal e-values.
\newblock {\em Advances in Neural Information Processing Systems}, 36:65585--65596.

\bibitem[Bates et~al., 2023]{bates2023testing}
Bates, S., Cand{\`e}s, E., Lei, L., Romano, Y., and Sesia, M. (2023).
\newblock Testing for outliers with conformal p-values.
\newblock {\em The Annals of Statistics}, 51(1):149--178.

\bibitem[Berrett et~al., 2020]{berrett2020conditional}
Berrett, T.~B., Wang, Y., Barber, R.~F., and Samworth, R.~J. (2020).
\newblock The conditional permutation test for independence while controlling for confounders.
\newblock {\em Journal of the Royal Statistical Society Series B: Statistical Methodology}, 82(1):175--197.

\bibitem[Cohen et~al., 2009]{cohen2009medical}
Cohen, J.~W., Cohen, S.~B., and Banthin, J.~S. (2009).
\newblock The medical expenditure panel survey: a national information resource to support healthcare cost research and inform policy and practice.
\newblock {\em Medical care}, 47(7\_Supplement\_1):S44--S50.

\bibitem[Eliades and Papadopoulos, 2022]{eliades2022betting}
Eliades, C. and Papadopoulos, H. (2022).
\newblock A betting function for addressing concept drift with conformal martingales.
\newblock In {\em Conformal and Probabilistic Prediction with Applications}, pages 219--238. PMLR.

\bibitem[Eliades and Papadopoulos, 2023]{eliades2023conformal}
Eliades, C. and Papadopoulos, H. (2023).
\newblock A conformal martingales ensemble approach for addressing concept drift.
\newblock In {\em Conformal and Probabilistic Prediction with Applications}, pages 328--346. PMLR.

\bibitem[Fanaee-T, 2013]{bike_sharing_275}
Fanaee-T, H. (2013).
\newblock {Bike Sharing}.
\newblock UCI Machine Learning Repository.
\newblock {DOI}: https://doi.org/10.24432/C5W894.

\bibitem[Fannjiang et~al., 2022]{fannjiang2022conformal}
Fannjiang, C., Bates, S., Angelopoulos, A.~N., Listgarten, J., and Jordan, M.~I. (2022).
\newblock Conformal prediction under feedback covariate shift for biomolecular design.
\newblock {\em Proceedings of the National Academy of Sciences}, 119(43):e2204569119.

\bibitem[Farinhas et~al., 2023]{farinhas2023non}
Farinhas, A., Zerva, C., Ulmer, D., and Martins, A.~F. (2023).
\newblock Non-exchangeable conformal risk control.
\newblock {\em arXiv preprint arXiv:2310.01262}.

\bibitem[Fedorova et~al., 2012]{fedorova2012plug}
Fedorova, V., Gammerman, A., Nouretdinov, I., and Vovk, V. (2012).
\newblock Plug-in martingales for testing exchangeability on-line.
\newblock {\em arXiv preprint arXiv:1204.3251}.

\bibitem[Feldman and Romano, 2024]{feldman2024robust}
Feldman, S. and Romano, Y. (2024).
\newblock Robust conformal prediction using privileged information.
\newblock {\em arXiv preprint arXiv:2406.05405}.

\bibitem[Feng et~al., 2022]{feng2022clinical}
Feng, J., Phillips, R.~V., Malenica, I., Bishara, A., Hubbard, A.~E., Celi, L.~A., and Pirracchio, R. (2022).
\newblock Clinical artificial intelligence quality improvement: towards continual monitoring and updating of ai algorithms in healthcare.
\newblock {\em NPJ digital medicine}, 5(1):66.

\bibitem[Feng et~al., 2025]{feng2025not}
Feng, J., Xia, F., Singh, K., and Pirracchio, R. (2025).
\newblock Not all clinical ai monitoring systems are created equal: Review and recommendations.
\newblock {\em NEJM AI}, 2(2):AIra2400657.

\bibitem[Finlayson et~al., 2021]{finlayson2021clinician}
Finlayson, S.~G., Subbaswamy, A., Singh, K., Bowers, J., Kupke, A., Zittrain, J., Kohane, I.~S., and Saria, S. (2021).
\newblock The clinician and dataset shift in artificial intelligence.
\newblock {\em New England Journal of Medicine}, 385(3):283--286.

\bibitem[Fischer and Ramdas, 2025]{fischer2025sequential}
Fischer, L. and Ramdas, A. (2025).
\newblock Sequential monte carlo testing by betting.
\newblock {\em Journal of the Royal Statistical Society Series B: Statistical Methodology}, page qkaf014.

\bibitem[Foygel~Barber et~al., 2021]{foygel2021limits}
Foygel~Barber, R., Candes, E.~J., Ramdas, A., and Tibshirani, R.~J. (2021).
\newblock The limits of distribution-free conditional predictive inference.
\newblock {\em Information and Inference: A Journal of the IMA}, 10(2):455--482.

\bibitem[Gauthier et~al., 2025]{gauthier2025values}
Gauthier, E., Bach, F., and Jordan, M.~I. (2025).
\newblock E-values expand the scope of conformal prediction.
\newblock {\em arXiv preprint arXiv:2503.13050}.

\bibitem[Hamidieh, 2018]{superconductivty_data_464}
Hamidieh, K. (2018).
\newblock {Superconductivty Data}.
\newblock UCI Machine Learning Repository.
\newblock {DOI}: https://doi.org/10.24432/C53P47.

\bibitem[He et~al., 2016]{he2016deep}
He, K., Zhang, X., Ren, S., and Sun, J. (2016).
\newblock Deep residual learning for image recognition.
\newblock In {\em Proceedings of the IEEE conference on computer vision and pattern recognition}, pages 770--778.

\bibitem[Hendrycks and Dietterich, 2019]{hendrycks2018benchmarking}
Hendrycks, D. and Dietterich, T. (2019).
\newblock Benchmarking neural network robustness to common corruptions and perturbations.
\newblock In {\em International Conference on Learning Representations}.

\bibitem[Hindy et~al., 2024]{hindy2024diagnostic}
Hindy, A., Luo, R., Banerjee, S., Kuck, J., Schmerling, E., and Pavone, M. (2024).
\newblock Diagnostic runtime monitoring with martingales.
\newblock {\em arXiv preprint arXiv:2407.21748}.

\bibitem[Ho, 2005]{ho2005martingale}
Ho, S.-S. (2005).
\newblock A martingale framework for concept change detection in time-varying data streams.
\newblock In {\em Proceedings of the 22nd international conference on Machine learning}, pages 321--327.

\bibitem[Ho et~al., 2019]{ho2019martingale}
Ho, S.-S., Schofield, M., Sun, B., Snouffer, J., and Kirschner, J. (2019).
\newblock A martingale-based approach for flight behavior anomaly detection.
\newblock In {\em 2019 20th IEEE International Conference on Mobile Data Management (MDM)}, pages 43--52. IEEE.

\bibitem[I~Amoukou et~al., 2024]{i2024sequential}
I~Amoukou, S., Bewley, T., Mishra, S., Lecue, F., Magazzeni, D., and Veloso, M. (2024).
\newblock Sequential harmful shift detection without labels.
\newblock {\em Advances in Neural Information Processing Systems}, 37:129279--129302.

\bibitem[Jin and Cand{\`e}s, 2023]{jin2023model}
Jin, Y. and Cand{\`e}s, E.~J. (2023).
\newblock Model-free selective inference under covariate shift via weighted conformal p-values.
\newblock {\em arXiv preprint arXiv:2307.09291}.

\bibitem[Lee and Ren, 2025]{lee2025selection}
Lee, Y. and Ren, Z. (2025).
\newblock Selection from hierarchical data with conformal e-values.
\newblock {\em arXiv preprint arXiv:2501.02514}.

\bibitem[Mu and Gilmer, 2019]{mu2019mnist}
Mu, N. and Gilmer, J. (2019).
\newblock Mnist-c: A robustness benchmark for computer vision.
\newblock {\em arXiv preprint arXiv:1906.02337}.

\bibitem[Nair and Janson, 2023]{nair2023randomization}
Nair, Y. and Janson, L. (2023).
\newblock Randomization tests for adaptively collected data.
\newblock {\em arXiv preprint arXiv:2301.05365}.

\bibitem[Page, 1954]{page1954continuous}
Page, E.~S. (1954).
\newblock Continuous inspection schemes.
\newblock {\em Biometrika}, 41(1/2):100--115.

\bibitem[Papadopoulos, 2008]{papadopoulos2008inductive}
Papadopoulos, H. (2008).
\newblock Inductive conformal prediction: Theory and application to neural networks.
\newblock In {\em Tools in artificial intelligence}. Citeseer.

\bibitem[Papadopoulos et~al., 2002]{papadopoulos2002inductive}
Papadopoulos, H., Proedrou, K., Vovk, V., and Gammerman, A. (2002).
\newblock Inductive confidence machines for regression.
\newblock In {\em Machine learning: ECML 2002: 13th European conference on machine learning Helsinki, Finland, August 19--23, 2002 proceedings 13}, pages 345--356. Springer.

\bibitem[Pedregosa et~al., 2011]{pedregosa2011scikit}
Pedregosa, F., Varoquaux, G., Gramfort, A., Michel, V., Thirion, B., Grisel, O., Blondel, M., Prettenhofer, P., Weiss, R., Dubourg, V., et~al. (2011).
\newblock Scikit-learn: Machine learning in python.
\newblock {\em the Journal of machine Learning research}, 12:2825--2830.

\bibitem[Podkopaev and Ramdas, 2021a]{podkopaev2021distribution}
Podkopaev, A. and Ramdas, A. (2021a).
\newblock Distribution-free uncertainty quantification for classification under label shift.
\newblock In {\em Uncertainty in artificial intelligence}, pages 844--853. PMLR.

\bibitem[Podkopaev and Ramdas, 2021b]{podkopaev2021tracking}
Podkopaev, A. and Ramdas, A. (2021b).
\newblock Tracking the risk of a deployed model and detecting harmful distribution shifts.
\newblock {\em arXiv preprint arXiv:2110.06177}.

\bibitem[Podkopaev and Ramdas, 2023]{podkopaev2023sequential}
Podkopaev, A. and Ramdas, A. (2023).
\newblock Sequential predictive two-sample and independence testing.
\newblock {\em Advances in neural information processing systems}, 36:53275--53307.

\bibitem[Prinster et~al., 2022]{prinster2022jaws}
Prinster, D., Liu, A., and Saria, S. (2022).
\newblock Jaws: Auditing predictive uncertainty under covariate shift.
\newblock {\em Advances in Neural Information Processing Systems}, 35:35907--35920.

\bibitem[Prinster et~al., 2023]{prinster2023jaws}
Prinster, D., Saria, S., and Liu, A. (2023).
\newblock Jaws-x: addressing efficiency bottlenecks of conformal prediction under standard and feedback covariate shift.
\newblock In {\em International Conference on Machine Learning}, pages 28167--28190. PMLR.

\bibitem[Prinster et~al., 2024]{prinster2024conformal}
Prinster, D., Stanton, S.~D., Liu, A., and Saria, S. (2024).
\newblock Conformal validity guarantees exist for any data distribution (and how to find them).
\newblock {\em Forty-first International Conference on Machine Learning}.

\bibitem[Ramdas et~al., 2023]{ramdas2023game}
Ramdas, A., Gr{\"u}nwald, P., Vovk, V., and Shafer, G. (2023).
\newblock Game-theoretic statistics and safe anytime-valid inference.
\newblock {\em Statistical Science}, 38(4):576--601.

\bibitem[Ramdas et~al., 2022]{ramdas2022testing}
Ramdas, A., Ruf, J., Larsson, M., and Koolen, W.~M. (2022).
\newblock Testing exchangeability: Fork-convexity, supermartingales and e-processes.
\newblock {\em International Journal of Approximate Reasoning}, 141:83--109.

\bibitem[Roberts, 1966]{roberts1966comparison}
Roberts, S. (1966).
\newblock A comparison of some control chart procedures.
\newblock {\em Technometrics}, 8(3):411--430.

\bibitem[Romano et~al., 2020]{romano2020classification}
Romano, Y., Sesia, M., and Candes, E. (2020).
\newblock Classification with valid and adaptive coverage.
\newblock {\em Advances in neural information processing systems}, 33:3581--3591.

\bibitem[Saha and Ramdas, 2024]{saha2024testing}
Saha, A. and Ramdas, A. (2024).
\newblock Testing exchangeability by pairwise betting.
\newblock In {\em International Conference on Artificial Intelligence and Statistics}, pages 4915--4923. PMLR.

\bibitem[Shaer et~al., 2023]{shaer2023model}
Shaer, S., Maman, G., and Romano, Y. (2023).
\newblock Model-x sequential testing for conditional independence via testing by betting.
\newblock In {\em International Conference on Artificial Intelligence and Statistics}, pages 2054--2086. PMLR.

\bibitem[Shafer, 2021]{shafer2021testing}
Shafer, G. (2021).
\newblock Testing by betting: A strategy for statistical and scientific communication.
\newblock {\em Journal of the Royal Statistical Society Series A: Statistics in Society}, 184(2):407--431.

\bibitem[Shafer and Vovk, 2019]{shafer2019game}
Shafer, G. and Vovk, V. (2019).
\newblock {\em Game-theoretic foundations for probability and finance}, volume 455.
\newblock John Wiley \& Sons.

\bibitem[Shekhar and Ramdas, 2023a]{shekhar2023nonparametric}
Shekhar, S. and Ramdas, A. (2023a).
\newblock Nonparametric two-sample testing by betting.
\newblock {\em IEEE Transactions on Information Theory}, 70(2):1178--1203.

\bibitem[Shekhar and Ramdas, 2023b]{shekhar2023reducing}
Shekhar, S. and Ramdas, A. (2023b).
\newblock Reducing sequential change detection to sequential estimation.
\newblock {\em arXiv preprint arXiv:2309.09111}.

\bibitem[Shekhar and Ramdas, 2023c]{shekhar2023sequential}
Shekhar, S. and Ramdas, A. (2023c).
\newblock Sequential changepoint detection via backward confidence sequences.
\newblock In {\em International Conference on Machine Learning}, pages 30908--30930. PMLR.

\bibitem[Shimodaira, 2000]{shimodaira2000improving}
Shimodaira, H. (2000).
\newblock Improving predictive inference under covariate shift by weighting the log-likelihood function.
\newblock {\em Journal of statistical planning and inference}, 90(2):227--244.

\bibitem[Shin et~al., 2022]{shin2022detectors}
Shin, J., Ramdas, A., and Rinaldo, A. (2022).
\newblock E-detectors: A nonparametric framework for sequential change detection.
\newblock {\em arXiv preprint arXiv:2203.03532}.

\bibitem[Shiryaev, 1963]{shiryaev1963optimum}
Shiryaev, A.~N. (1963).
\newblock On optimum methods in quickest detection problems.
\newblock {\em Theory of Probability \& Its Applications}, 8(1):22--46.

\bibitem[Shiryaev, 2016]{shiryaev2016probability}
Shiryaev, A.~N. (2016).
\newblock {\em Probability-1}, volume~95.
\newblock Springer.

\bibitem[Stanton et~al., 2023]{stanton2023bayesian}
Stanton, S., Maddox, W., and Wilson, A.~G. (2023).
\newblock Bayesian optimization with conformal prediction sets.
\newblock In {\em International Conference on Artificial Intelligence and Statistics}, pages 959--986. PMLR.

\bibitem[Sugiyama et~al., 2007]{sugiyama2007covariate}
Sugiyama, M., Krauledat, M., and M{\"u}ller, K.-R. (2007).
\newblock Covariate shift adaptation by importance weighted cross validation.
\newblock {\em Journal of Machine Learning Research}, 8(5).

\bibitem[Tartakovsky et~al., 2014]{tartakovsky2014sequential}
Tartakovsky, A., Nikiforov, I., and Basseville, M. (2014).
\newblock {\em Sequential analysis: Hypothesis testing and changepoint detection}.
\newblock CRC press.

\bibitem[Teneggi and Sulam, 2024]{teneggi2024testing}
Teneggi, J. and Sulam, J. (2024).
\newblock Testing semantic importance via betting.
\newblock {\em Advances in Neural Information Processing Systems}, 37:76450--76499.

\bibitem[Tibshirani et~al., 2019]{tibshirani2019conformal}
Tibshirani, R.~J., Foygel~Barber, R., Candes, E., and Ramdas, A. (2019).
\newblock Conformal prediction under covariate shift.
\newblock {\em Advances in neural information processing systems}, 32.

\bibitem[Veeravalli and Banerjee, 2014]{veeravalli2014quickest}
Veeravalli, V.~V. and Banerjee, T. (2014).
\newblock Quickest change detection.
\newblock In {\em Academic press library in signal processing}, volume~3, pages 209--255. Elsevier.

\bibitem[Ville, 1939]{ville1939etude}
Ville, J. (1939).
\newblock {\em Etude critique de la notion de collectif}.
\newblock Gauthier-Villars Paris.

\bibitem[Volkhonskiy et~al., 2017]{volkhonskiy2017inductive}
Volkhonskiy, D., Burnaev, E., Nouretdinov, I., Gammerman, A., and Vovk, V. (2017).
\newblock Inductive conformal martingales for change-point detection.
\newblock In {\em Conformal and Probabilistic Prediction and Applications}, pages 132--153. PMLR.

\bibitem[Vovk, 2002]{vovk2002line}
Vovk, V. (2002).
\newblock On-line confidence machines are well-calibrated.
\newblock In {\em The 43rd Annual IEEE Symposium on Foundations of Computer Science, 2002. Proceedings.}, pages 187--196. IEEE.

\bibitem[Vovk, 2003]{vovk2003well}
Vovk, V. (2003).
\newblock Well-calibrated predictions from online compression models.
\newblock In {\em Algorithmic Learning Theory: 14th International Conference, ALT 2003, Sapporo, Japan, October 17-19, 2003. Proceedings 14}, pages 268--282. Springer.

\bibitem[Vovk, 2020]{vovk2020testing}
Vovk, V. (2020).
\newblock Testing for concept shift online.
\newblock {\em arXiv preprint arXiv:2012.14246}.

\bibitem[Vovk, 2021]{vovk2021testing}
Vovk, V. (2021).
\newblock Testing randomness online.
\newblock {\em Statistical Science}, 36(4):595--611.

\bibitem[Vovk, 2023]{vovk2023power}
Vovk, V. (2023).
\newblock The power of forgetting in statistical hypothesis testing.
\newblock In {\em Conformal and Probabilistic Prediction with Applications}, pages 347--366. PMLR.

\bibitem[Vovk et~al., 2005]{vovk2005algorithmic}
Vovk, V., Gammerman, A., and Shafer, G. (2005).
\newblock {\em Algorithmic learning in a random world}, volume~29.
\newblock Springer.

\bibitem[Vovk et~al., 2022]{vovk2022algorithmic}
Vovk, V., Gammerman, A., and Shafer, G. (2022).
\newblock {\em Algorithmic Learning in a Random World}.
\newblock Springer Nature.

\bibitem[Vovk et~al., 2003]{vovk2003testing}
Vovk, V., Nouretdinov, I., and Gammerman, A. (2003).
\newblock Testing exchangeability on-line.
\newblock In {\em Proceedings of the 20th international conference on machine learning (ICML-03)}, pages 768--775.

\bibitem[Vovk et~al., 2021]{vovk2021retrain}
Vovk, V., Petej, I., Nouretdinov, I., Ahlberg, E., Carlsson, L., and Gammerman, A. (2021).
\newblock Retrain or not retrain: Conformal test martingales for change-point detection.
\newblock In {\em Conformal and Probabilistic Prediction and Applications}, pages 191--210. PMLR.

\bibitem[Vovk and Wang, 2021]{vovk2021values}
Vovk, V. and Wang, R. (2021).
\newblock E-values: Calibration, combination and applications.
\newblock {\em The Annals of Statistics}, 49(3):1736--1754.

\bibitem[Vovk and Wang, 2023]{vovk2023confidence}
Vovk, V. and Wang, R. (2023).
\newblock Confidence and discoveries with e-values.
\newblock {\em Statistical Science}, 38(2):329--354.

\bibitem[Wald, 1945]{wald1945sequential}
Wald, A. (1945).
\newblock Sequential tests of statistical hypotheses.
\newblock In {\em Breakthroughs in statistics: Foundations and basic theory}, pages 117--186. Springer.

\bibitem[Wasserman et~al., 2020]{wasserman2020universal}
Wasserman, L., Ramdas, A., and Balakrishnan, S. (2020).
\newblock Universal inference.
\newblock {\em Proceedings of the National Academy of Sciences}, 117(29):16880--16890.

\bibitem[Waudby-Smith and Ramdas, 2024]{waudby2024estimating}
Waudby-Smith, I. and Ramdas, A. (2024).
\newblock Estimating means of bounded random variables by betting.
\newblock {\em Journal of the Royal Statistical Society Series B: Statistical Methodology}, 86(1):1--27.

\bibitem[Xie et~al., 2021]{xie2021sequential}
Xie, L., Zou, S., Xie, Y., and Veeravalli, V.~V. (2021).
\newblock Sequential (quickest) change detection: Classical results and new directions.
\newblock {\em IEEE Journal on Selected Areas in Information Theory}, 2(2):494--514.

\bibitem[Xu and Xie, 2021]{xu2021conformal}
Xu, C. and Xie, Y. (2021).
\newblock Conformal prediction interval for dynamic time-series.
\newblock In {\em International Conference on Machine Learning}, pages 11559--11569. PMLR.

\bibitem[Yang et~al., 2024]{yang2024doubly}
Yang, Y., Kuchibhotla, A.~K., and Tchetgen~Tchetgen, E. (2024).
\newblock Doubly robust calibration of prediction sets under covariate shift.
\newblock {\em Journal of the Royal Statistical Society Series B: Statistical Methodology}, page qkae009.

\bibitem[Zhang et~al., 2024]{zhang2024adapting}
Zhang, Y.-J., Zhang, Z.-Y., Zhao, P., and Sugiyama, M. (2024).
\newblock Adapting to continuous covariate shift via online density ratio estimation.
\newblock {\em Advances in Neural Information Processing Systems}, 36.

\end{thebibliography}

\newpage
\appendix
\onecolumn

\begin{center}
\textbf{\Large Appendix for \\ \vspace{.2em} 
``WATCH: Adaptive Monitoring for AI Deployments \\via Weighted-Conformal Martingales''}
\end{center}

\section{Related Works}
\label{sec:related_work}

This work is motivated by developing methods for monitoring AI/ML deployments that perform three key functions: (1) online adaptation to mild or benign data shifts; (2) rapid detection of extreme or harmful shifts that necessitate updates; and (3) identifying the root-cause of degradation to inform appropriate recovery. 
% We thus focus our discussion of related work through these aims, beginning with the second one, as it is arguably the core objective for monitoring and where most related work has focused. 
Although these monitoring goals could be viewed from many perspectives, in discussing related work we primarily focus on methods in anytime-valid inference, sequential testing of nonparametric null hypotheses, and especially conformal test martingales, which most closely related to our own.

\textbf{Sequential testing for changepoints in the data distribution:} Sequential hypothesis testing to detect changes in the data distribution is an old and widely-studied problem, dating at least to Wald's sequential probability ratio test \citep{wald1945sequential}. However, classic sequential changepoint detection methods often required a prespecified stopping time and were primarily designed for testing simple, often parametrically-specified null hypotheses---for an overview of classic parametric methods, we defer to a relevant textbook \citet{tartakovsky2014sequential} and review articles \citep{veeravalli2014quickest, xie2021sequential}. In contrast, our weighted-conformal test martingale (WCTM) methods belong to a more recent literature called sequential anytime-valid inference (SAVI), which allow
for arbitrary stopping times, and specifically our work is situated in the recent SAVI literature on testing composite and nonparametric null hypotheses. We refer readers to  \citet{ramdas2023game} for a recent review of the SAVI literature, as well as to the textbook by \citet{shafer2019game} on the closely related topic of testing-by-betting.

\textbf{Standard conformal test martingales:} Within the SAVI literature, our work is most closely related to conformal test martingales (CTMs), which are martingales constructed from a sequence of standard conformal $p$-values for continually testing the assumption that the data are exchangeability or independent and identically distributed (IID). Our WCTM methods generalize standard CTMs for testing a broader range of null hypotheses, including those where one wishes to accommodate or adapt to certain anticipated changes in the data distribution (e.g., adapting to mild covariate shifts). Standard CTMs were initially introduced by \citet{vovk2003testing}, while drawing on theory for the calibration of online conformal prediction methods developed in \citet{vovk2002line}. Since then, various works have further developed standard CTMs, such as by introducing new betting and score functions for more efficient changepoint detection, ensembling CTMs, and demonstrating their performance on real-world applications \citep{ho2005martingale, fedorova2012plug, volkhonskiy2017inductive, ho2019martingale, vovk2021retrain, vovk2021testing, eliades2022betting, eliades2023conformal}. CTMs are also discussed in textbooks on conformal prediction \citep{vovk2005algorithmic, angelopoulos2024theoretical}. The online form of conformal prediction that CTMs are based on is one form of online compression model, as defined and discussed in \citet{vovk2003well, vovk2005algorithmic, vovk2023power}.

In particular, \citet{volkhonskiy2017inductive} developed inductive CTMs (i.e., CTMs based on inductive or split conformal \citep{papadopoulos2002inductive, papadopoulos2008inductive}) with score and betting functions specifically taylored to fast changepoint detection; \citet{vovk2021retrain} proposed an approach to using standard CTMs to monitor for when a deployed AI/ML system should be retrained; and \citet{vovk2021testing} provided a comprehensive and detailed review of CTMs as methods for testing the IID or exchangeability assumptions. \citet{bar2024protected} implement methods based on CTMs for test time-adaptation of a classifier's point prediction and \citet{hindy2024diagnostic} leverage multiple CTMs over different feature spaces to aid in diagnostic runtime monitoring.
Several works have developed or used CTMs for testing for concept shift (i.e., shift in $Y\mid X$) \citep{ho2005martingale, eliades2022betting, eliades2023conformal, vovk2020testing}, but all of these have focused on a classification setting with a limited number of classes (e.g., to implement label-conditional CTMs, or using standard CTMs that lack ability to disambiguate between covariate and concept shifts). In contrast, our specific WCTMs implemented in this paper are able to test for concept shift in both regression and classification settings while also being able to disambiguate between concept shifts and extreme covariate shifts with the aid of an additional $X$-CTM. 

\textbf{Confidence sequences, $e$-processes, and other multiple-hypothesis testing methods:} Our work also relates more broadly to other SAVI and testing-by-betting methods outside of CTMs such as confidence sequences and $e$-processes---we refer to the review paper of \citet{ramdas2023game} for full exposition of these topics. In particular, in the main paper we empirically compared our WCTM methods against the betting-based sequential testing and changepoint detection methods in \citet{podkopaev2021tracking} (the theory for which was developed in \citet{waudby2024estimating}) because this paper is the closest to ours in its motivation of monitoring deployed AI/ML deployments among (non-CTM) SAVI papers. As we note in the main paper, for monitoring the risk of a set-valued (e.g., conformalized) AI/ML predictor, our WCTM methods are generally more data-efficient (not requiring separate datasets for conformalization and testing, as \citet{podkopaev2021tracking} does); our WCTM-based Shiryaev-Roberts procedure is more computationally efficient than the compareable changepoint detection method in \citet{podkopaev2021tracking} (i.e., $\mathcal{O}(t)$ versus $\mathcal{O}(t^2)$); and, empirically we found that our methods often detect harmful concept shifts faster than comparable methods in \citet{podkopaev2021tracking}. More recently, \citet{i2024sequential} built on \citet{podkopaev2021tracking} by developing similar methods for when ground-truth labels do not become  available and need to be estimated.

A key target of the SAVI literature on testing composite or nonparametric null hypotheses has been developing new approaches for testing the IID or exchangeability assumptions. Other than CTMs, the other main nontrivial approach to sequentially testing exchangeability was developed in \citet{ramdas2022testing} based on the ideas of universal inference \citep{wasserman2020universal}; more recently, other approaches have emerged based on pairwise betting \citep{saha2024testing} and sequential Monte Carlo testing \citep{fischer2025sequential}, the latter of which can be viewed as a special case of CTMs \citep{vovk2021testing} streamlined for a particular alternative hypothesis. Otherwise, there are various and proliferating other methods for sequential nonparametric changepoint detection (e.g., \citet{shin2022detectors, shekhar2023nonparametric, shekhar2023reducing, shekhar2023sequential, podkopaev2023sequential}). SAVI and testing-by betting methods are also being leveraged for a wide variety of applications including interpretability \citep{teneggi2024testing}, conditional independence testing \citep{shaer2023model}, applications in finance \citep{shafer2019game}, and more.

Testing-by-betting, which is fundamental to SAVI \citep{ramdas2023game}, can be understood as one approach to multiple hypothesis testing that is especially advantageous in sequential settings. Other related works that leverage conformal prediction for hypothesis testing while accounting for multiple-testing corrections, but in batch settings, include \citet{bates2023testing, bashari2023derandomized, vovk2023confidence, gauthier2025values, lee2025selection}.

\textbf{Weighted conformal prediction and adapting to distribution shifts:} Weighted conformal prediction (e.g., \citet{tibshirani2019conformal, podkopaev2021distribution, xu2021conformal, fannjiang2022conformal, prinster2022jaws, prinster2023jaws, stanton2023bayesian, barber2023conformal, farinhas2023non, nair2023randomization, yang2024doubly, feldman2024robust, barber2025unifying}) is broadly an approach to proactively adapting the validity of conformal predictive sets to distribution shift by reweighting nonconformity scores using either knowledge or estimates of the distribution shift. Any weighted CP prediction set is associated with a weighted-conformal $p$-values, as described in the main paper; accordingly, a WCTM can be constructed on top of any weighted CP method deployed on a sequence of data observations to continually monitor the assumptions or approximations underlying that WCP method's implementation. There are of course many other approaches to adapting to distribution shifts at test time; for example, one work that is similar to ours with regard to this motivation, but that does not fall under weighted CP, is \citet{bar2024protected}, which leverages standard CTMs to guide the test-time adaptation of a classifier's point prediction by entropy matching.

\section{Proof of Theorems}
\label{sec:proof}

\subsection{Proof for Theorem \ref{thm:general_weighted_p_validity} (Weighted Conformal $p$-value Validity)}
\label{subsec:p_val_validity_pf}

% We first further contextualize the proof relative to the most closely related work. 

The proof can be viewed as a generalization of the proofs for Theorem 1 in \citet{vovk2003testing} and Theorem 2 in \citet{vovk2002line}, while drawing on analysis from \citet{tibshirani2019conformal} and exposition and discussion from \citet{prinster2024conformal}. 
The key difference relative to \citet{vovk2002line} and \citet{vovk2003testing} is that, whereas those papers use the assumption of exchangeability to place equal weight on every permutation of the data observations, here we avoid this assumption by first proving a general result for an arbitrary (potentially non-exchangeable) joint distribution. We then describe how this implies that the validity of more specific and tractable methods is premised on the assumptions or approximations used for practical implementation.

The basic idea for the proof begins with setup from \citet{vovk2002line}, for ``reversing time.'' In particular, we imagine that the sequence of data observations $(z_1, ..., z_T)$ is generated in two steps: first, the unordered bag or multiset of data observations, $\{z_1, ..., z_T\}$, is generated from some probability distribution (that is, the image of $F_Z$ under the mapping $(z_1, z_2, ...) \rightarrow \{z_1, ..., z_T\}$); then---here generalizing \citet{vovk2002line} and \citet{vovk2003testing} by weighting permutations according to their likelihood---from all possible permutations $\sigma$ of the values $\{z_1, ..., z_T\}$, each possible sequence $(z_{\sigma(1)}, ..., z_{\sigma(T)})$ is chosen with probability proportional to $f(z_{\sigma(1)}, ..., z_{\sigma(T)})$, where $f:=f_Z$ is the probability-density function\footnote{More generally, the Radon–Nikodym derivative.} for the distribution $F_Z$. Roughly (ignoring borderline effects), the second step ensures that, conditionally on knowing $\{z_1, ..., z_T\}$ (and therefore unconditionally), that $P_{T}^{\tilde{w}^o}$ has a standard uniform distribution; when $Z_T=z_{\sigma(T)}$ is observed, this settles the value of $P_{T}^{\tilde{w}^o}=p_{\sigma(T)}^{\tilde{w}^o}$, and conditionally on knowing $\{z_1, ..., z_T\}$ and $Z_T=z_{\sigma(T)}$ (and therefore, after relabeling indices, on knowing $\{z_1, ..., z_{T-1}\}$), that $P_{T-1}^{\tilde{w}^o}$ also has a standard uniform distribution, and so on.

% The general result will not be tractable in practice due to requiring oracle knowledge of the relative likelihood of observing any permutation of the data and due to computational complexity, but when assumptions or approximations on the general result are made, this enables valid hypothesis testing of those very assumptions and approximations. Thus, the proof is similar in style to \citet{prinster2024conformal}; in contrast, however, \citet{prinster2024conformal} and \citet{tibshirani2019conformal} focus on valid predictive confidence sets with conformal prediction, without explicitly developing implications for hypothesis testing,

\begin{lemma}\label{lemma1}
    For any trial $t$ and any confidence level $\alpha \in (0, 1)$,
    \begin{align}\label{eq:lemma1}
        \mathbb{P}\{P_{t}^{\tilde{w}^o} \leq \alpha \mid E_{z}^{(t)} \} = \alpha.
    \end{align}
\end{lemma}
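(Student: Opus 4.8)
\textbf{Proof plan for Lemma~\ref{lemma1}.} The plan is to condition on the observed bag $E_z^{(t)}=\big\{\{Z_1,\dots,Z_t\}=\{z_1,\dots,z_t\}\big\}$ and show that, after this conditioning, the randomized weighted-conformal $p$-value $P_t^{\tilde{w}^o}$ is \emph{exactly} standard uniform, which immediately gives $\mathbb{P}\{P_t^{\tilde{w}^o}\le\alpha\mid E_z^{(t)}\}=\alpha$. There are two ingredients: (i) identifying the conditional law of the score $V_t$ of the last-observed point, and (ii) a short deterministic ``weighted-rank'' calculation showing that a weighted rank with independent uniform tie-breaking is uniform on $[0,1]$.

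For ingredient (i), I would invoke the two-step (``reversing time'') generative description recalled just above the lemma: the unordered bag is drawn first, and then, conditionally on the bag, the realized ordering $(z_{\sigma(1)},\dots,z_{\sigma(t)})$ is selected with probability proportional to $f(z_{\sigma(1)},\dots,z_{\sigma(t)})$. This description is consistent with $F_Z$ by the definition of conditional density (disintegration over the bag $\sigma$-algebra), so marginalizing over the positions of all but the last coordinate yields $\mathbb{P}\{Z_t=z_i\mid E_z^{(t)}\}=\tilde{w}_i^o$, where $\tilde{w}_i^o$ is precisely the oracle weight of Eq.~\eqref{eq:general_weights} (with $n+1$ replaced by $t$). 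Hence $V_t=v_i:=\widehat{\mathcal{S}}(z_i)$ with conditional probability $\tilde{w}_i^o$. Two further observations: the weights $\tilde{w}^o$ appearing \emph{inside} $P_t^{\tilde{w}^o}$ are measurable functions of the bag alone, hence fixed given $E_z^{(t)}$; and $U_t\sim\text{Unif}[0,1]$ is independent of the data, hence of $E_z^{(t)}$ and of $V_t$.

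For ingredient (ii), I would collapse ties: let $s_1<\dots<s_m$ be the distinct values among $v_1,\dots,v_t$ and set $W_k:=\sum_{j:\,v_j=s_k}\tilde{w}_j^o$, so $\sum_{k}W_k=1$. On the event $\{V_t=s_k\}$, which has conditional probability $W_k$, the definition \eqref{eq:weighted_p_vals_arbitrary_def} gives
\[
P_t^{\tilde{w}^o}\;=\;\sum_{\ell>k}W_\ell\;+\;U_t\,W_k ,
\]
so that, conditionally on $V_t=s_k$, the variable $P_t^{\tilde{w}^o}$ is uniform on the interval $\big(\sum_{\ell>k}W_\ell,\ \sum_{\ell\ge k}W_\ell\big]$ of length $W_k$. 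As $k$ ranges over $\{1,\dots,m\}$ these half-open intervals partition $(0,1]$, so mixing the $m$ conditional laws with mixing weights $W_k$ produces a density equal to $W_k\cdot W_k^{-1}=1$ on each piece; that is, $P_t^{\tilde{w}^o}\mid E_z^{(t)}\sim\text{Unif}[0,1]$. Evaluating the resulting CDF at $\alpha$ yields Eq.~\eqref{eq:lemma1}. (Specializing to $\tilde{w}_i^o\equiv 1/t$ recovers the unweighted statement.)

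The hard part is ingredient (i): making the ``reversing time'' conditioning rigorous --- i.e., justifying the two-step generative recipe as a genuine disintegration of $F_Z$ over the bag, handling the measure-zero conditioning events via densities / Radon--Nikodym derivatives (as footnoted in the main text), and verifying that the oracle weights are exactly the conditional \emph{position} probabilities (not something coarser) in the possible presence of repeated values among the $z_i$. Ingredient (ii) is then a routine computation. I would close by noting that Lemma~\ref{lemma1} is the base case for Theorem~\ref{thm:general_weighted_p_validity}: further conditioning on $Z_t=z_{\sigma(t)}$ reduces the problem to a bag of size $t-1$, so applying the same argument recursively (relabeling indices at each step) and using the independence of the $U_t$'s yields the joint uniformity and independence of $P_1^{\tilde{w}^o},\dots,P_T^{\tilde{w}^o}$.
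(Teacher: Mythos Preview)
Your proposal is correct and shares the same ingredient (i) with the paper: both condition on the bag $E_z^{(t)}$, invoke the two-step ``reversing time'' description, and identify the conditional law of $V_t$ via the oracle weights. For ingredient (ii), however, you take a different route. The paper follows the classical \citet{vovk2002line} argument: for a fixed $\alpha$, it introduces conservative and anticonservative $p$-values $p_i^{\tilde{w}^o+},p_i^{\tilde{w}^o-}$, partitions indices into ``strange'' ($p_i^{\tilde{w}^o+}\le\alpha$), ``ordinary'' ($p_i^{\tilde{w}^o-}>\alpha$), and ``borderline'' ($p_i^{\tilde{w}^o-}\le\alpha<p_i^{\tilde{w}^o+}$), and then sums the contributions to $\mathbb{P}\{P_t^{\tilde{w}^o}\le\alpha\mid E_z^{(t)}\}$ case by case. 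Your argument instead establishes full distributional uniformity directly: you collapse ties to distinct score levels with aggregated weights $W_k$, observe that on $\{V_t=s_k\}$ the randomized $p$-value is uniform on an interval of length $W_k$, and note that these intervals tile $(0,1]$. Your version is a bit more transparent (it yields $\text{Unif}[0,1]$ in one stroke rather than verifying the CDF pointwise) and sidesteps the paper's appeal to bijectivity of $\widehat{\mathcal{S}}$; the paper's version has the advantage of tracking the original Vovk proof more closely, which may ease comparison for readers familiar with that literature. Your closing remark about the inductive step is also accurate and matches the paper's strategy for Theorem~\ref{thm:general_weighted_p_validity}.
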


\begin{proof}[Proof of Lemma \ref{lemma1}]
    We begin by conditioning on the event $\{Z_1, ..., Z_t\}=\{z_1, ..., z_t\}$, which we denote as $E_{z}^{(t)}$, and we consider drawing any particular ordering or permutation $\sigma$ of the data values with probability according to $f$, that is with probability proportional to $f(z_{\sigma(1)}, ..., z_{\sigma(t)})$. 
    %Thus, once an ordering or permutation of the data values $\sigma$ is realized, this will imply that the random variable $P_{t}^{\tilde{w}^o}$ has taken on the value $p_{\sigma(t)}^{\tilde{w}^o}$. 

    Note that for any $i\in [t]$, if a permutation is drawn such that $\sigma(t)=i$, this means that $Z_t=z_{\sigma(t)}=z_i$; moreover, because the score function $\widehat{\mathcal{S}}$ is bijective, this further implies that $V_t=v_{\sigma(t)}=v_i$. Thus, given the bag of data $E_z^{(t)}$, recall that for each $i\in [t]$, the probability of drawing such a permutation is given by the ``oracle weights''
    \begin{align}\label{eq:general_weights_app}
    \tilde{w}^o_i := \mathbb{P}\{V_{t} = v_i \mid E_{z}^{(t)}\} =
    \mathbb{P}\{Z_{t} = z_i \mid E_{z}^{(t)}\} = \frac{\sum_{\sigma:\sigma(t)=i}f(z_{\sigma(1)}, ..., z_{\sigma(t)})}{\sum_{\sigma}f(z_{\sigma(1)}, ..., z_{\sigma(t)})},
    \end{align}
    which we assume to be well-defined, which will generally be the case in practice, where at least $f(z_1, ..., z_T)>0$ for the true (identity permutation) ordering of the data observations $(z_1, ..., z_T)$.

    This implies that the distribution of $V_t\mid E_{z}^{(t)}$, the conditional distribution of the test-point score given the bag of data values $E_{z}^{(t)}$, is given by
    \begin{align*}
        V_t \mid E_{z}^{(t)} \sim \sum_{i=1}^t\tilde{w}^o_i\cdot \delta_{v_i}.
    \end{align*}
    
    For any $i\in[t]$, define a conservative WCP $p$-value, $p_i^{\tilde{w}^o+}$, and an anticonservative WCP $p$-value, $p_i^{\tilde{w}^o-}$, as
    \begin{align*}
        p_i^{\tilde{w}^o+} & := \sum_{j=1}^t\tilde{w}^o_j\cdot \mathbbm{1}\{v_j\geq v_i\} \\
        p_i^{\tilde{w}^o-} & := \sum_{j=1}^t\tilde{w}^o_j\cdot \mathbbm{1}\{v_j> v_i\}.
    \end{align*}
    It is worth noting that $p_t^{\tilde{w}^o+}$ is a valid $p$-value for $f$: $\mathbb{P}\{P_t^{\tilde{w}^o+}\leq \alpha \mid E_{z}^{(t)}\} \leq \alpha \implies \mathbb{P}\{P_t^{\tilde{w}^o+}\leq \alpha\} \leq \alpha$. However, the lemma claims \textit{exact} validity for $p_t^{\tilde{w}^o}$ conditional on $E_{z}^{(t)}$, which we will now proceed to show.

    Observe that for all $i\in [t]$, $p_i^{\tilde{w}^o-}<p_i^{\tilde{w}^o+}$ and 
    \begin{align*}
        p_i^{\tilde{w}^o+}-p_i^{\tilde{w}^o-} = \sum_{j=1}^t\tilde{w}^o_j\cdot \mathbbm{1}\{v_j = v_i\}.
    \end{align*}
    Moreover, observe that as in the proof for Lemma 1 in \citet{vovk2002line}, the semi-closed intervals $[p_i^{\tilde{w}^o-}, p_i^{\tilde{w}^o+})$ either coincide or are disjoint, and $\cup_{i=1}^t[p_i^{\tilde{w}^o-}, p_i^{\tilde{w}^o+})=[0,1)$. 

    Similarly as in the proof for Lemma 1 in \citet{vovk2002line}, for an $\alpha\in (0, 1)$, let us partition the indices as follows, where we say that an index $i$ is
    \begin{itemize}
        \item ``strange'' if $p_i^{\tilde{w}^o+}\leq \alpha$,
        \item ``ordinary'' if $p_i^{\tilde{w}^o-} > \alpha$,
        \item and ``borderline'' if $p_i^{\tilde{w}^o-} \leq \alpha < p_i^{\tilde{w}^o+}$.
    \end{itemize}

    Let $i'$ denote the index of any borderline example, and denote $p^{\tilde{w}^o+}:=p_{i'}^{\tilde{w}^o+}$ and $p^{\tilde{w}^o-}:=p_{i'}^{\tilde{w}^o-}$. Then, the probability (conditional on $E_{z}^{(t)}$, drawing each permutation $\sigma$ with probabilities according to $f$) that the last index $\sigma(t)$ is strange is $p^{\tilde{w}^o-}$; the probability that $\sigma(t)$ is ordinary is $1-p^{\tilde{w}^o+}$; and, the probability that $\sigma(t)$ is borderline is $p^{\tilde{w}^o+}-p^{\tilde{w}^o-}$. Moreover, observe that if $\sigma(t)$ is strange, then $p_{\sigma(t)}^{\tilde{w}^o}\leq \alpha$ (by definition) and if $\sigma(t)$ is borderline then the event that $p_{\sigma(t)}^{\tilde{w}^o}\leq \alpha$ is determined by the independent uniform $u_t$, and thus the probability of this event is $\frac{\alpha-p^{\tilde{w}^o-}}{p^{\tilde{w}^o+}-p^{\tilde{w}^o-}}$. That is,
    
    \begin{align*}
        \mathbb{P}\{P_{t}^{\tilde{w}^o} \leq \alpha \mid E_{z}^{(t)} \} = & \ \mathbb{P}\{P_{t}^{\tilde{w}^o} \leq \alpha \mid E_{z}^{(t)} , \ \sigma(t) \text{ is strange} \}\cdot\mathbb{P}\{\sigma(t) \text{ is strange} \mid E_{z}^{(t)}  \} \nonumber \\
        & + \mathbb{P}\{P_{t}^{\tilde{w}^o} \leq \alpha \mid E_{z}^{(t)} , \ \sigma(t) \text{ is ordinary} \}\cdot\mathbb{P}\{\sigma(t) \text{ is ordinary} \mid E_{z}^{(t)}  \} \nonumber \\
        & + \mathbb{P}\{P_{t}^{\tilde{w}^o} \leq \alpha \mid E_{z}^{(t)} , \ \sigma(t) \text{ is borderline} \}\cdot\mathbb{P}\{\sigma(t) \text{ is borderline} \mid E_{z}^{(t)}  \} \\
        = & \ p^{\tilde{w}^o-} + 0 + (p^{\tilde{w}^o+}-p^{\tilde{w}^o-})\cdot\frac{\alpha-p^{\tilde{w}^o-}}{p^{\tilde{w}^o+}-p^{\tilde{w}^o-}} \\
        = & \ \alpha.
    \end{align*}
    % This proves Eq. \eqref{eq:lemma1}.
\end{proof}

With Lemma 1 in hand, we can now proceed with the proof for the main theorem. Temporarily fix a positive integer $T$; following the strategy of \citet{vovk2003testing}, we will first prove by induction that for any $t=1, ..., T$ and any $\alpha_1, ..., \alpha_t\in[0,1]^t$, that
\begin{align}\label{eq:conditional_iid_bernoulli}
    \mathbb{P}\{P_t^{\tilde{w}^o} \leq \alpha_t, ..., P_1^{\tilde{w}^o} \leq \alpha_1 \mid E_{z}^{(t)}\} = \alpha_t \cdots \alpha_1.
\end{align}

For $t=1$, Eq. \eqref{eq:conditional_iid_bernoulli} immediately follows from Lemma 1. For $t>1$, by the law of total probability over $\sigma(t)$ (i.e., over the last index value after drawing a permutation $\sigma$ from $E_z^{(t)}$), the fundamental bridge between probability and expectation, and properties of the indicator function, we have
\begin{align*}
    \mathbb{P}&\{P_t^{\tilde{w}^o} \leq \alpha_t, ..., P_1^{\tilde{w}^o} \leq \alpha_1 \mid E_{z}^{(t)}\} \\
    & = \sum_{\sigma(t)=1}^t\mathbb{P}\{P_t^{\tilde{w}^o} \leq \alpha_t, ..., P_1^{\tilde{w}^o} \leq \alpha_1 \mid E_{z}^{(t)}, Z_t = z_{\sigma(t)}\}\cdot \mathbb{P}\{Z_t = z_{\sigma(t)} \mid E_{z}^{(t)}\}\\
    & = \sum_{\sigma(t)=1}^t\mathbb{E}\big[\mathbbm{1}\{P_t^{\tilde{w}^o} \leq \alpha_t, ..., P_1^{\tilde{w}^o} \leq \alpha_1\} \mid E_{z}^{(t)}, Z_t = z_{\sigma(t)}\big]\cdot \mathbb{P}\{Z_t = z_{\sigma(t)} \mid E_{z}^{(t)}\} \\
    & = \sum_{\sigma(t)=1}^t\mathbb{E}\big[\mathbbm{1}\{P_t^{\tilde{w}^o} \leq \alpha_t\}\cdot\mathbbm{1}\{P_{t-1}^{\tilde{w}^o} \leq \alpha_{t-1}, ..., P_1^{\tilde{w}^o} \leq \alpha_1\} \mid E_{z}^{(t)}, Z_t = z_{\sigma(t)}\big]\cdot \mathbb{P}\{Z_t = z_{\sigma(t)} \mid E_{z}^{(t)}\}
\end{align*}
Next, observe that conditioning on $E_{z}^{(t)}$ and $Z_t = z_{\sigma(t)}$ in the expectation term settles the value of $\mathbbm{1}\{P_t^{\tilde{w}^o} \leq \alpha_t\}$ to be $\mathbbm{1}\{p_{\sigma(t)}^{\tilde{w}^o} \leq \alpha_t\}$; that is, noting that the score function $\widehat{\mathcal{S}}$ is bijective, $\{E_{z}^{(t)}, Z_t = z_{\sigma(t)}\}\implies \{E_{z}^{(t)}, V_t = v_{\sigma(t)}\} \implies P_{t}^{\tilde{w}^o}=p_{\sigma(t)}^{\tilde{w}^o}$. So, $\mathbbm{1}\{P_t^{\tilde{w}^o} \leq \alpha_t\}=\mathbbm{1}\{p_{\sigma(t)}^{\tilde{w}^o} \leq \alpha_t\}$ can be pulled out from the expectation to obtain
\begin{align*}
    \mathbb{P}& \{P_t^{\tilde{w}^o} \leq \alpha_t, ..., P_1^{\tilde{w}^o} \leq \alpha_1 \mid E_{z}^{(t)}\} \\
    & = \sum_{\sigma(t)=1}^t\mathbbm{1}\{p_{\sigma(t)}^{\tilde{w}^o} \leq \alpha_t\}\cdot\mathbb{E}\big[\mathbbm{1}\{P_{t-1}^{\tilde{w}^o} \leq \alpha_{t-1}, ..., P_1^{\tilde{w}^o} \leq \alpha_1\} \mid E_{z}^{(t)}, Z_t = z_{\sigma(t)}\big]\cdot \mathbb{P}\{Z_t = z_{\sigma(t)} \mid E_{z}^{(t)}\}.
\end{align*}

Now, observe that conditioning on $E_{z}^{(t)}$ and $Z_t = z_{\sigma(t)}$ implies that $\{Z_1, ..., Z_{t-1}\}=\{z_1, ..., z_t\}\backslash \{z_{\sigma(t)}\}$. That is, observing the bag of $t$ data values (i.e., observing $E_{z}^{(t)}$) and the value taken on by the $t$-th random variable ($Z_t = z_{\sigma(t)}$) implies that each of $Z_1, ..., Z_{t-1}$ takes a value in $\{z_1, ..., z_t\}\backslash \{z_{\sigma(t)}\}$. 

Without loss of generality, we can relabel the indices on the data values as $\{z_1, ..., z_{t-1}\}\leftarrow \{z_1, ..., z_{t}\}\backslash \{z_{\sigma(t)}\}$, and so we denote this event $\{Z_1, ..., Z_{t-1}\}=\{z_1, ..., z_{t-1}\}$ as $E_{z}^{(t-1)}$. The WCP $p$-variable $P_{t-1}^{\tilde{w}^o}$ is determined by drawing a sequence $z_{\sigma(1)}, ..., z_{\sigma(t-1)}$ (i.e., determined by a permutation $\sigma:[t-1]\rightarrow [t-1]$) from the bag of data values $\{z_1, ..., z_{t-1}\}$ with probability proportional to $f_{Z_1,...,Z_{t-1}}(z_{\sigma(1)}, ..., z_{\sigma(t-1)})$ and then applying the the nonconformity score function $\mathcal{S}$,  and so $E_{z}^{(t-1)}$ is a ``sufficient statistic'' for $P_{t-1}^{\tilde{w}^o}$ (for a known $f_{Z_1,...,Z_{t-1}}$).
In other words, we can substitute the conditioning on the event $E_Z^{(t)}$ and $Z_t=z_{\sigma(t)}$ with conditioning on $E_Z^{(t-1)}$:
% With these relabeled indices, note that the values taken on by $Z_1, ..., Z_{t-1}$ are obtained by drawing each permutation $\sigma:[t-1]\rightarrow [t-1]$ from the bag $\{z_1, ..., z_{t-1}\}$ with probability proportional to $f_{Z_1,...,Z_{t-1}}(z_{\sigma(1)}, ..., z_{\sigma(t-1)})$; 

% then, $P_{t-1}^{\tilde{w}^o}$ is determined by observing the sequence $Z_1=z_{\sigma(1)}, ..., Z_{t-1}=z_{\sigma(t-1)}$

% $P_{t-1}^{\tilde{w}^o}, ..., P_{1}^{\tilde{w}^o}$ is 

% Moreover, as the value taken on by $P_{t'}^{\tilde{w}^o}$ is determined by drawing permutations from $\{z_1, ..., z_t\}\backslash \{z_{\sigma(t)}\}$ each with probability proportional to $f_{Z_1, ..., Z_{t-1}}$, 

% is a function of the values taken on by $Z_1, ..., Z_{t'}$, the $P_{t-1}^{\tilde{w}^o}, ..., P_{1}^{\tilde{w}^o}$ do not depend on the value of $Z_t$, so
\begin{align*}
    \mathbb{P}& \{P_t^{\tilde{w}^o} \leq \alpha_t, ..., P_1^{\tilde{w}^o} \leq \alpha_1 \mid E_{z}^{(t)}\}\\
    & = \sum_{\sigma(t)=1}^t\mathbbm{1}\{p_{\sigma(t)}^{\tilde{w}^o} \leq \alpha_t\}\cdot\mathbb{P}\{P_{t-1}^{\tilde{w}^o} \leq \alpha_{t-1}, ..., P_1^{\tilde{w}^o} \leq \alpha_1 \mid E_{z}^{(t-1)}\}\cdot \mathbb{P}\{Z_t = z_{\sigma(t)} \mid E_{z}^{(t)}\}.
    % & = \sum_{i=1}^n\mathbbm{1}\{p_n \leq \alpha_n\}\cdot\mathbb{P}\big(p_{n-1}\leq \alpha_{n-1}, ..., p_1 \leq \alpha_1 \mid E_{z_{1:(n-1)}}\big)\cdot \mathbb{P}\{Z_n = z_i \mid E_z\} \\
\end{align*}

Using the inductive assumption and Lemma \ref{lemma1}, this becomes
\begin{align*}
    \mathbb{P}\{P_t^{\tilde{w}^o} \leq \alpha_t, ..., P_1^{\tilde{w}^o} \leq \alpha_1 \mid E_{z}^{(t)}\}& = \sum_{\sigma(t)=1}^t\mathbbm{1}\{p_{\sigma(t)}^{\tilde{w}^o}\leq \alpha_t\}\cdot\alpha_{t-1}\cdots\alpha_{1}\cdot \mathbb{P}\{Z_t = z_{\sigma(t)} \mid E_{z}^{(t)}\}\nonumber \\
    & = \mathbb{P}\big\{P_t^{\tilde{w}^o}\leq \alpha_t \mid E_{z}^{(t)}\big\}\cdot\alpha_{t-1}\cdots\alpha_{1} \nonumber \\
    & = \alpha_{t}\cdots\alpha_{1},
\end{align*}
which proves Eq. \eqref{eq:conditional_iid_bernoulli}. Note that Eq. \eqref{eq:conditional_iid_bernoulli} is a conditional result for any $t=1, ..., T$; marginalizing over the event $E_{z}^{(t)}$ and taking $t=T$ implies
\begin{align}\label{eq:unconditional_iid_bernoulli}
    \mathbb{P}\{P_T^{\tilde{w}^o} \leq \alpha_T, ..., P_1^{\tilde{w}^o} \leq \alpha_1 \} = \alpha_T \cdots \alpha_1.
\end{align}
We have proven that $P_1^{\tilde{w}^o},P_2^{\tilde{w}^o}, ..., P_T^{\tilde{w}^o}\stackrel{iid}{\sim}\text{Unif}[0, 1]^T$; this implies the analogous result for the infinite sequence, that is, $P_1^{\tilde{w}^o},P_2^{\tilde{w}^o}, ...\stackrel{iid}{\sim}\text{Unif}[0, 1]^{\infty}$ \citep{shiryaev2016probability, vovk2003testing}.

Note that Eq. \eqref{eq:unconditional_iid_bernoulli} holds in theory for an arbitrary joint PDF $f$, but it is an abstract statement because it relies on the oracle weights in Eq. \eqref{eq:general_weights_app}, which are intractable due to requiring knowledge of $f$ and factorial complexity. Thus, if the oracle weights in Eq. \eqref{eq:general_weights_app} are simplified using some assumptions or approximations on $f$ (e.g., conditional independence or invariance assumptions, density-ratio estimations) that we denote by $\mathcal{H}_0(\hat{f})$, and denoting the resulting approximated weights as $\tilde{w}$, then the $P_{t}^{\tilde{w}}$ are IID uniformly distributed, assuming $\mathcal{H}_0(\hat{f})$:
\begin{align}\label{eq:H0(hat(f))_iid_bernoulli_app}
    \mathbb{P}_{\mathcal{H}_0(\hat{f})}\big\{P_T^{\tilde{w}} \leq \alpha_T, ..., P_1^{\tilde{w}} \leq \alpha_1 \big\}= \alpha_T \cdots \alpha_1.
\end{align}

Eq. \eqref{eq:H0(hat(f))_iid_bernoulli_app} implies that $P_1^{\tilde{w}},P_2^{\tilde{w}}, ..., P_T^{\tilde{w}}\mid \mathcal{H}_0(\hat{f})\stackrel{iid}{\sim}\text{Unif}[0, 1]^T$. Similarly as before, this result  for a fixed number of points $T$ implies the corresponding result for infinite sequences \citep{shiryaev2016probability, vovk2003testing}, that is, $P_1^{\tilde{w}},P_2^{\tilde{w}},... \mid \mathcal{H}_0(\hat{f}) \stackrel{iid}{\sim}\text{Unif}[0, 1]^{\infty}$.

% \begin{align}
%     P_{n+1}^{\tilde{w}} \mid \mathcal{H}_0(\hat{f}) \stackrel{iid}{\sim} \text{Unif}[0, 1] \qquad \text{ for all } n=0, 1, ...,
% \end{align}
% and thus 
% \begin{align}
%     \mathbb{P}\big(P_{n+1}^{\tilde{w}} \leq \alpha\mid \mathcal{H}_0(\hat{f})\big) \leq \alpha \qquad \text{ for all } \alpha \in (0, 1),
% \end{align}
% which completes the proof.

\subsection{Proofs for Proposition \ref{thm:wctm_anytime_valid_guarantee} (WCTM Anytime False-Alarm Control) and Proposition \ref{thm:SR-WCTM_ARL_control} (Average-Run-Length Control for WCTM-based Shiryaev-Roberts Procedure)}
\label{subsec:wctm_anytime_pf}

The proofs for Proposition \ref{thm:wctm_anytime_valid_guarantee} and Proposition \ref{thm:SR-WCTM_ARL_control} follow from Theorem \ref{thm:general_weighted_p_validity} and \citet{vovk2021testing}'s results for standard conformal test martingales. 
By Theorem \ref{thm:general_weighted_p_validity}, weighted conformal $p$-values have IID uniform distribution on [0,1] premised on some assumptions $\mathcal{H}_0(\hat{f})$. Thus, a sequence of weighted conformal $p$-values is a sequence of IID random variables drawn from \text{Unif}[0,1], assuming that the null hypothesis $\mathcal{H}_0(\hat{f})$ is true. Because the construction of a weighted conformal test martingale (e.g., Eq. \eqref{eq:ctm_wealth_def}) is only allowed to depend on the data via the sequence of weighted conformal $p$-values, we can draw from \citet{vovk2021testing}'s results that construct betting martingales that only assume a sequence of IID uniform [0,1] random variables. That is, by construction, weighted conformal test martingales are a betting martingale \citep{vovk2021testing} whose validity is premised on the assumptions $\mathcal{H}_0(\hat{f})$ (used to compute the weights). Then, Proposition \ref{thm:wctm_anytime_valid_guarantee} follows by Ville's inequality \citep{ville1939etude} and Proposition \ref{thm:SR-WCTM_ARL_control} follows from Proposition 4.2 in \citet{vovk2021testing}.

\section{Deriving Specific WCTM Algorithms for Main Practical Testing Objective: Testing for \{Concept Shift or Unanticipated Covariate Shift\}}
\label{app:deriving_wctm_algos}

In this section we describe the derivation for the specific WCTM algorithms implemented in this paper for our primary testing objective, of detecting either concept shift (in $Y\mid X$) or unanticipated covariate shift (extreme shift in $X$). Because a WCTM is constructed on top of a corresponding weighted CP method by a betting process on the associated sequence of weighted CP $p$-values, this will initially parallel the procedure described in \citet{prinster2024conformal} for deriving weighted CP methods. The resulting weighted CP method (that the WCTM algorthims are built on top of) will be similar to that used in \citet{tibshirani2019conformal}, except whereas that paper focuses on CP for a shift between two batches of data, we focus on an online monitoring setting where a shift may occur at an unknown post-deployment time and density-ratio weights may be estimated online.

\textit{Step 1: List assumptions/null hypothesis of interest}

We begin by restating the null hypothesis (i.e., the nonparametric assumptions on what aspects of the data distribution are expected to stay invariant versus change) that is of primary practical interest in this paper. That is, as in Eq. \eqref{eq:cs_null} in the main paper, our null hypothesis $\mathcal{H}_0^{(\text{cs})}$ assumes that the conditional label distribution $Y\mid X$ remains invariant and that the marginal $X$ distribution can shift, but that at some post-deployment time $t_{\text{ad}}$ (the time at which to begin adaptation to covariate shift), the covariates $X$ shift in distribution in a manner described by the (estimated) density-ratio function $\widehat{w}(x)$. That is, we want to sequentially test the null hypothesis

\begin{align}\label{appeq:cs_null}
    \mathcal{H}_0^{\text{(cs)}}  :=\begin{cases}
        (X_{i}, Y_{i}) \qquad \ \stackrel{iid}{\sim} \mathbf{F_{Y\mid X}} \times \mathbf{F_{X}} , \ & i \in [n + t_{\text{ad}}-1] \\
        (X_{n+t}, Y_{n+t}) \stackrel{iid}{\sim} \mathbf{F_{Y\mid X}} \times \mathbf{G_{X}^{(\widehat{w})}}, & t\geq t_{\text{ad}},
        % \qquad \qquad \qquad \quad F_{Y \mid X} \in \mathbf{F^{\text{ex}}_{Y \mid X}}, &
    \end{cases}
\end{align}

where boldface is used to denote a set of distributions, and in particular $\mathbf{G_{X}^{(\widehat{w})}}$ denotes the set of distributions  for $X$ with density-ratio function $\widehat{w}(x)$ (with respect to the true but unknown source covariate distribution $F_{X}\in \mathbf{F_{X}}$):
\begin{align*}
    \mathbf{G_{X}^{(\widehat{w})}} := \Big\{G_{X} \ : \ \text{d}G_{X}(x)/\text{d}F_{X}(x) = \widehat{w}(x) \quad \text{if} \quad \text{d}F_{X}(x)>0\Big\}.
\end{align*}

The density-ratio function $\widehat{w}(x)$ can be estimated with offline data and/or continually updated as each test point is observed.

\textit{Step 2: Factorize joint density using null hypothesis assumptions} 

We now use our null hypothesis assumptions $\mathcal{H}_0^{\text{(cs)}}$ to factorize the joint probability density function. See \citet{prinster2024conformal}'s Appendix B.1 for a more detailed derivation for a more general setting.

\begin{align*}
    f(z_1, ..., z_{n+t}) & = f_{X}(x_1) \cdots f_{X}(x_{n+t_{\text{ad}}-1})\cdot g_{X}(x_{n+t_{\text{ad}}}) \cdots g_{X}(x_{n+t})\cdot\prod_{j=1}^{n+t}\Big[f_{Y\mid X}(y_j| x_{j} )\Big]  \nonumber \\
    & = \prod_{j=1}^{n+t_{\text{ad}}-1}\Big[f_{X}(x_j)\Big] \cdot \prod_{j=n+t_{\text{ad}}}^{n+t}\Big[g_{X}(x_j)\Big]\cdot\prod_{j=1}^{n+t}\Big[f_{Y\mid X}(y_j| x_{j} )\Big],
    % & = \underbrace{\prod_{j=1}^{n+t_{\text{ad}}-1}\Big[f_{X}(x_j)\Big] \cdot \prod_{j=n+t_{\text{ad}}}^{n+t}\Big[g_{X}(x_j)\Big]}_{\text{Time-dependent factors (i.e., change before/after $t_{\text{ad}}$})}\cdot\underbrace{\prod_{j=1}^{n+t}\Big[f_{Y\mid X}(y_j| x_{j} )\Big]}_{\text{Time-invariant factor}}.
\end{align*}

multiplying by $1=\frac{\prod_{j=n+t_{\text{ad}}}^{n+t}f_{X}(x_j)}{\prod_{j=n+t_{\text{ad}}}^{n+t}f_{X}(x_j)}$, we have

\begin{align*}
    f(z_1, ..., z_{n+t}) 
    & = \prod_{j=1}^{n+t}\Big[f_{X}(x_j)\Big] \cdot \prod_{j=n+t_{\text{ad}}}^{n+t}\bigg[\frac{g_{X}(x_j)}{f_{X}(x_j)}\bigg]\cdot\prod_{j=1}^{n+t}\Big[f_{Y\mid X}(y_j| x_{j} )\Big],
\end{align*}
and approximating the density-ratio function $\frac{g_{X}(x_j)}{f_{X}(x_j)}$ with our estimate $\widehat{w}(x_j)$, this becomes

\begin{align}\label{eq:factorized_f_cs}
    f(z_1, ..., z_{n+t}) 
    & = \prod_{j=n+t_{\text{ad}}}^{n+t}\widehat{w}(x_j) \cdot\prod_{j=1}^{n+t}\Big[f_{Y\mid X}(y_j| x_{j} )\cdot f_{X}(x_j)\Big] \nonumber \\
    & = \underbrace{\prod_{j=n+t_{\text{ad}}}^{n+t}\widehat{w}(x_j)}_{\text{Time-dependent factors}}\ \cdot \ \underbrace{\prod_{j=1}^{n+t}f_{Z}(z_j)}_{\text{Time-invariant factor}}.
\end{align}

\textit{Step 3: Compute or estimate weights and weighted-conformal $p$-values}

Recalling the definition of the CP weights from the main paper (Eq. \eqref{eq:general_weights}) and plugging in the factorization from Eq. \eqref{eq:factorized_f_cs}:

\begin{align} \label{appeq:mfcs_weights_exact}
    \mathbb{P}\{Z_{n+t} = z_i \mid E_z^{(t)}\} & = \frac{\sum_{\sigma:\sigma(n+t)=i}f(z_{\sigma(1)}, ..., z_{\sigma(n+t)})}{\sum_{\sigma}f(z_{\sigma(1)}, ..., z_{\sigma(n+t)})} \nonumber \\
    & = \frac{\sum_{\sigma:\sigma(n+t)=i}\prod_{j=n+t_{\text{ad}}}^{n+t}\widehat{w}(x_{\sigma(j)})\ \cdot \ \prod_{j=1}^{n+t}f_{Z}(z_{\sigma(j)})}{\sum_{\sigma}\prod_{j=n+t_{\text{ad}}}^{n+t}\widehat{w}(x_{\sigma(j)})\ \cdot \ \prod_{j=1}^{n+t}f_{Z}(z_{\sigma(j)})} \nonumber 
    \\
    & = \frac{\sum_{\sigma:\sigma(n+t)=i}\prod_{j=n+t_{\text{ad}}}^{n+t}\widehat{w}(x_{\sigma(j)})}{\sum_{\sigma}\prod_{j=n+t_{\text{ad}}}^{n+t}\widehat{w}(x_{\sigma(j)})},
\end{align}

where the last line follows because the factor $\prod_{j=1}^{n+t}f_{Z}(z_{\sigma(j)})$ is invariant to permutations and thus cancels in the ratio. However, the computational complexity for computing Eq. \eqref{appeq:mfcs_weights_exact} is still $n+t$ choose $t_{\text{ad}}$ (i.e., requiring $\frac{(n+t)!}{t_{\text{ad}}! (n+t-t_{\text{ad}})!}$ computations), so to enable tractabile estimation, we focus only on permuting the test point value with values assumed to be from the source distribution (prior to the adaptation beginning); that is, we the sums to permutations $\sigma':[n+t]\rightarrow [n+t]$ such that $\sigma'(j)=j$ for all $j\in \{n+t_{\text{ad}}, ..., n+t-1\}$: 

\begin{align} \label{appeq:restricted_cs_weights}
    \mathbb{P}\{Z_{n+t} = z_i \mid E_z^{(t)}\} \approx \tilde{w}_i^{(t)} := & \frac{\sum_{\sigma':\sigma'(n+t)=i}\widehat{w}(x_{\sigma'(n+t)})}{\sum_{\sigma'}\widehat{w}(x_{\sigma'(n+t)})} \nonumber \\
    = & \frac{\widehat{w}(x_{i})}{\sum_{i\in [n+t_{\text{ad}}-1]\cup\{n+t\}}\widehat{w}(x_{i})}.
\end{align}

Then, the corresponding weighted-conformal $p$-value is 

\begin{align}\label{appeq:weighted_p_vals_fixed_cal}
    p_{n+t}^{\tilde{w}} = \sum_{i\in[n+t_{\text{ad}}-1]\cup \{n+t\}}\tilde{w}_i^{(t)}\Big[\mathbbm{1}\{v_i > v_{n+t}\} + u_{n+t}\mathbbm{1}\{v_i = v_{n+t}\}\Big]. 
\end{align}

Due to the modification made in Eq. \eqref{appeq:restricted_cs_weights}, Theorem \ref{thm:general_weighted_p_validity} only directly applies for the covariate shift null hypothesis (i.e., for testing $\mathcal{H}_0(\hat{f})=\mathcal{H}_0^{(\text{cs})}$ as in Eq. \eqref{appeq:cs_null}) for the timepoint $t=t_{\text{ad}}$. 
This is due to that, when a weighted-conformal $p$-value is not computed fully online,\footnote{That is, not computed relative to all the full observed data sequence with indices $[n+t]$, but only relative to a strict subset of $[n+t]$; for example, Eq. \eqref{appeq:weighted_p_vals_fixed_cal} computes the $p$-value relative to the data with indices $[n+t_{\text{ad}}-1]\cup \{n+t\}$, where $t_{\text{ad}}<t$.} the proof-by-induction approach in Section \ref{subsec:p_val_validity_pf} may not apply; in other words, for two test points $n+t$ and $n+t'$, their $p$-values $p_{n+t}^{\tilde{w}}$ and $p_{n+t'}^{\tilde{w}}$ may not be statistically independent due to their common dependency on the calibration data $[n+t_{\text{ad}}-1]$ in Eq. \eqref{appeq:weighted_p_vals_fixed_cal}. This gap can be resolved to enable testing of Eq. \eqref{appeq:cs_null} if the calibration data $[n+t_{\text{ad}}-1]$ are resampled for each test point, which restores independence trivially (due to each calibration set being entirely new). 
Alternatively, we can modify Eq. \eqref{appeq:cs_null} to also assume that we have enough initial source data (i.e., $n$ is large enough) so that the calibration scores' empirical CDF is equal to the true source score distribution, that is, $\widehat{F}_{V}^{[n+t_{\text{ad}}-1]} = F_{V}$, which is of course true in the limit $n\rightarrow \infty$. Accordingly, the $p$-value in Eq. \eqref{appeq:weighted_p_vals_fixed_cal} is effectively testing the modified null hypothesis of $\mathcal{H}_0^{\text{(cs)}}$ \textit{and} $\widehat{F}_{V}^{[n+t_{\text{ad}}-1]} = F_{V}$:

\begin{align}\label{appeq:cs_null_modified}
    \mathcal{H}_0^{\text{(cs)}} \land (\widehat{F}_{V}^{[n+t_{\text{ad}}-1]} = F_{V}) =\begin{cases}
        (X_{i}, Y_{i}) \qquad \ \stackrel{iid}{\sim} \mathbf{F_{Y\mid X}} \times \mathbf{F_{X}} , \ & i \in [n + t_{\text{ad}}-1] \\
        (X_{n+t}, Y_{n+t}) \stackrel{iid}{\sim} \mathbf{F_{Y\mid X}} \times \mathbf{G_{X}^{(\widehat{w})}}, & t\geq t_{\text{ad}} \\
        \widehat{F}_{V}^{[n+t_{\text{ad}}-1]} = F_{V}.
        % \qquad \qquad \qquad \quad F_{Y \mid X} \in \mathbf{F^{\text{ex}}_{Y \mid X}}, &
    \end{cases}
\end{align}
In practice, Eq. \eqref{appeq:cs_null_modified} can roughly be thought of as also testing whether the weighted CP calibration set is ``large enough'' for CP coverage to hold precisely \textit{conditional on} the calibration data. That is, letting $Z_{\text{cal}}$ denote the event that $\{Z_1, ..., Z_{n+t_{\text{ad}}-1}\}=\{z_1, ..., z_{n+t_{\text{ad}}-1}\}$, taking Eq. \eqref{appeq:cs_null_modified} as given would imply that, for all $\alpha\in (0, 1)$,
\begin{align}\label{appeq:cal_cond_cov}
    \mathbb{P}\{P_{n+t}^{\tilde{w}}\leq \alpha \mid Z_{\text{cal}}\} = \alpha \quad \iff \quad \mathbb{P}\big\{Y_{n+t} \in \widehat{C}_{[n+t_{\text{ad}}-1], \alpha}(X_{n+t}) \mid Z_{\text{cal}}\big\} = 1 - \alpha,
\end{align}
where $P_{n+t}^{\tilde{w}}$ is the random variable taking values in Eq. \eqref{appeq:cs_null} and the probability is only over the draw of the test point $Z_{n+t}$. In other words,  Eq. \eqref{appeq:cs_null_modified} $\implies$ Eq. \eqref{appeq:cal_cond_cov}, so via the contrapositive, violations of calibration-set-conditional coverage (Eq. \eqref{appeq:cal_cond_cov}) would imply violations of $\mathcal{H}_0^{\text{(cs)}} \land (\widehat{F}_{V}^{[n+t_{\text{ad}}-1]} = F_{V})$ (Eq. \eqref{appeq:cs_null_modified}).

\textit{Step 4: Construct WCTM by a betting process on the weighted-conformal $p$-values}

Lastly, the practical WCTM algorithms implemented in this paper are constructed on top of a sequence of weighted-conformal $p$-values $p_{n+1}^{\tilde{w}}, p_{n+2}^{\tilde{w}}, ..., p_{n+t}^{\tilde{w}}, ...$, as defined in Eq. \eqref{appeq:weighted_p_vals_fixed_cal}. The resulting WCTM thus achieves the anytime-valid false alarm guarantee in Proposition \ref{thm:wctm_anytime_valid_guarantee} for the null hypothesis $\mathcal{H}_0(\hat{f}) = \mathcal{H}_0^{\text{(cs)}} \land (\widehat{F}_{V}^{[n+t_{\text{ad}}-1]} = F_{V})$, as defined in Eq. \eqref{appeq:cs_null_modified} (and analogously for the scheduled, Shiryaev-Roberts monitoring procedure based on WCTMs).

\section{Additional Experimental Results}\label{sec:additional}
\subsection{Root-Cause Analysis with WCTMs}
\label{subsec:rca_meps}

% Whereas Figure 

\subsection{What Makes a Covariate Shift Mild, Moderate, or Extreme?}
\label{subsec:mild_mod_ex}

While the distinction between mild, moderate, and extreme covariate shifts can be gradual, problem-specific, and sometimes user-driven, these disctinctions are not arbitrary, and WCTMs arguably even provide an approach to such delineation with statistical guarantees. Intuitively, benign shifts can be considered those where the ``safety'' of the CP coverage is maintained nontrivially (i.e., without the prediction set covering the whole label space). This intuition corresponds to the WCTMs' null hypothesis (and, harmful shifts violate it), as follows:
\begin{itemize}
    \item \textbf{Benign:} The betting martingale's null hypothesis is that the WCP $p$-values are IID $\text{Unif}[0, 1]$ (Appendix \ref{subsec:wctm_anytime_pf});  this null implies that coverage is satisfied (exactly) for all $\alpha\in(0,1)$ (ie, the martingale’s null $\implies$ intuitive definition of “benign” regarding coverage validity).
    \item \textbf{Harmful:} (Contrapositive of the above.) If coverage is \textit{not} satisfied (exactly) for some $\alpha\in (0, 1)$, then the $p_{t}^{\tilde{w}}$ are not IID $\text{Unif}[0, 1]$ (ie, violation of coverage validity $\implies$ violation of martingale’s null, thus possibility for detection). Larger violations are easier to detect, and thus more likely to quickly raise an alarm. Note that this can be due to under-coverage (safety violation) or over-coverage (uninformative prediction sets); we further penalize trivial overcoverage---i.e., when $\widehat{C}(X_{n+t})=\mathcal{Y}$---by using anticonservative WCP $p$-values whenever this occurs. (See pseudocode in Appendix \ref{sec:alg}.)
    \item \textbf{Medium:} A shift may initially be ``harmful'' as described above, due to density-ratio estimator having insufficient data, but later become ``benign'' once enough data has been collected.
\end{itemize}

Figure \ref{fig_mild_ex} provides an ablation study illustrating synthetic data example of WATCH performance for different magnitudes of covariate shift (in the input $X$ distribution). Each row corresponds to a specific magnitude of covariate shift and illustrates WATCH's response regarding coverage (prediction safety), interval widths (prediction informativeness), and WCTMs (monitoring criteria for alarms). The post-changepoint test points are sampled from the full source distribution with probabilities proportional to $exp(|x-18|*\lambda)$; larger values of $\lambda$ thus correspond to more severe covariate shift toward extreme (and particularly toward large) values of the input $X$. Experiments are averaged over 20 seeds.

\begin{figure*}[!ht]
\centering
    \includegraphics[width=0.9\textwidth]{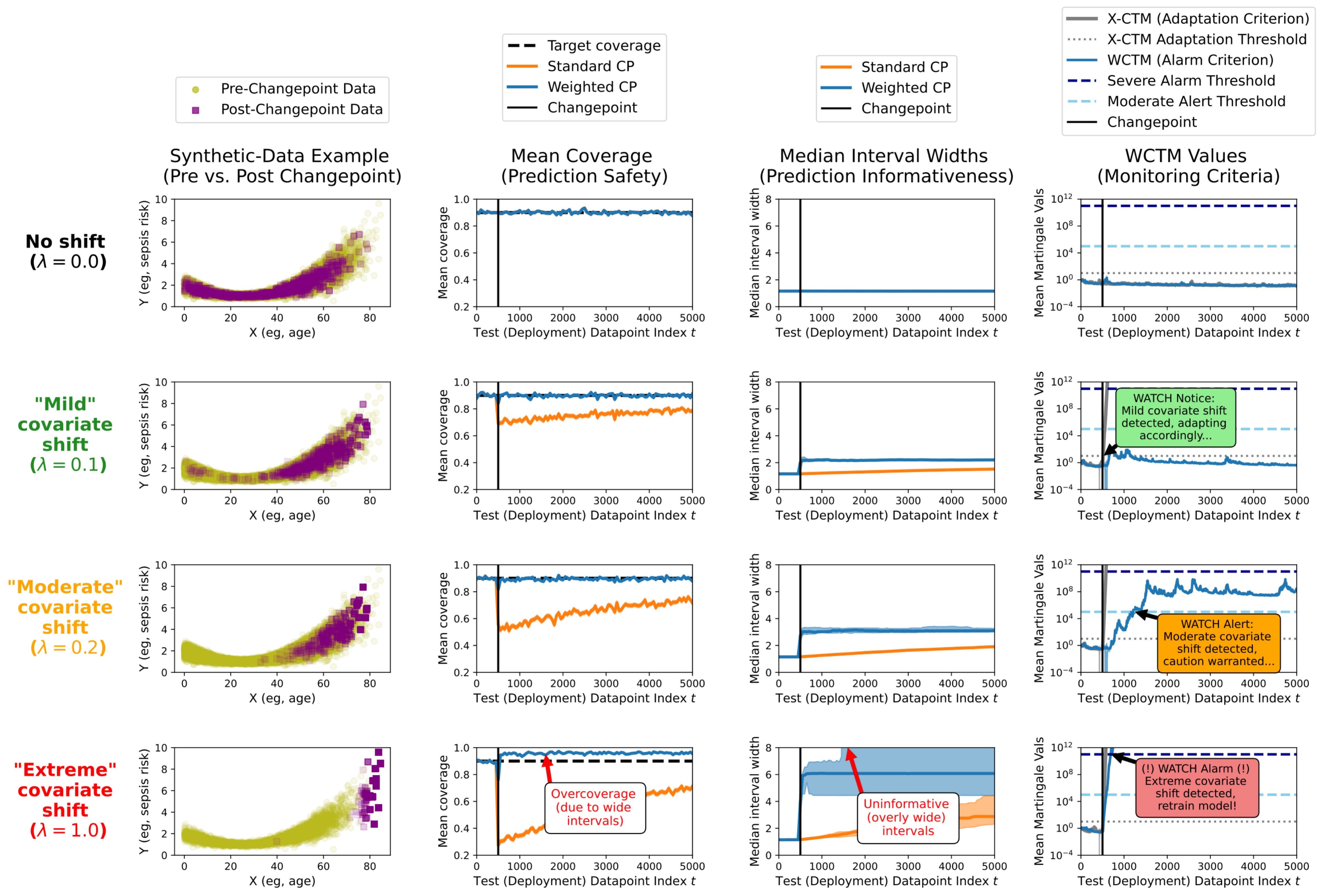}
\caption{Ablation study illustrating synthetic data example of WATCH performance for different magnitudes of covariate shift (in the input $X$ distribution).}
\label{fig_mild_ex}
\end{figure*}

\vspace{0.2cm}

\subsection{Wall-Clock Runtimes}
\label{app:runtimes}

Table \ref{tab:wall_clock_results} presents wall-clock runtimes averaged over 20 random seeds for the same experimental settings as reported in the average detection delay (ADD) experiments in Table \ref{tab:results}, except with the true changepoint occurring 200 datapoints after deployment (whereas the results for Table 1 are for 100 samples after deployment). Across all datasets and for both the anytime-valid and the scheduled monitoring criteria, the CTM and WCTM-based methods have faster empirical runtime than baselines. The starkest difference is between the Shiyaev-Roberts (W)CTM methods and the online changepoint detection method from \citet{podkopaev2021tracking}, due to the former's runtime having complexity $O(t)$ and the latter's complexity being $O(t^2)$, where $t$ is the number of post-deployment timesteps.

\begin{table*}[htbp!]
\centering
\renewcommand\arraystretch{0.8}
\caption{\small Wall-clock runtimes in seconds, averaged over 20 random seeds for each monitoring method and dataset (error bars are standard errors). Anytime-valid monitoring methods are WCTMs (proposed), CTMs \citep{vovk2021testing}, and sequential testing from \citet{podkopaev2021tracking} (PR-ST); multistage monitoring methods are Shiryaev-Roberts (SR) procedure applied to WCTMs (proposed), SR applied to CTMs \citep{vovk2021testing}, changepoint detection on the miscoverage risk from \citet{podkopaev2021tracking} in both online (PR-CD-online) and minibatched (PR-CD-50) variants. Results corresponding to our method are in {\color{NavyBlue}blue}. The best runtimes are \textbf{bolded}.}
% \vspace{-1em}
\scalebox{0.9}{
\begin{tabular}{ll |cc |ccc }
\toprule
 & \textit{Monitoring Criterion} ($\rightarrow$)& \multicolumn{2}{c}{\textit{Anytime-Valid  Criterion}}   & \multicolumn{3}{c}{\textit{Scheduled, Multistage Criterion}} \\
\cmidrule(lr){3-4} \cmidrule(lr){5-7} 
 & \textit{Monitoring Method} ($\rightarrow$) & (W)CTM  & PR-ST & SR via (W)CTM & PR-CD-online & PR-CD-50 \\
\cmidrule(lr){3-4} \cmidrule(lr){5-7} \textit{Dataset} ($\downarrow$)
 & \textit{CP Type} ($\downarrow$) & \textbf{Time ($\pm$ SE)} &   \textbf{Time ($\pm$ SE)} &   \textbf{Time ($\pm$ SE)} &   \textbf{Time ($\pm$ SE)} &   \textbf{Time ($\pm$ SE)}  \\
\cmidrule(lr){1-7} 
\multirow{2}{*}{\textbf{MEPS}} 
  & Weighted & \color{NavyBlue} \textbf{0.38 ($\pm 0.01$)} &  7.65 $(\pm 1.29)$  &  \color{NavyBlue}\textbf{0.73 $(\pm 0.05)$} &  81.75 $(\pm 10.17)$ &  0.15 $(\pm 0.01)$ \\
  & Standard  & \textbf{0.38 ($\pm 0.01$)} & 5.55  ($\pm 1.01$)  &   \textbf{0.73 ($\pm 0.05$)} & 81.97 ($\pm 10.27$) &  0.15 ($\pm 0.01$)   \\
\cmidrule(lr){1-7} 
\multirow{2}{*}{\textbf{Bike Sharing}} 
  & Weighted & \color{NavyBlue} \textbf{0.44 ($\pm 0.02$)} &  10.26 $(\pm 1.72)$  &  \color{NavyBlue}\textbf{0.87 $(\pm 	0.08)$} &  138.11 $(\pm 16.36)$ &  0.14 $(\pm 0.01)$ \\
  & Standard  & \textbf{0.43 ($\pm 0.02$)} &  8.45 ($\pm 1.27$)  &   \textbf{0.88 ($\pm 0.09$)} & 135.94 ($\pm 16.92$) &  0.14 ($\pm 0.01$)   \\
\cmidrule(lr){1-7}
\multirow{2}{*}{\textbf{Superconduct}} 
  & Weighted & \color{NavyBlue} \textbf{0.43 ($\pm 0.02$)} &  9.93 ($\pm 1.32$) &   \color{NavyBlue}\textbf{0.88 ($\pm 0.09$)} & 215.72  $(\pm 20.63)$  &  0.20  $(\pm 0.01)$  \\
  & Standard  & \textbf{0.42 ($\pm 0.02$)} &  9.94 ($\pm 1.03$)  &   \textbf{0.86 ($\pm 0.09$)} & 213.72 ($\pm 23.82$) &  0.18 ($\pm 0.01$)   \\
\bottomrule
\end{tabular}}
\label{tab:wall_clock_results}
\end{table*}

\subsection{Ablation Experiments for Density-Ratio Weight Estimation}

Another factor determining whether a covariate shift can be considered benign or harmful to deployment is whether a deployed density-ratio estimator is well-specified and thus able to approximate the shift. Figure \ref{fig_d_abl} provides selected synthetic-data example where logistic regression is a misspecified probabilistic classifier for distinguishing between pre- and post-changepoint data, but where a neural network (MLP) is able to accurately discriminate between the same pre- and post-changepoint data. That is, in this example the pre- and post-changepoint data are not linearly separable in the input $X$ domain, so logistic regression is not able to reliably discriminate, and thereby it is unable to reliably estimate density-ratio weights via probabilistic classification. The result is that the changepoint causes a large increase in coverage, despite some adaptation (decreasing interval widths); the estimator's misspecification thus causes WCTMs to raise an alarm, indicating that the covariate shift cannot be adapted to by the estimator. In contrast, the MLP estimator is able to appropriately adapt by maintaing target coverage, improving interval sharpness, and avoiding unnecessary alarms.

\begin{figure*}[ht!]
\centering
    \includegraphics[width=\textwidth]{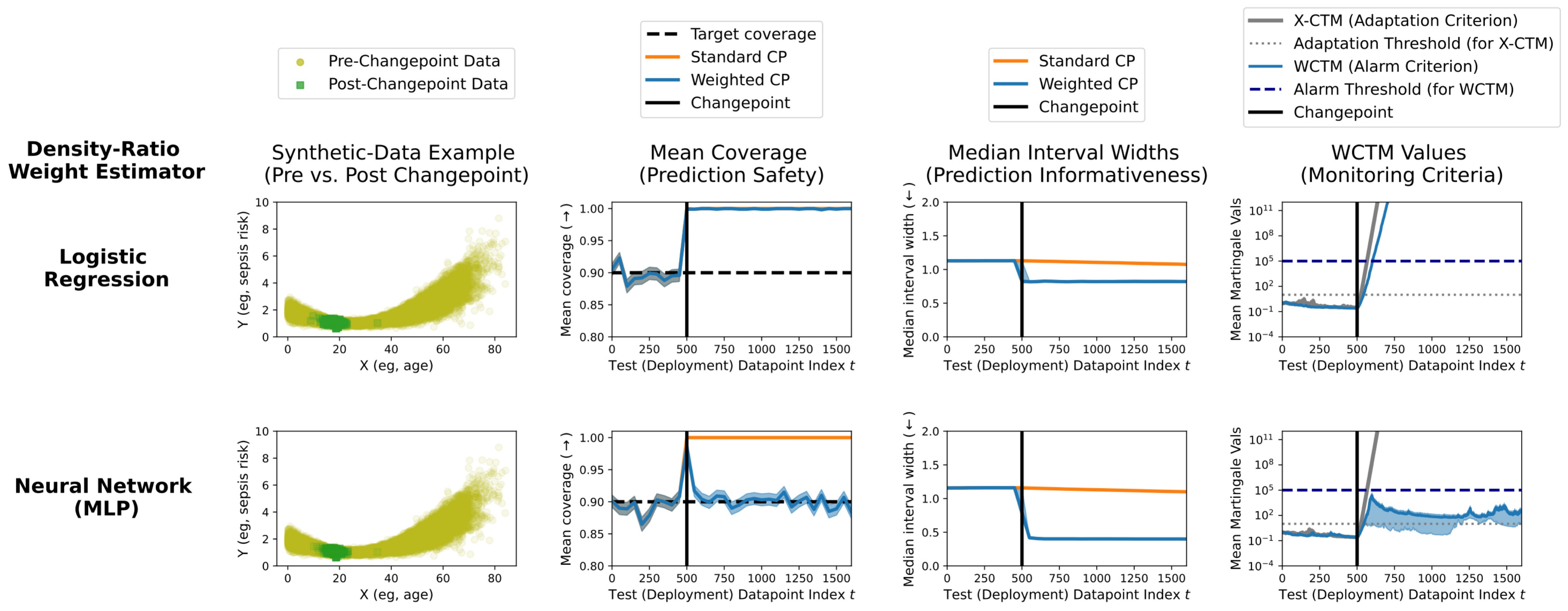}
\caption{Ablation study on density-ratio estimator for synthetic data.}
\label{fig_d_abl}
\end{figure*}

\subsection{Ablation Experiments for Betting Function}

The primary role of the betting function in (W)CTMs and testing-by-betting more broadly is for quickly rejecting the null hypothesis (i.e., raising an alarm) when it is violated. Figure \ref{fig_b_abl} provides ablation experiment on the betting function used for X-CTMs and WCTMs, on three settings of the synthetic-data example. The "Composite" Jumper betting function is the betting function used in all other experiments, and it is an average of Simple Jumper betting functions over “jumping parameters” $J\in [0.0001, 0.001, 0.01, 0.1, 1]$; here, we set the Simple Jumper baseline to have $J=0.01$. See Vovk et al. (2021) for pseudocode and exposition of the Simple Jumper algorithm. $J=1$ means conservatively spreading bets across all options to avoid cumulative losses, while smaller $J$ encourages “doubling down” on bets that were previously successful. The CTMs with Composite betting are thus lower bounded at $M_t = 0.2$, whereas those with Simple betting continually decrease, resulting in slightly delayed detection speed relative to Composite.

\begin{figure*}[ht!]
\centering
    \includegraphics[width=\textwidth]{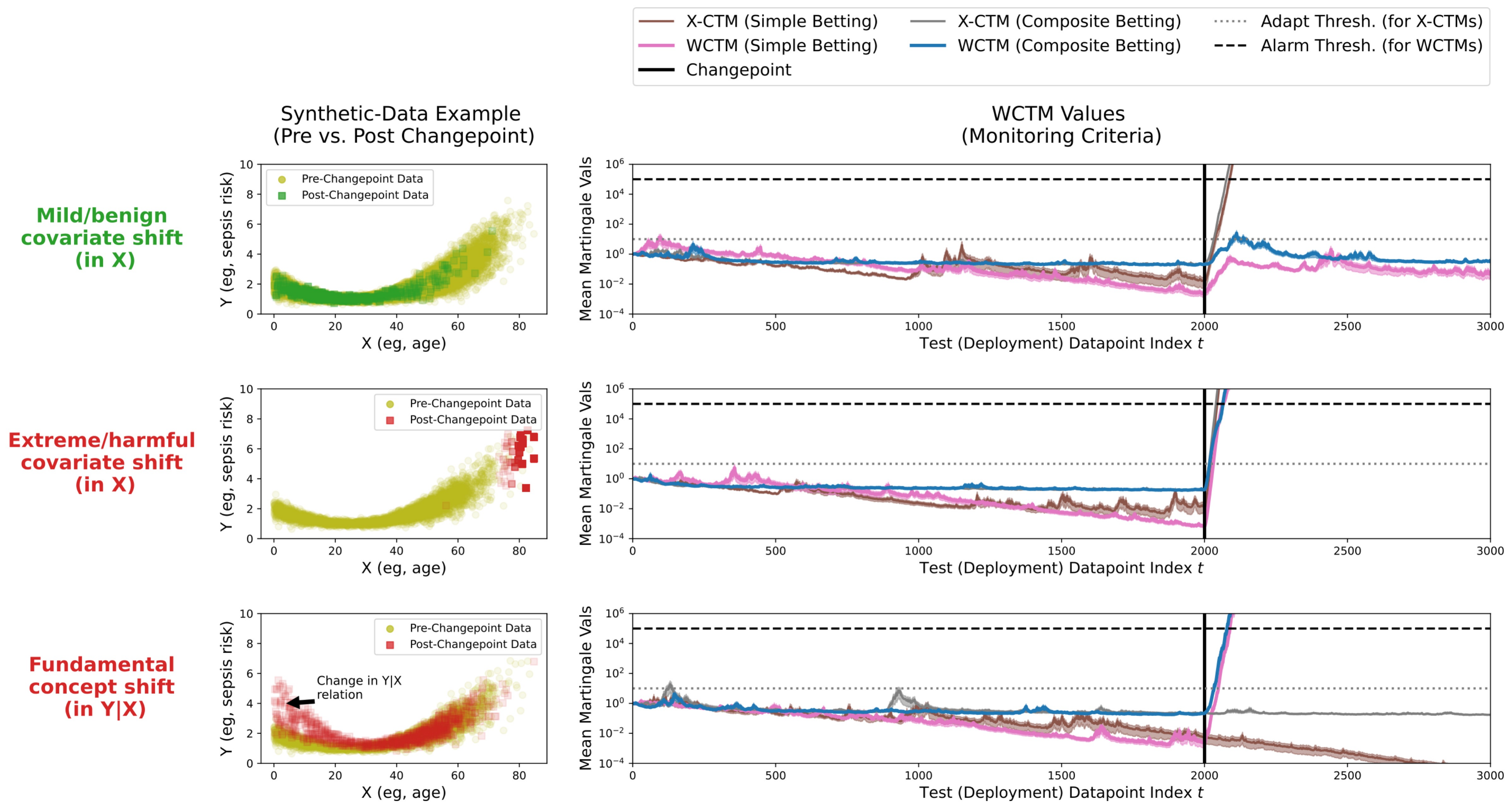}
\caption{Ablation study on betting function.}
\label{fig_b_abl}
\end{figure*}

\subsection{Further Experiments on Corrupted Image Data}
\label{app:more_image_expts}

Here we provide additional experiments on the corrupted image data. 
Our evaluation metrics include the average detection delay (ADD), the average number of unnecessary alarms, and the average number of missed alarms.
First, WCTM exhibits shorter detection delays compared to CTM. Given that CTM often doesn't trigger alarms in the face of severe corruption, which significantly increases its overall ADD. Additionally, as the corruption level intensifies, WCTM detects shifts more quickly, whereas CTM’s detection speed shows minimal change—consistent with our earlier visualizations. To evaluate unnecessary alarms, we again create a mild, benign shift in the target distribution by mixing clean samples into MNIST and level-1 CIFAR-10 corruption data, while treating higher-level corruptions as harmful. We find that WCTM’s unnecessary alarm rate is roughly \textit{one-third} that of CTM. Moreover, WCTM also exhibits fewer missed alarms, especially under more severe corruptions. These findings highlight the flexibility of our framework, which not only avoids unnecessary alarms but also better detects harmful shifts.

\begin{table}[t]
\centering
\renewcommand\arraystretch{0.9}
\caption{\small WCTM achieves a lower Average Detection Delay (ADD), especially under severe image corruptions, and also exhibits fewer false alarms and missed alarms. Results are averaged over 10 random seeds and all corruption types.}
% \vspace{-1em}
\scalebox{0.575}{
\begin{tabular}{l c |c |c |c |c}
\toprule
& & \textbf{MNIST-C} & \textbf{CIFAR10-C L1} & \textbf{CIFAR10-C L3} & \textbf{CIFAR10-C L5} \\
\midrule
\multirow{2}{*}{\textbf{ADD}} 
  & WCTM & \Large \textbf{156.4} & \Large 188.0 & \Large \textbf{163.6} & \Large \textbf{129.7} \\
  & CTM  & \Large 285.3 & \Large \textbf{176.5} & \Large 175.2 & \Large 170.3 \\
\midrule
\multirow{2}{*}{\textbf{Unnecessary Alarm}}
  & WCTM & \Large \textbf{7.3} & \Large \textbf{12.2} & -- & -- \\
  & CTM  & \Large 25.4 & \Large 34.3 & -- & -- \\
\midrule
\multirow{2}{*}{\textbf{Missed Alarm}}
  & WCTM & \Large 2.9 & \Large 6.6 & \Large \textbf{3.8} & \Large \textbf{0.9} \\
  & CTM  & \Large \textbf{2.4} & \Large \textbf{5.9} & \Large 4.6 & \Large 1.2 \\
\bottomrule
\end{tabular}}
\label{tab:image_results}
\end{table}

Whereas Table \ref{tab:image_results} provided metrics averaged over all corruption types, Figure \ref{fig:corruption_set3} provides specific example results on each corruption type separately, focusing on harmful shifts with high severity of corruption. These plots illustrate how WCTMs quickly react to harmful shifts.

\begin{figure*}[htbp]
    \begin{minipage}{0.97\textwidth}
    \centering
    \begin{tabular}{@{\hspace{-2.4ex}} c @{\hspace{-1.5ex}} @{\hspace{-2.4ex}} c @{\hspace{-1.5ex}} @{\hspace{-2.4ex}} c @{\hspace{-1.5ex}} @{\hspace{-2.4ex}} c @{\hspace{-2.4ex}}}
        \begin{tabular}{c}
        \includegraphics[width=.27\textwidth]{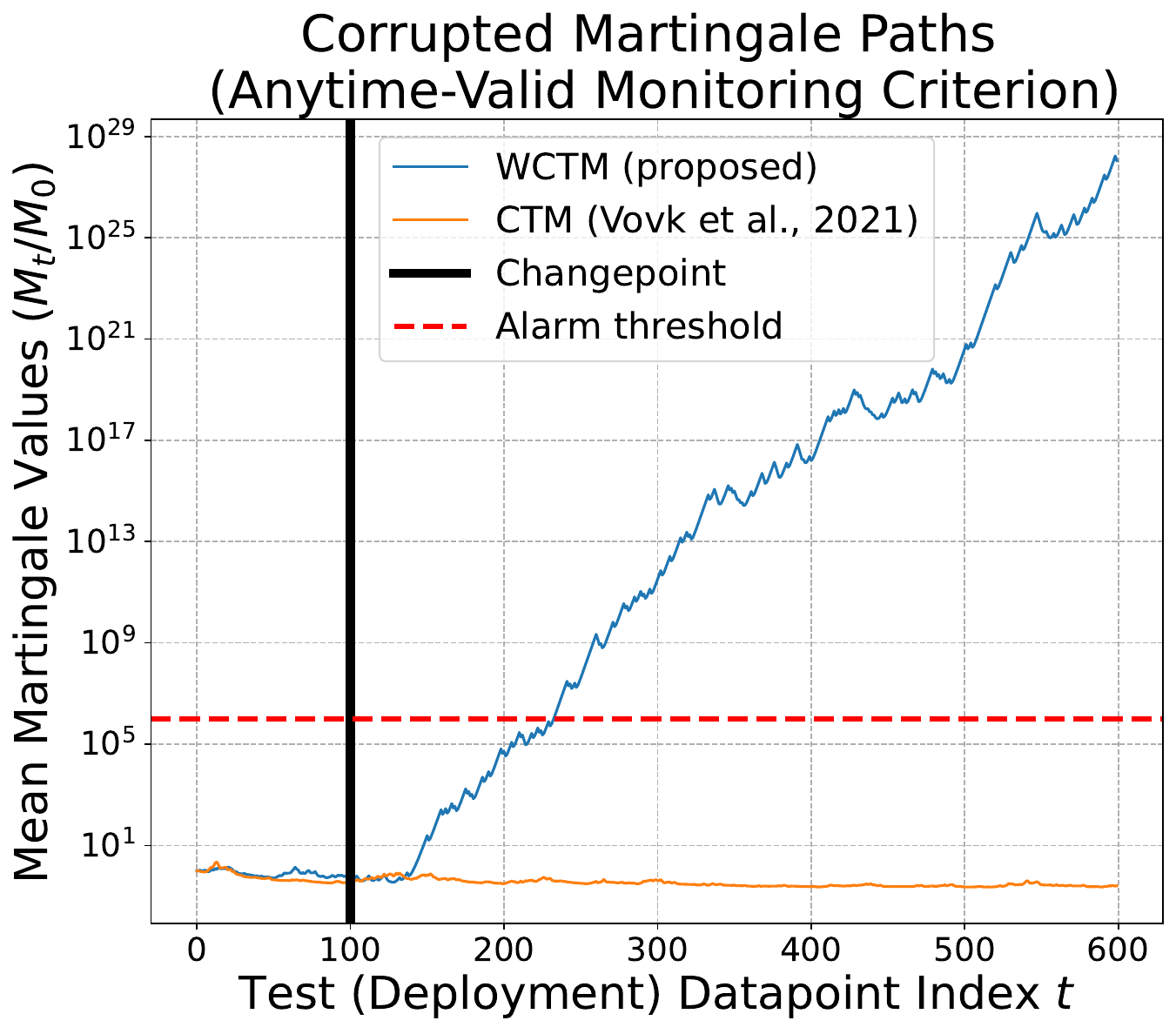}
        %\vspace{-5pt}
        \\
        {\small{(a) brightness}}
        \end{tabular} &
        \begin{tabular}{c}
        \includegraphics[width=.27\textwidth]{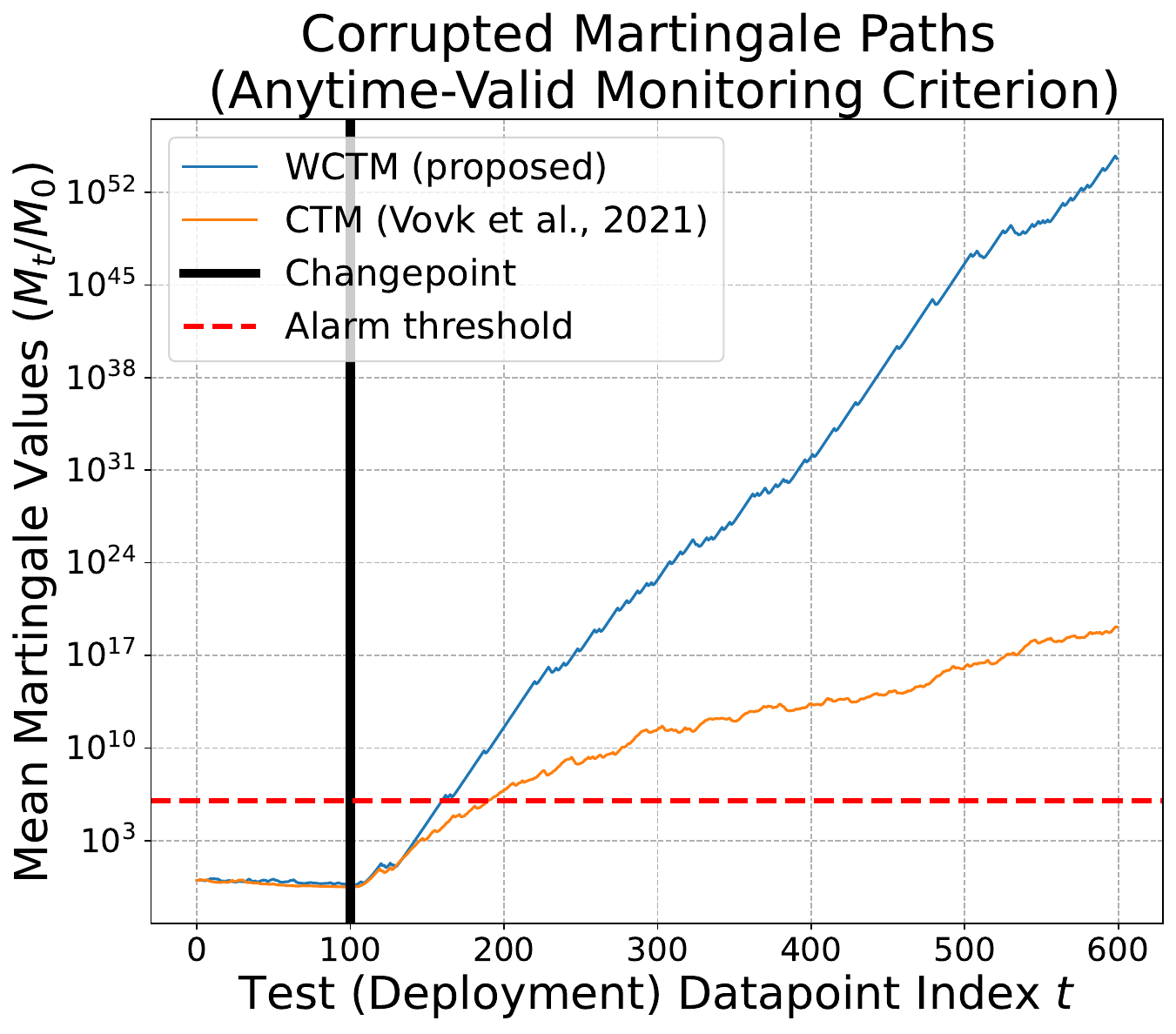}
        %\vspace{-5pt}
        \\
        {\small{(b) contrast}}
        \end{tabular} & 
        \begin{tabular}{c}
        \includegraphics[width=.27\textwidth]{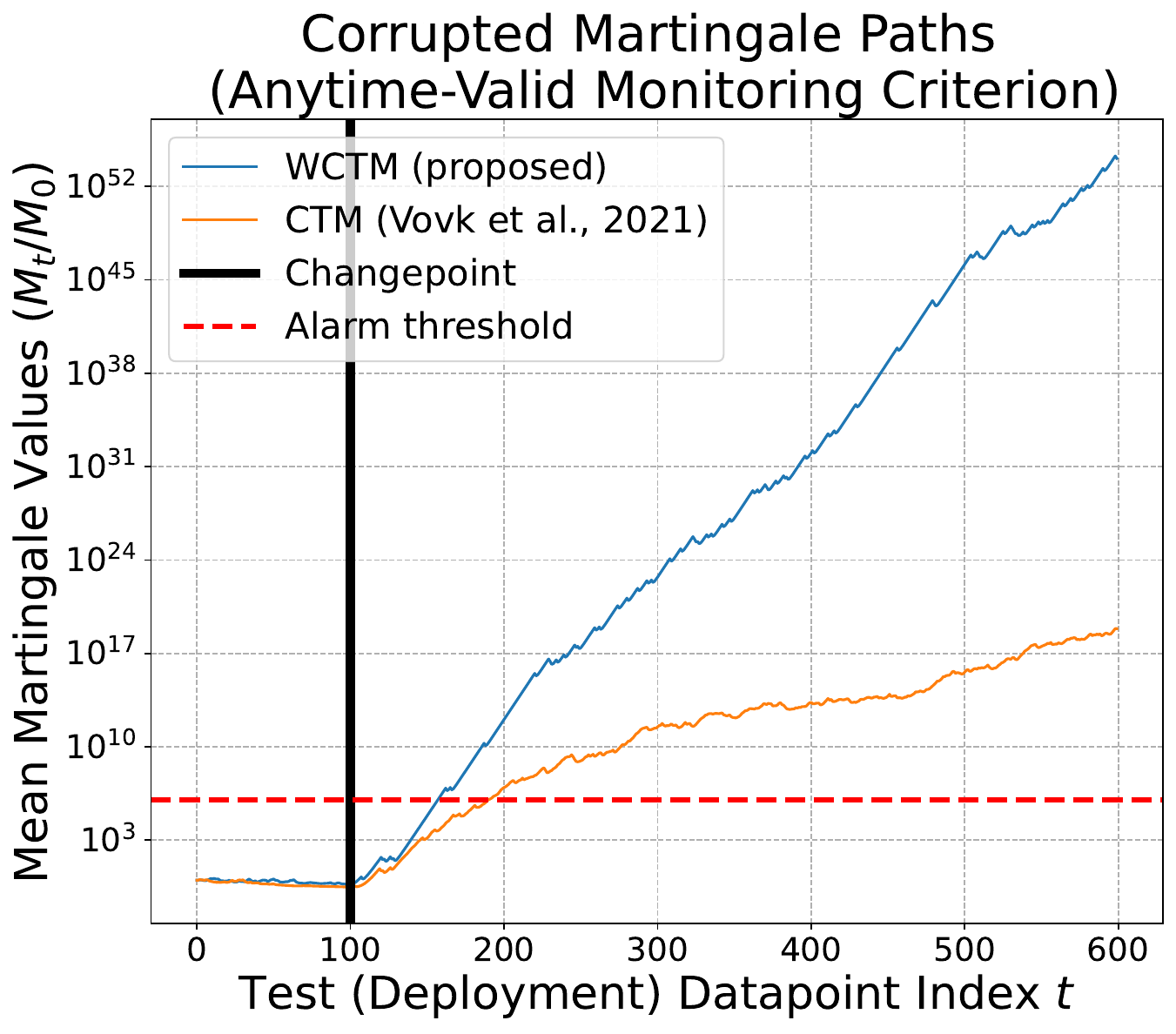} 
        %\vspace{-5pt}
        \\
        {\small{(c) defocus blur}}
        \end{tabular} &
        \begin{tabular}{c}
        \includegraphics[width=.27\textwidth]{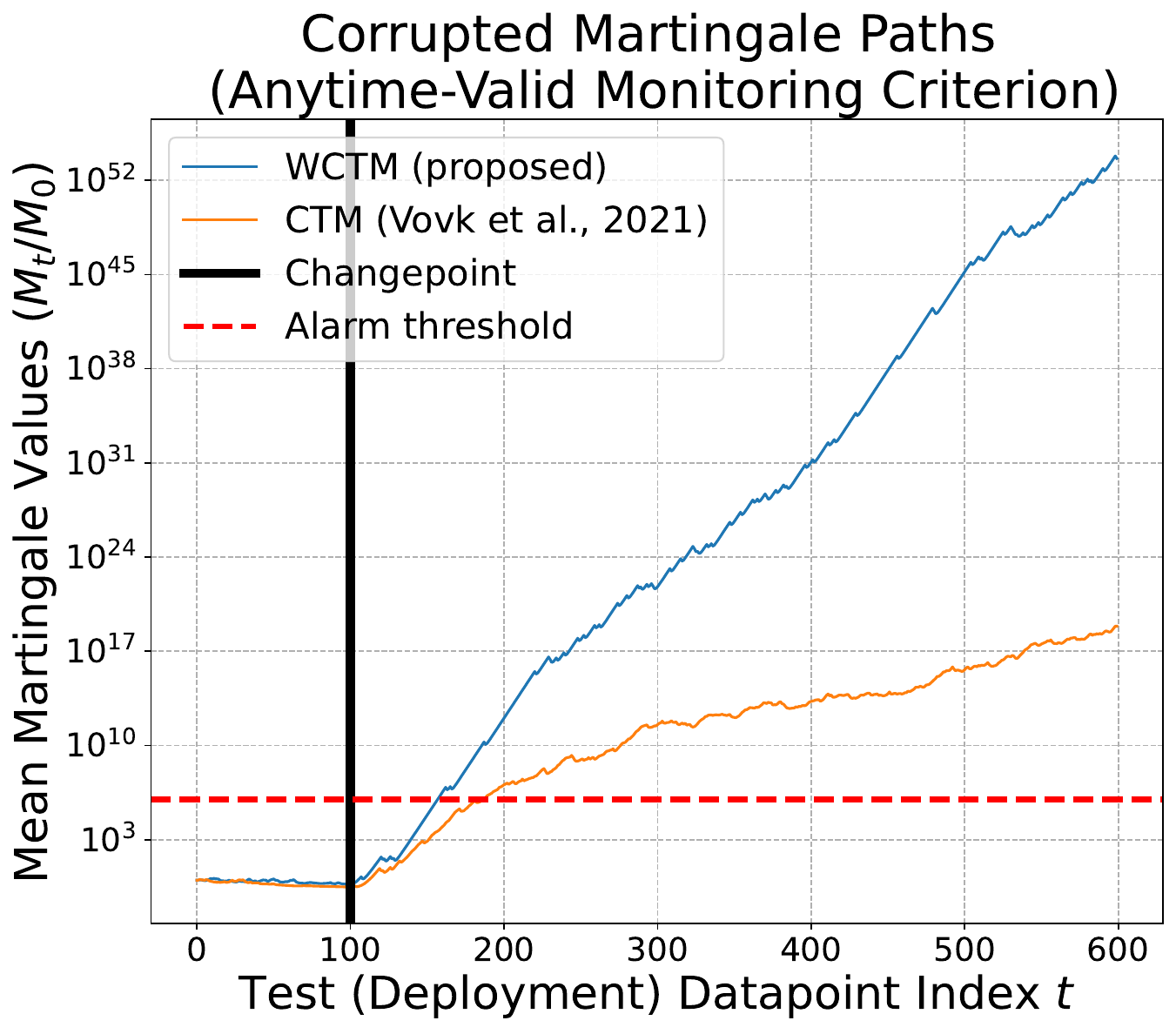}
        %\vspace{-5pt}
        \\
        {\small{(d) elastic transform}}
        \end{tabular} \\
        \end{tabular}
    \end{minipage}
    \label{fig:corruption_set1}
\end{figure*}
\begin{figure*}[htbp]
    \begin{minipage}{0.97\textwidth}
    \centering
    \begin{tabular}{@{\hspace{-2.4ex}} c @{\hspace{-1.5ex}} @{\hspace{-2.4ex}} c @{\hspace{-1.5ex}} @{\hspace{-2.4ex}} c @{\hspace{-1.5ex}} @{\hspace{-2.4ex}} c @{\hspace{-2.4ex}}}
        \begin{tabular}{c}
        \includegraphics[width=.27\textwidth]{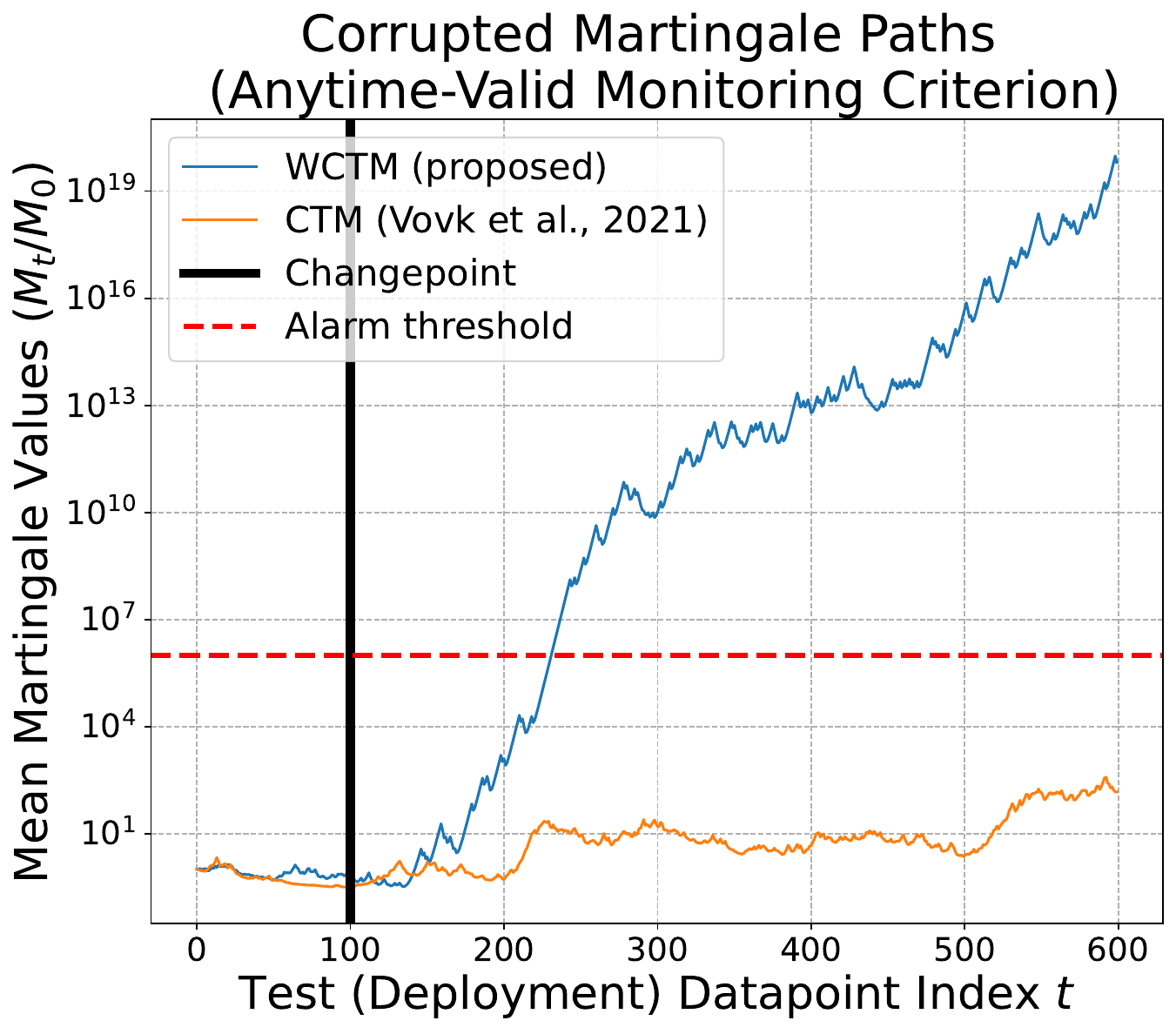}
        %\vspace{-5pt}
        \\
        {\small{(e) fog}}
        \end{tabular} &
        \begin{tabular}{c}
        \includegraphics[width=.27\textwidth]{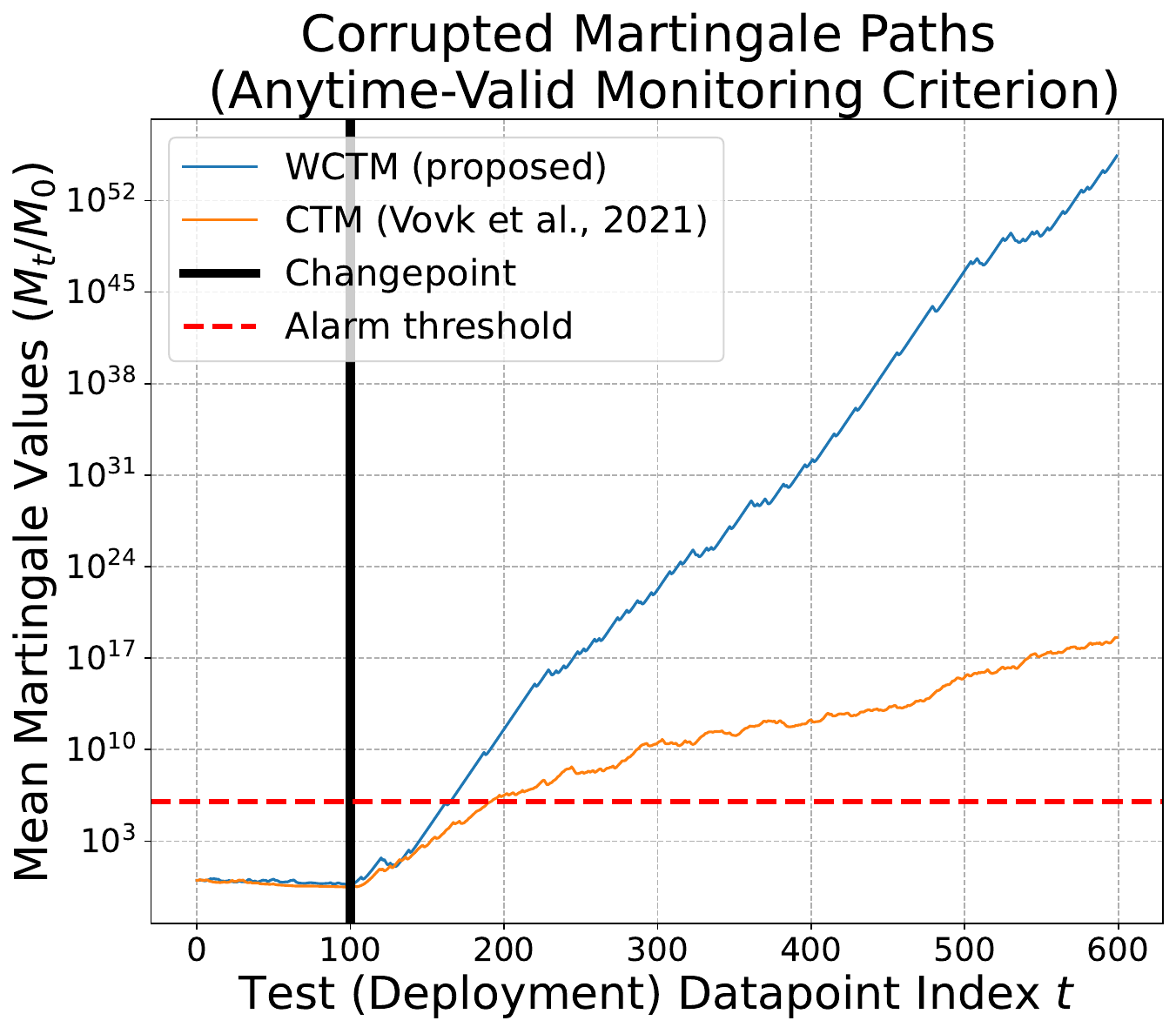}
        %\vspace{-5pt}
        \\
        {\small{(f) frost}}
        \end{tabular} & 
        \begin{tabular}{c}
        \includegraphics[width=.27\textwidth]{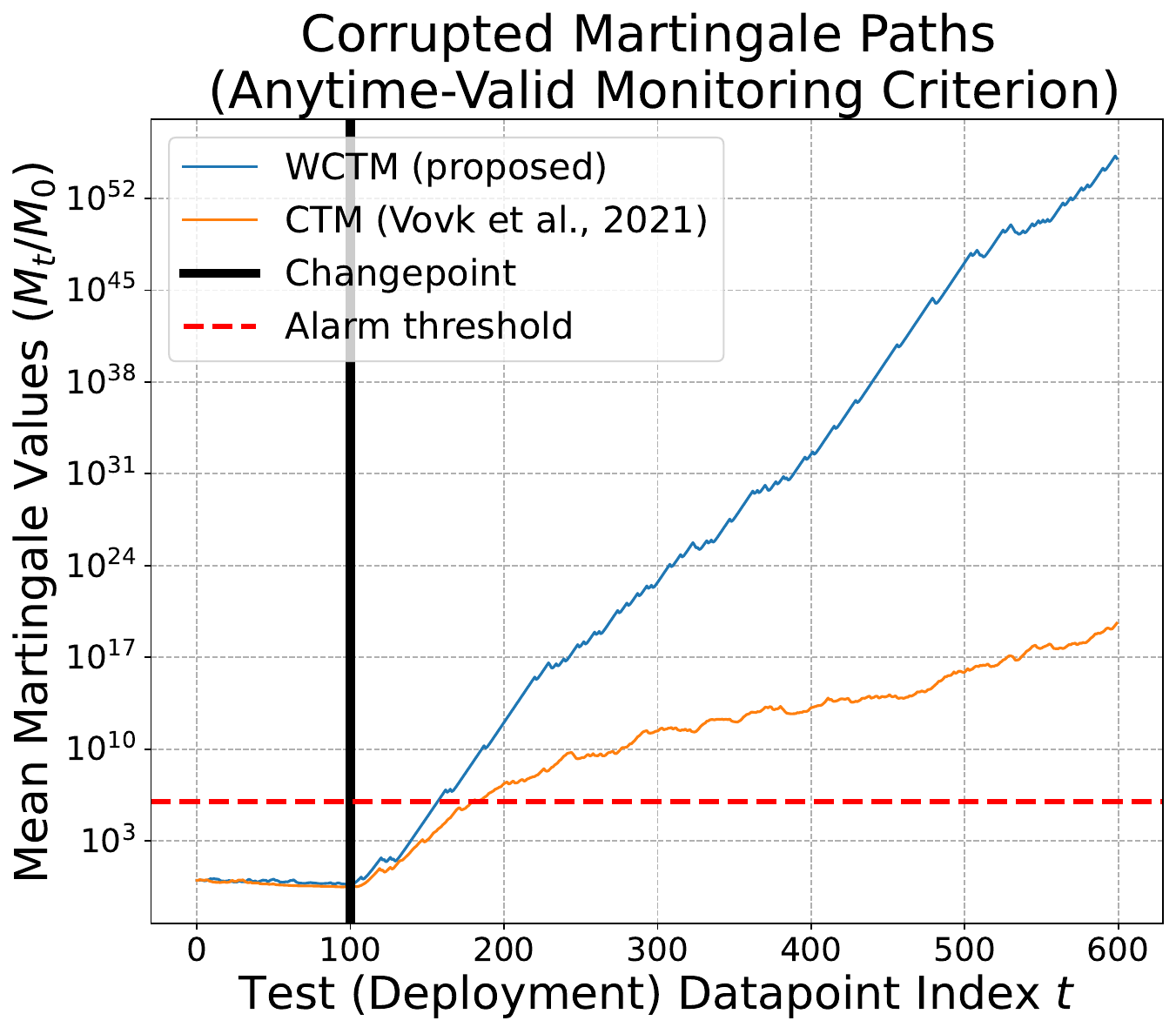} 
        %\vspace{-5pt}
        \\
        {\small{(g) gaussian blur}}
        \end{tabular} &
        \begin{tabular}{c}
        \includegraphics[width=.27\textwidth]{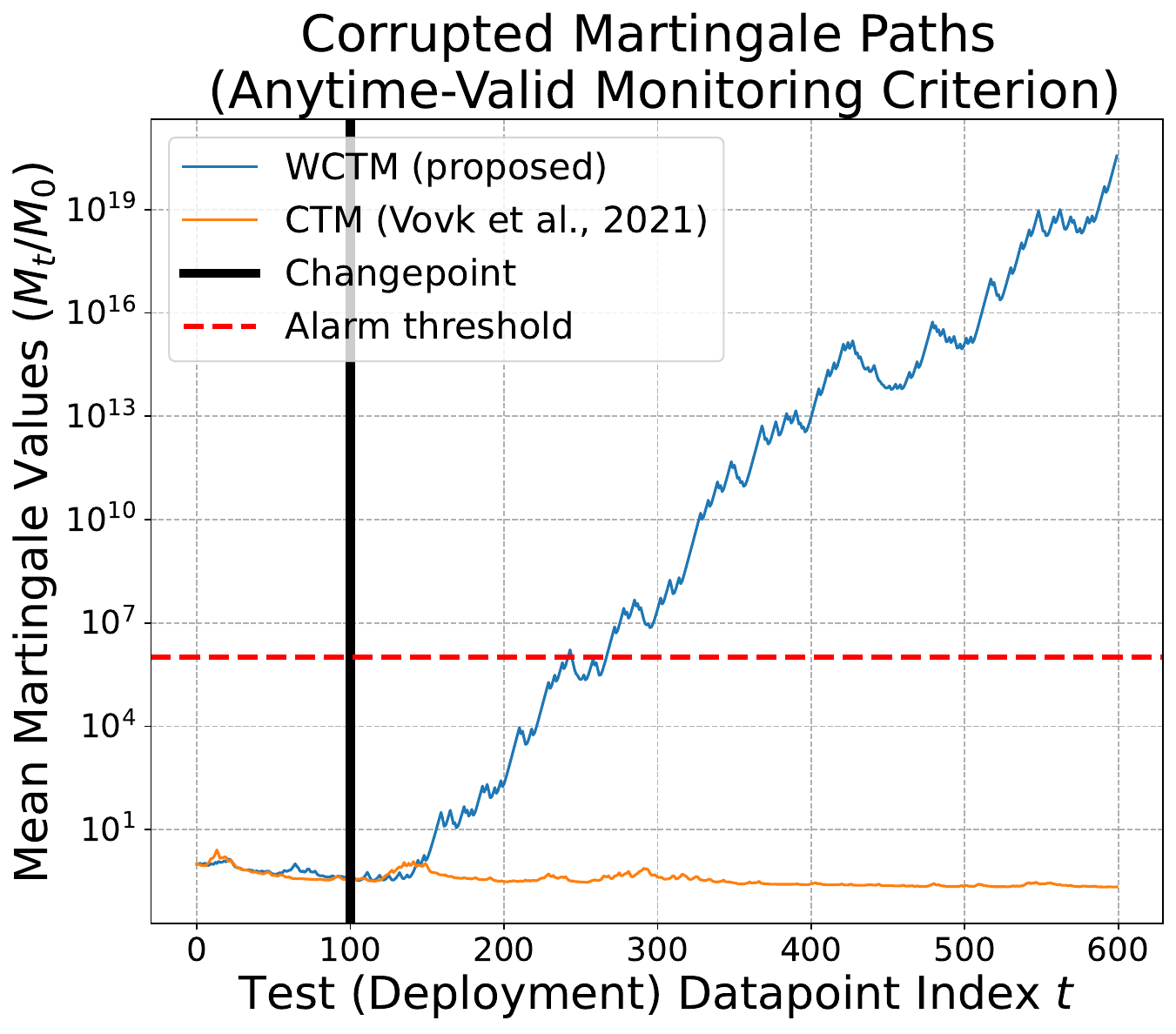}
        %\vspace{-5pt}
        \\
        {\small{(h) gaussian noise}}
        \end{tabular} \\
        \end{tabular}
    \end{minipage}
    \label{fig:corruption_set2}
\end{figure*}
\begin{figure*}[htbp]
    \begin{minipage}{0.97\textwidth}
    \centering
    \begin{tabular}{@{\hspace{-2.4ex}} c @{\hspace{-1.5ex}} @{\hspace{-2.4ex}} c @{\hspace{-1.5ex}} @{\hspace{-2.4ex}} c @{\hspace{-1.5ex}} @{\hspace{-2.4ex}} c @{\hspace{-2.4ex}}}
        \begin{tabular}{c}
        \includegraphics[width=.27\textwidth]{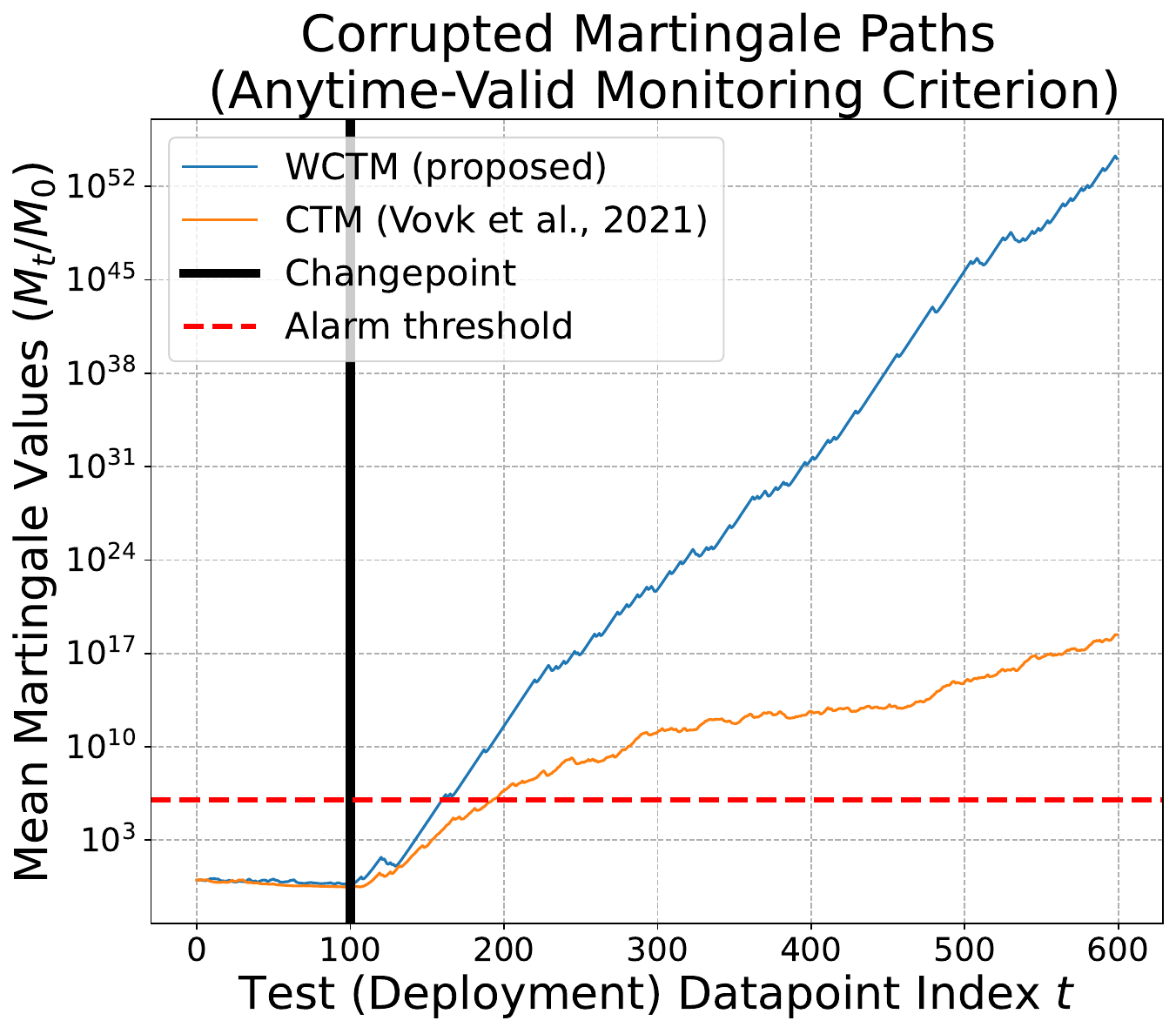}
        %\vspace{-5pt}
        \\
        {\small{(i) glass blur}}
        \end{tabular} &
        \begin{tabular}{c}
        \includegraphics[width=.27\textwidth]{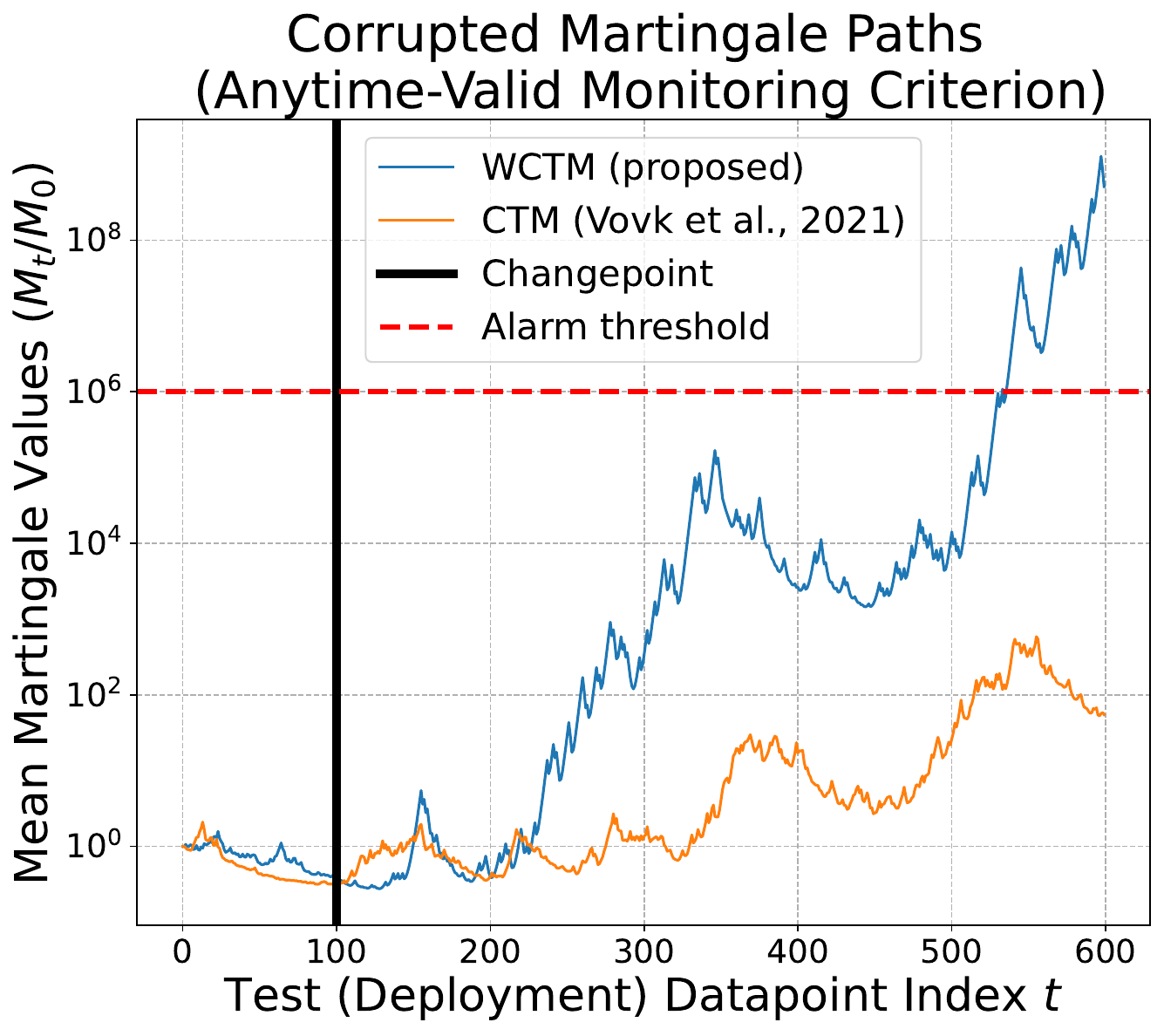}
        %\vspace{-5pt}
        \\
        {\small{(j) impulse noise}}
        \end{tabular} & 
        \begin{tabular}{c}
        \includegraphics[width=.27\textwidth]{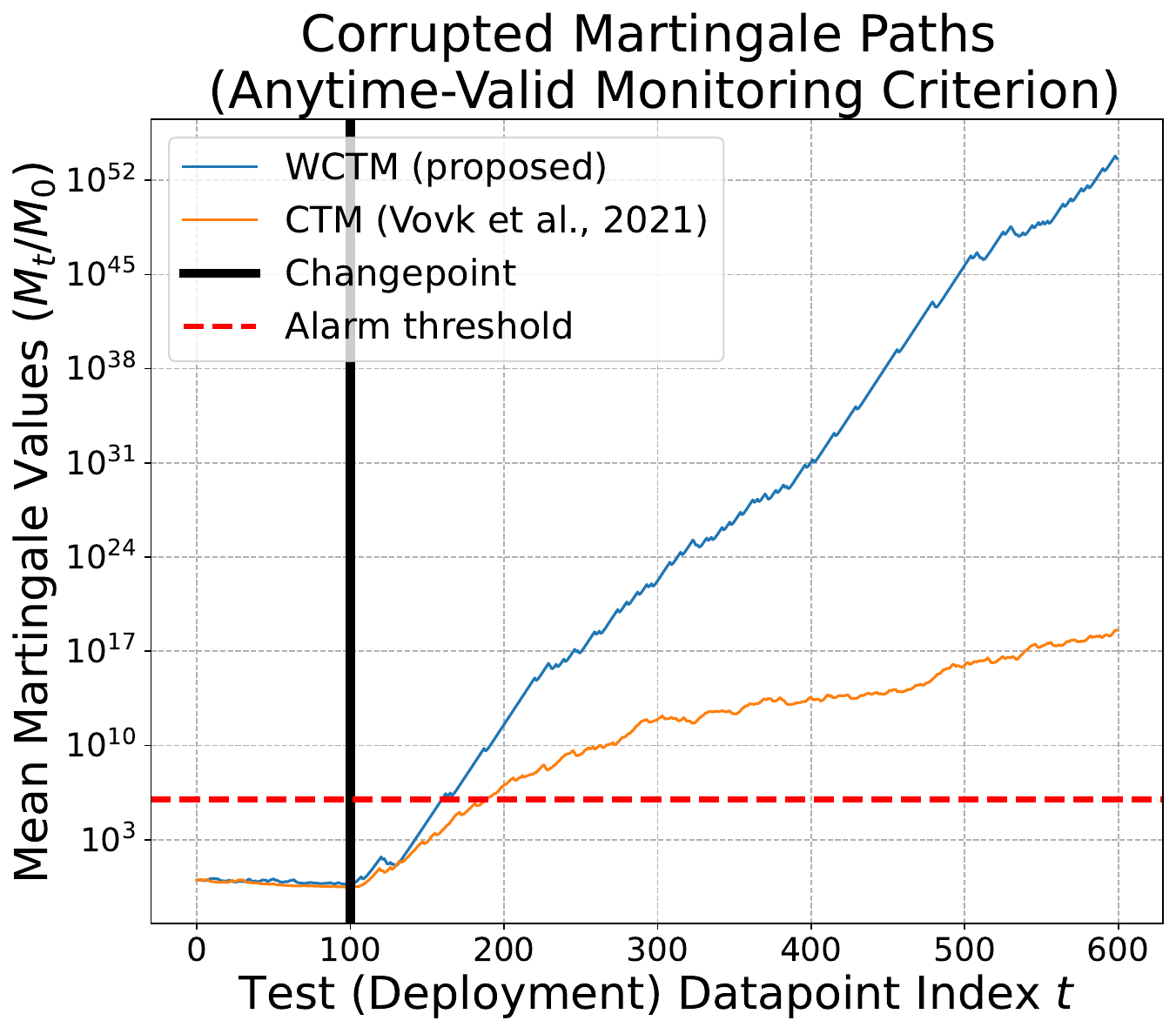} 
        %\vspace{-5pt}
        \\
        {\small{(k) jpeg compression}}
        \end{tabular} &
        \begin{tabular}{c}
        \includegraphics[width=.27\textwidth]{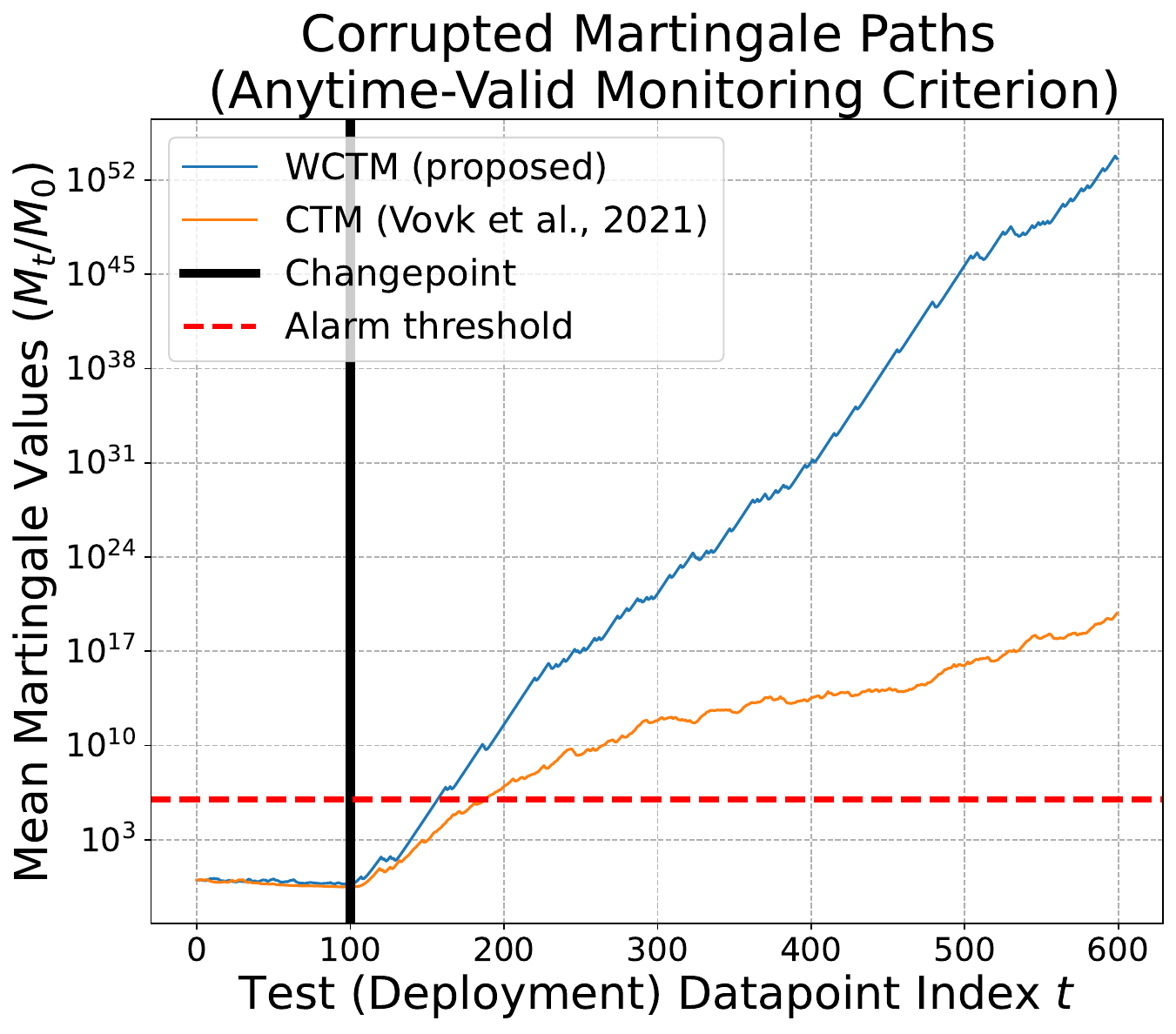}
        %\vspace{-5pt}
        \\
        {\small{(l) motion blur}}
        \end{tabular} \\
        \end{tabular}
    \end{minipage}
    \caption{Results on CIFAR-10 with various corruption types, all at the highest severity level. WCTM reacts more quickly than the standard CTM under these conditions. Moreover, with several types of corruption, the standard CTM does not raise any alarms at all.}
    \label{fig:corruption_set3}
\end{figure*}

\section{Experiment Details}
\label{app:expt_details}

\subsection{Datasets} 
The evaluation datasets include both real-world tabular and image datasets. The tabular datasets are for regression tasks (where conformal methods used the absolute value residual nonconformity score $\widehat{S}(x,y)=|y-\widehat{\mu}(x)|$), and the datasest span various sizes and dimensionalities: the Medical Expenditure Panel Survey (MEPS) dataset (33005 samples, 107 features) \citep{cohen2009medical}, the UCI Superconductivity dataset (21263 samples, 81 features) \citep{superconductivty_data_464}, and the UCI bike sharing dataset (17379 samples, 12 features) \citep{bike_sharing_275}. The image datasets were for classification tasks (where conformal methods used the one-minus-softmax score $\widehat{S}(x,y)=1-\widehat{p}(y_i\mid x_i)$) were the MNIST-corruption \citep{mu2019mnist} (60000 clean samples, 10000 corrupted samples) and CIFAR-10-corruption \citep{hendrycks2018benchmarking} (50000 clean samples, 10000 corrupted samples), which are standard benchmarks for assessing distribution shifts. 

\subsection{Simulating Shifts in Tabular Data} 
To evaluate the online adaptation performance of our proposed WCTMs on the tabular, regression-task datasets, we simulated mild or benign covariate shifts by exponentially tilting (i.e., up-sampling) the test samples from the full dataset based on selected covariates.  
On the MEPS healthcare dataset, $h$ was selected to simulate a demographic shift towards younger, higher-educated patients (expected to be a benign shift due to youth tending to correlate with fewer, less variable health issues). On the bike-sharing dataset, $h$ simulated a shift in weather patterns to colder, windier days; meanwhile, a more complex shift was simulated on the superconductivity dataset by sampling proportional to the projection onto the first principal component representation of the training data. 
Harmful concept shifts in tabular data were simulated by biasing the label values as a function of selected input covariates for each dataset.

\subsection{Further Details of Image Classification Experiments}
We provide further discussions on the image classification experiments from Section \ref{subsec:benign_to_extreme_cs}. Figure \ref{fig:eight_fig} demonstrates the coverage rates and set sizes of different corruption levels as discussed in the main paper. Details of model architectures and training configurations can be found in Table \ref{tab:mnist_discriminator_details} - Table \ref{tab:resnet20_details_std}.

\begin{figure*}[t!]
    \begin{minipage}{\textwidth}
    \centering
    \begin{tabular}{@{\hspace{-2.4ex}} c @{\hspace{-2.4ex}} @{\hspace{-2.4ex}} c @{\hspace{-2.4ex}} @{\hspace{-2.4ex}} c @{\hspace{-2.4ex}} @{\hspace{-2.4ex}} c @{\hspace{-2.4ex}}}
        \begin{tabular}{c}
        \includegraphics[width=.276\textwidth]{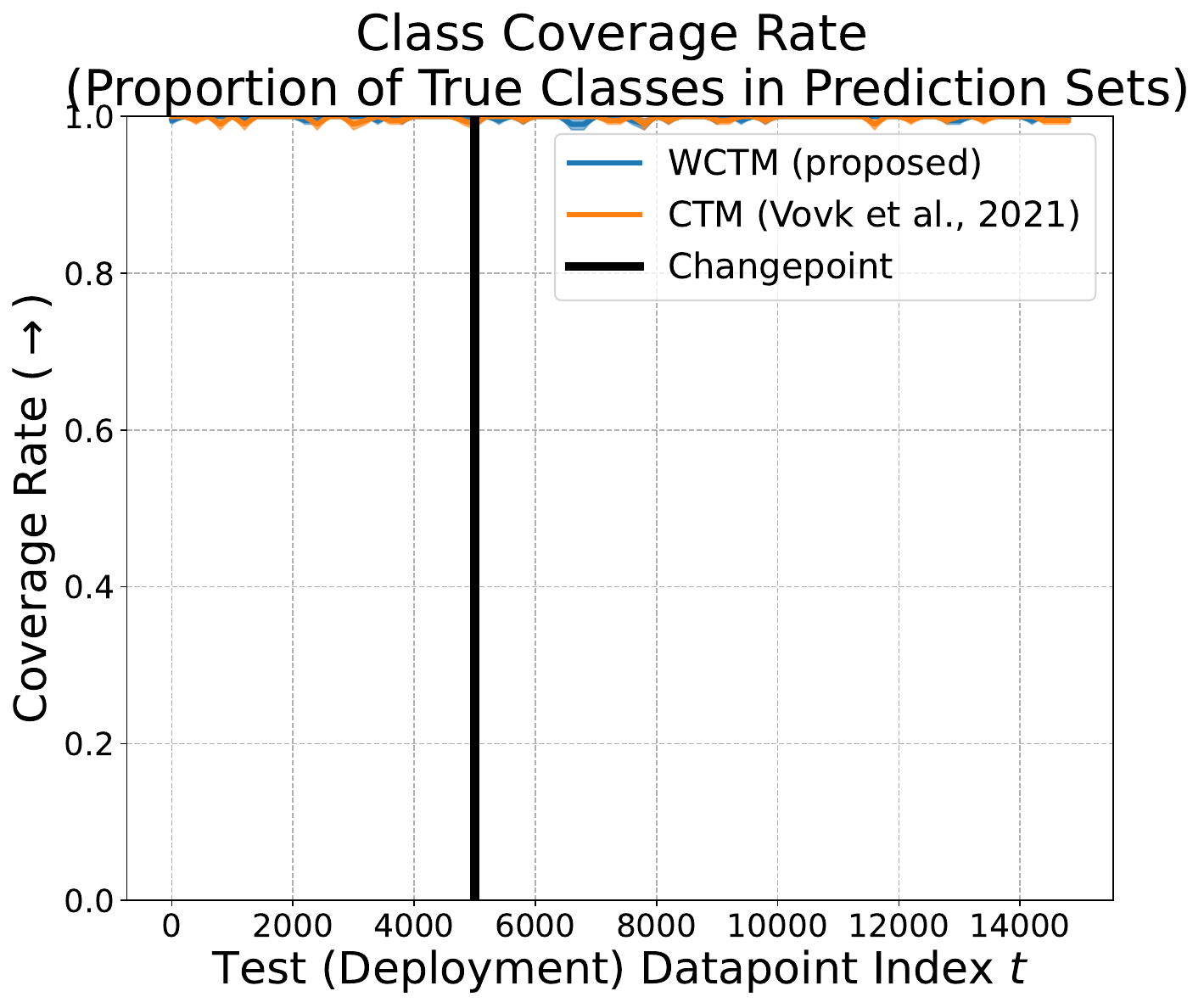}
        %\vspace{-5pt}
        \\
        \end{tabular} &
        \begin{tabular}{c}
        \includegraphics[width=.276\textwidth]{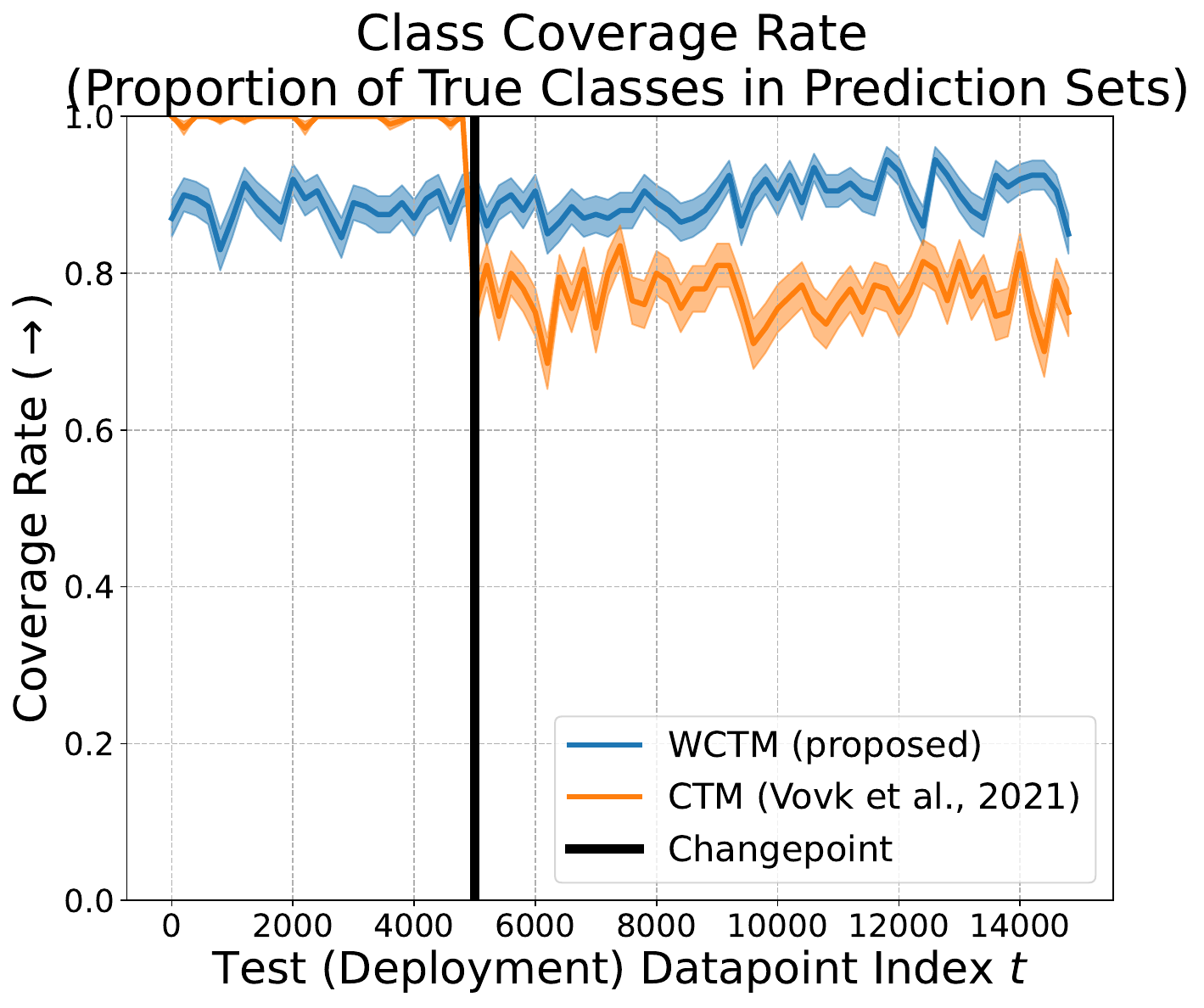}
        %\vspace{-5pt}
        \\
        \end{tabular} & 
        \begin{tabular}{c}
        \includegraphics[width=.276\textwidth]{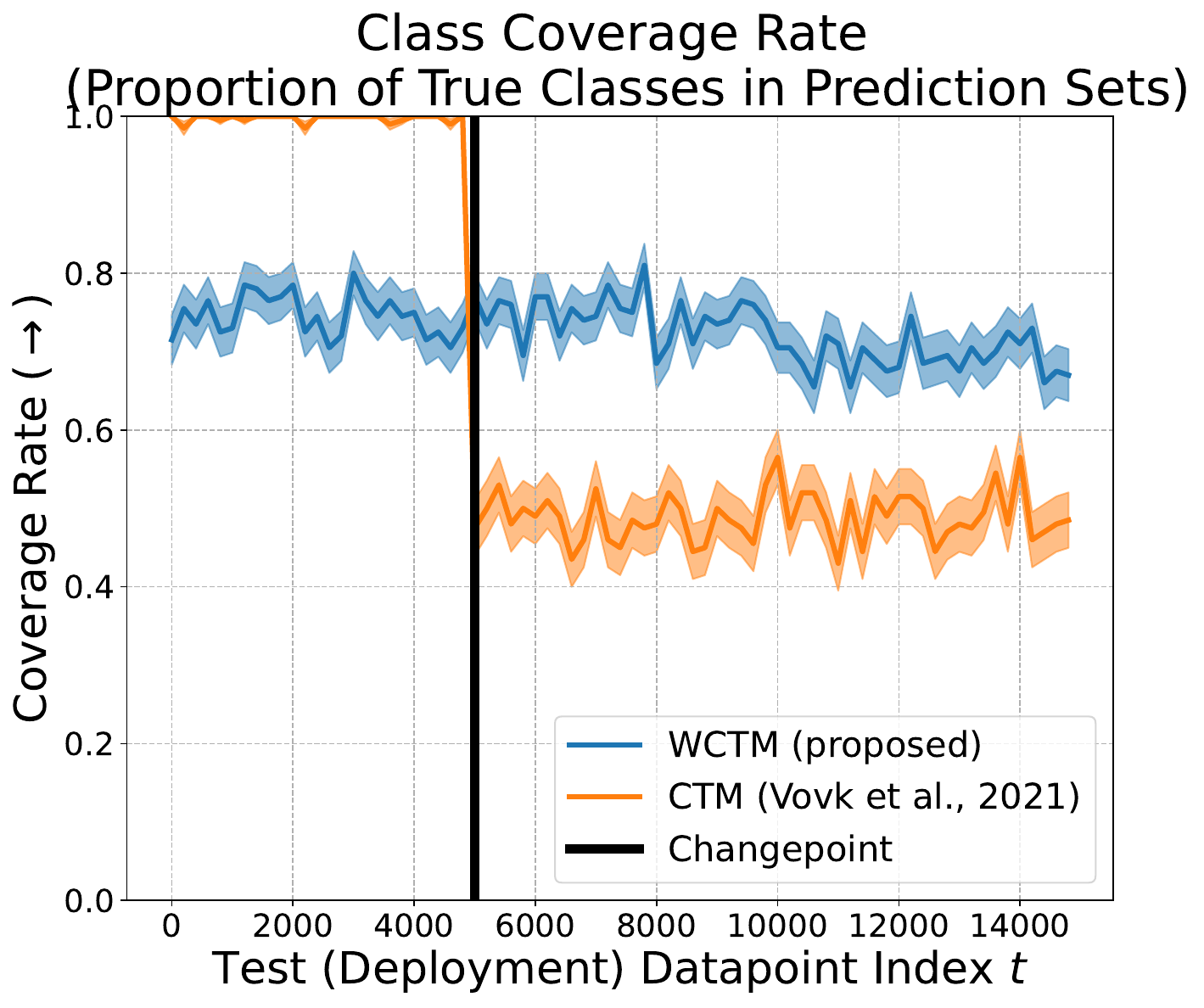} 
        %\vspace{-5pt}
        \\
        \end{tabular} &
        \begin{tabular}{c}
        \includegraphics[width=.276\textwidth]{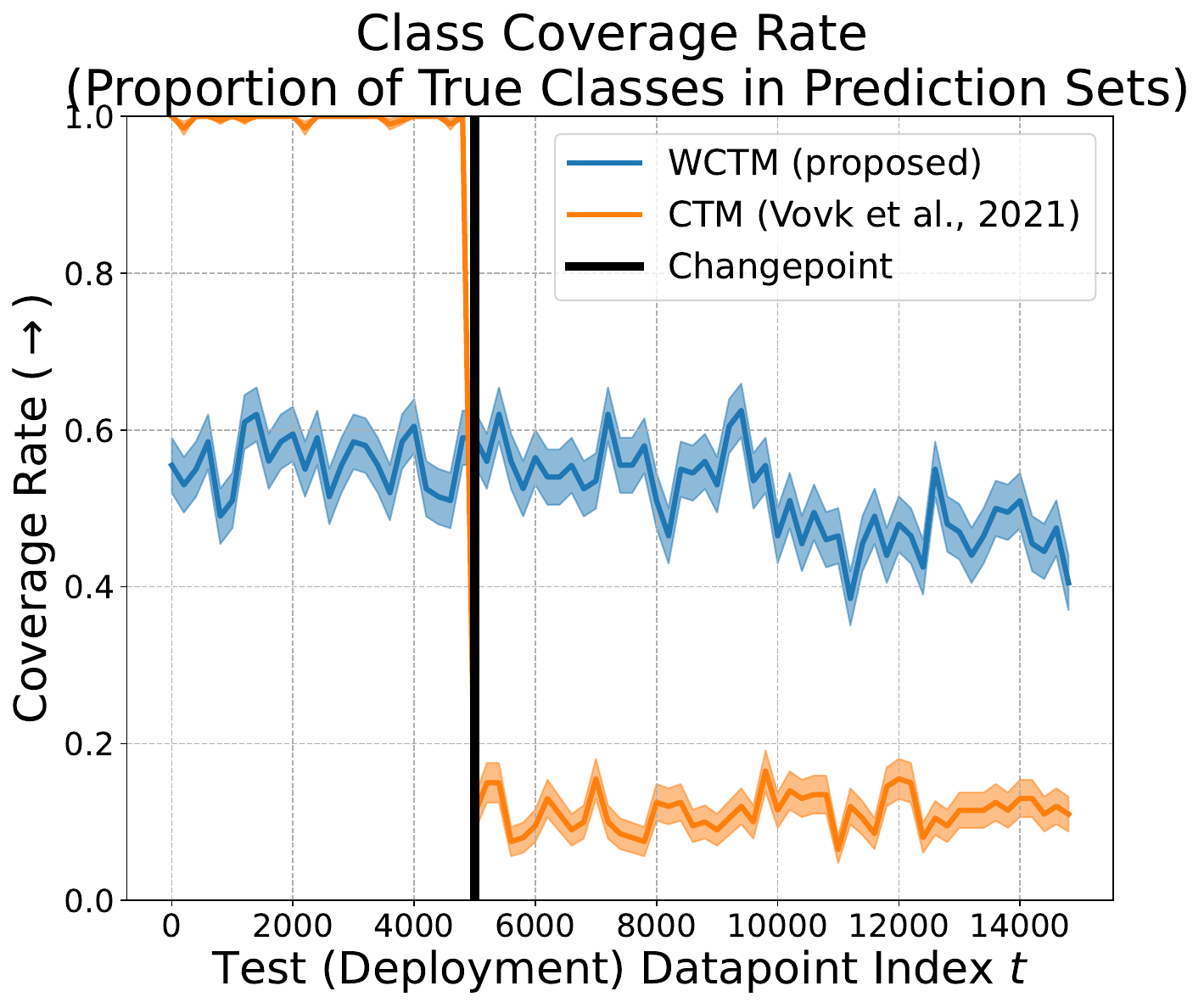}
        %\vspace{-5pt}
        \\
        \end{tabular} \\
        \begin{tabular}{c}
        \includegraphics[width=.276\textwidth]{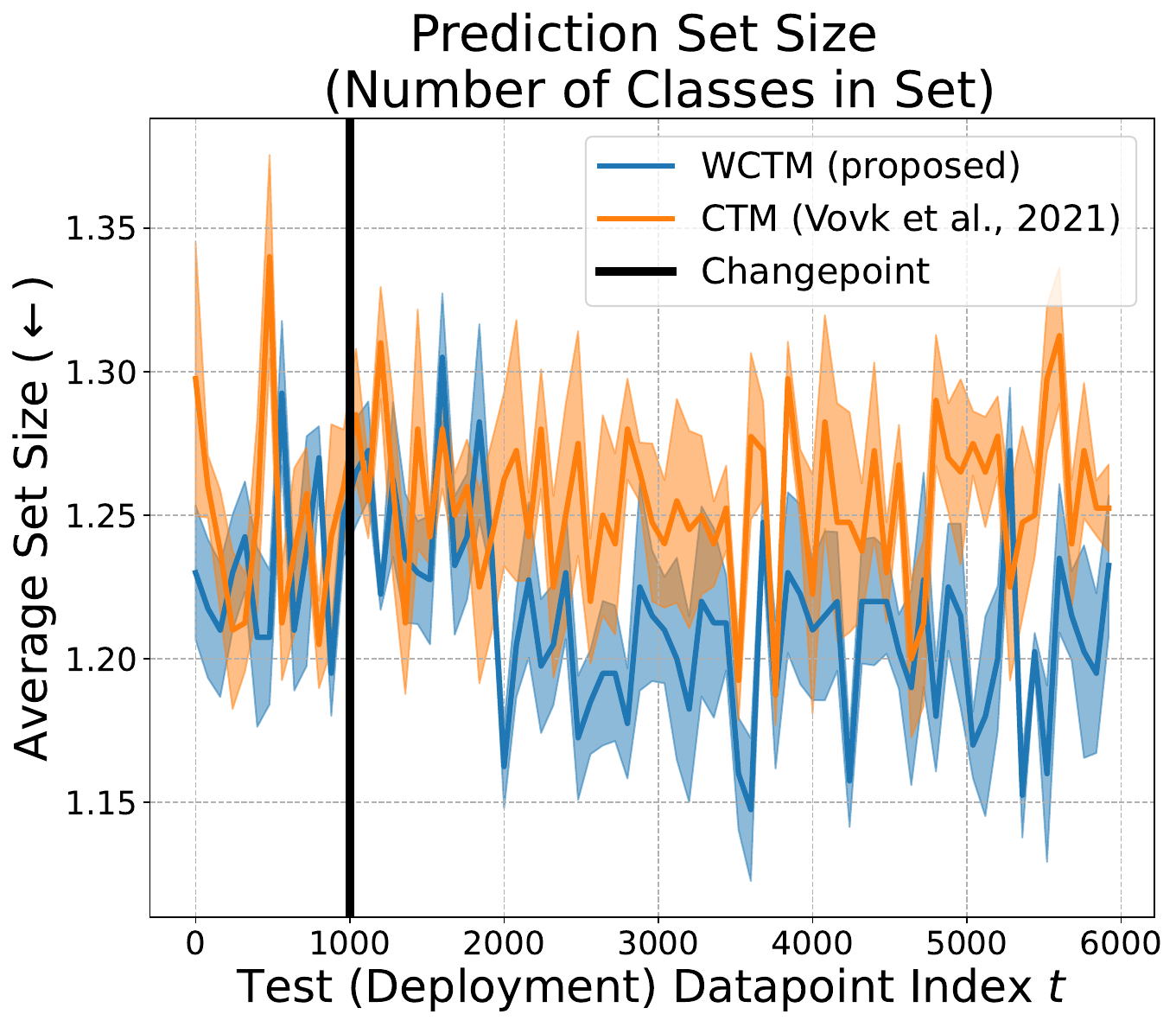}
        %\vspace{-5pt}
        \\
        {\small{(a) No Corruption}}
        \end{tabular} &
        \begin{tabular}{c}
        \includegraphics[width=.276\textwidth]{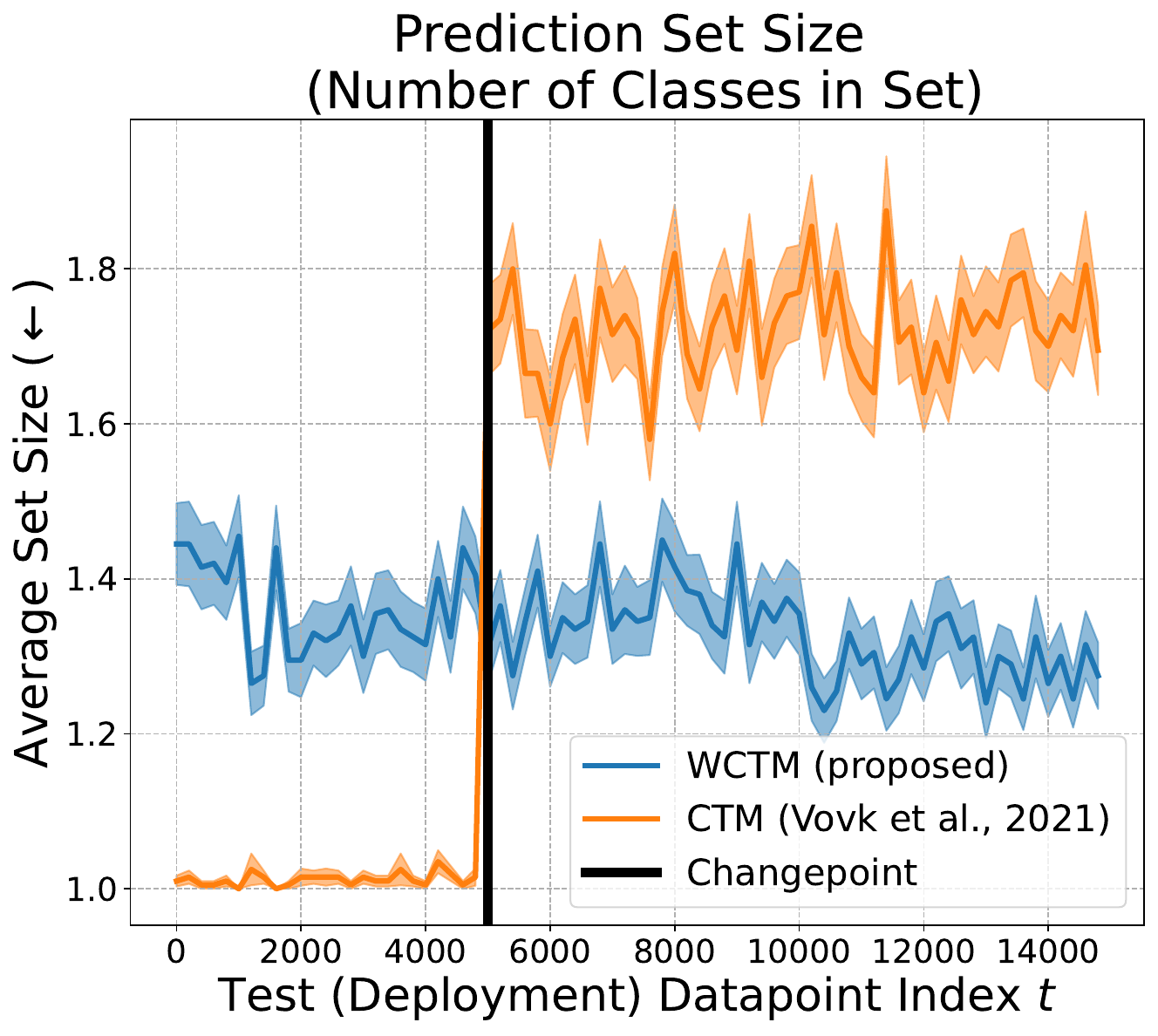}
        %\vspace{-5pt}
        \\
        {\small{(b) Minor Benign Corruption}}
        \end{tabular} & 
        \begin{tabular}{c}
        \includegraphics[width=.276\textwidth]{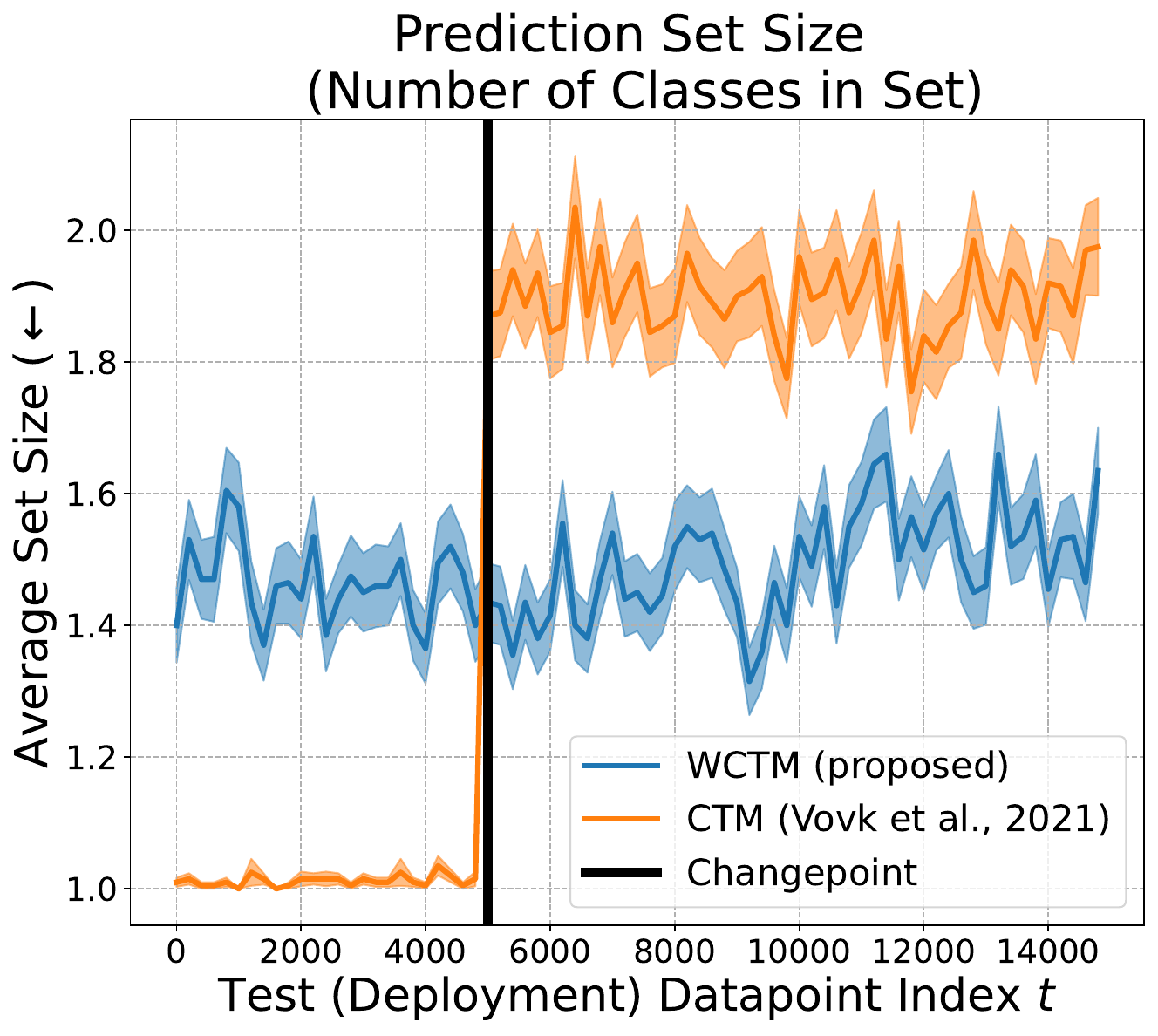} 
        %\vspace{-5pt}
        \\
        {\small{(c) Benign Corruption}}
        \end{tabular} &
        \begin{tabular}{c}
        \includegraphics[width=.276\textwidth]{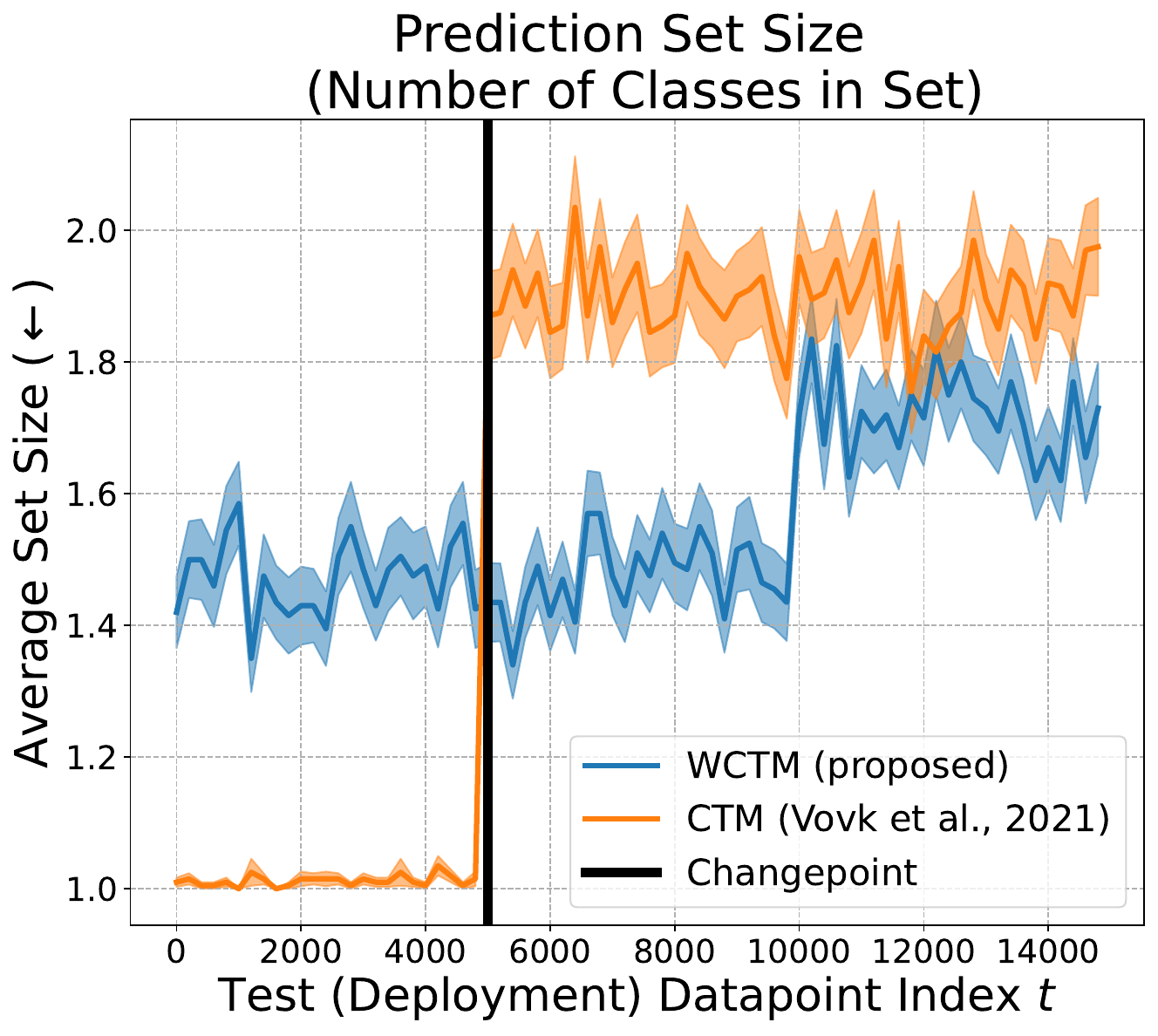}
        %\vspace{-5pt}
        \\
        {\small{(d) Extreme Corruption}}
        \end{tabular} \\
        \end{tabular}
    \end{minipage}
    \caption{The results supplement Figure \ref{fig:image_benign} in the main paper. They demonstrate the coverage rate and prediction set size under four different corruption scenarios. In the multi-class classification setting, we adopt metrics different from those used in the regression experiments and follow \citet{romano2020classification} to measure the prediction set size and coverage rate (defined as the proportion of true classes in the specified range, NOT conformal coverage) as principled risk metrics for distinguishing benign from harmful shifts. We increased the size of the validation set and the number of samples visualized to yield more robust performance and provide a clearer view of the trajectory; the results are averaged over a window size of 200, while all other configurations remain unchanged from the original setting. As discussed in the paper, we mixed test samples (target corrupted) with validation samples (source clean) to improve the estimation of weights for CTMs. So under corrupted scenarios, the “starting points” before the change points for WCTM and CTM differ, as the mixture allows the validation set to contain corrupted data; however, this difference is not clearly reflected in the martingale paths. Overall, the models initially exhibit relatively high classification performance under the clean setting, while CTM rapidly declines to a lower performance level under all corruption conditions. Although WCTM adapts to changes in benign scenarios, it eventually demonstrates severe metric changes under extreme shifts as well, which corresponds to the results in Figure \ref{fig:image_benign}.}
    \label{fig:eight_fig}
\end{figure*}

\begin{table}[htb!] % Or other placement specifiers like [H] with {float} package
    \centering
    \caption{MNISTDiscriminator Model Details}
    \label{tab:mnist_discriminator_details}
    % Adjust the width (e.g., 0.65\textwidth) as needed for your document layout
    \begin{tabular}{|l|p{0.65\textwidth}|} % l=left-aligned, p=paragraph column with fixed width
        \hline
        \textbf{Component} & \textbf{Details} \\
        \hline
        Model Name & MNISTDiscriminator \\
        \hline
        Purpose & Binary classifier to distinguish between source (uncorrupted) and target (corrupted) MNIST data \\
        \hline
        Architecture Type & Convolutional Neural Network (CNN) with 2 conv blocks + FC layers \\
        \hline
        Input Shape & (Batch\_size, 1, 28, 28) - Grayscale MNIST images \\ % Escaped underscore
        \hline
        Output Shape & (Batch\_size, 2) - Binary classification logits \\ % Escaped underscore
        \hline
        Total Parameters & $\approx$1.3M parameters \\ % Using \approx for ~
        \hline
        Layers &
            First Conv Block: \texttt{Conv2d}(1$\rightarrow$32, 3$\times$3) $\rightarrow$ \texttt{BatchNorm} $\rightarrow$ \texttt{ReLU} $\rightarrow$ \texttt{MaxPool}(2$\times$2) $\rightarrow$ \texttt{Dropout} \newline % Use \newline for line breaks within p{} cell
            Second Conv Block: \texttt{Conv2d}(32$\rightarrow$64, 3$\times$3) $\rightarrow$ \texttt{BatchNorm} $\rightarrow$ \texttt{ReLU} $\rightarrow$ \texttt{MaxPool}(2$\times$2) $\rightarrow$ \texttt{Dropout} \newline
            Fully Connected: \texttt{Linear}(64$\times$7$\times$7$\rightarrow$128) $\rightarrow$ \texttt{BatchNorm} $\rightarrow$ \texttt{ReLU} $\rightarrow$ \texttt{Dropout} \newline
            Output: \texttt{Linear}(128$\rightarrow$2) \\
        \hline
        Regularization & \texttt{Dropout} (rate=0.3), \texttt{BatchNormalization} \\
        \hline
        Calibration Method & Temperature scaling (initial temp=1.5) \\
        \hline
        Activation Functions & \texttt{ReLU} \\
        \hline
        Loss Function & Cross-entropy (implicit in the code) \\
        \hline
        Training Epochs & 30 \\ % Using \texttt for filename, escaped _
        \hline
        Batch Size & 64  \\ % Using \texttt for filename, escaped _
        \hline
        Learning Rate & 0.001 \\ % Using \texttt for filename, escaped _
        \hline
        Optimizer & \texttt{Adam} \\
        \hline
        Special Features & Temperature scaling parameter for improved probability calibration \\
        \hline
        Usage Context & Used for estimating likelihood ratios in weighted conformal prediction \\
        \hline
    \end{tabular}
\end{table}

\begin{table}[htb!] % Or other placement specifiers like [H] with {float} package
    \centering
    \caption{CIFAR10 Discriminator Model Details} % Updated caption
    \label{tab:cifar10_discriminator_details_std} % Updated label
    % Adjust the width (e.g., 0.65\textwidth) as needed for your document layout
    \begin{tabular}{|l|p{0.65\textwidth}|} % Using format from MNISTDiscriminator Option 1: |l|p{width}|
        \hline % Top rule
        \textbf{Component} & \textbf{Details} \\
        \hline % Rule below header
        Model Name & CIFAR10Discriminator \\
        \hline
        Purpose & Binary classifier to distinguish between source (uncorrupted) and target (corrupted) CIFAR-10 data \\
        \hline
        Architecture Type & Convolutional Neural Network (CNN) with 3 conv blocks + FC layers \\
        \hline
        Input Shape & (Batch\_size, 3, 32, 32) - RGB CIFAR-10 images \\ % Escaped underscore
        \hline
        Output Shape & (Batch\_size, 2) - Binary classification logits \\ % Escaped underscore
        \hline
        Total Parameters & $\approx$4.8M parameters \\ % Using \approx for ~
        \hline
        Layers & % Using \newline for breaks within the p{} cell
            First Conv Block: \texttt{Conv2d}(3$\rightarrow$64, 3$\times$3) $\rightarrow$ \texttt{BatchNorm} $\rightarrow$ \texttt{ReLU} $\rightarrow$ \texttt{MaxPool}(2$\times$2) $\rightarrow$ \texttt{Dropout} \newline
            Second Conv Block: \texttt{Conv2d}(64$\rightarrow$128, 3$\times$3) $\rightarrow$ \texttt{BatchNorm} $\rightarrow$ \texttt{ReLU} $\rightarrow$ \texttt{MaxPool}(2$\times$2) $\rightarrow$ \texttt{Dropout} \newline
            Third Conv Block: \texttt{Conv2d}(128$\rightarrow$256, 3$\times$3) $\rightarrow$ \texttt{BatchNorm} $\rightarrow$ \texttt{ReLU} $\rightarrow$ \texttt{MaxPool}(2$\times$2) $\rightarrow$ \texttt{Dropout} \newline
            First FC Layer: \texttt{Linear}(256$\times$4$\times$4$\rightarrow$512) $\rightarrow$ \texttt{BatchNorm} $\rightarrow$ \texttt{ReLU} $\rightarrow$ \texttt{Dropout} \newline
            Second FC Layer: \texttt{Linear}(512$\rightarrow$128) $\rightarrow$ \texttt{BatchNorm} $\rightarrow$ \texttt{ReLU} $\rightarrow$ \texttt{Dropout} \newline
            Output: \texttt{Linear}(128$\rightarrow$2) \\
        \hline
        Regularization & \texttt{Dropout} (rate=0.3), \texttt{BatchNormalization} at each layer \\
        \hline
        Calibration Method & Temperature scaling (initial temp=1.5) \\
        \hline
        Activation Functions & \texttt{ReLU} throughout the network \\
        \hline
        Loss Function & Cross-entropy \\
        \hline
        Training Epochs & 30 \\
        \hline
        Batch Size & 64 \\
        \hline
        Learning Rate & 0.001 \\
        \hline
        Optimizer & \texttt{Adam} \\
        \hline
        Special Features & Temperature scaling parameter for improved probability calibration \\
        \hline
        Usage Context & Used for estimating likelihood ratios in weighted conformal prediction \\
        \hline % Bottom rule
    \end{tabular}
\end{table}

\begin{table}[htb!] % Floating table environment
    \centering % Center the table
    \caption{RegularizedMNISTModel Details} % Updated caption
    \label{tab:regularized_mnist_model_details_std} % Updated label
    % Adjust the width (e.g., 0.65\textwidth) as needed for your document layout
    \begin{tabular}{|l|p{0.65\textwidth}|} % Format: |l|p{width}| with rules
        \hline % Top rule
        \textbf{Component} & \textbf{Details} \\
        \hline % Rule below header
        Model Name & RegularizedMNISTModel \\
        \hline
        Purpose & Classification model for MNIST digits (0-9) with robustness to corrupted images \\
        \hline
        Architecture Type & Convolutional Neural Network (CNN) with 3 conv blocks + FC layers \\
        \hline
        Input Shape & (Batch\_size, 1, 28, 28) - Grayscale MNIST images \\ % Escaped underscore
        \hline
        Output Shape & (Batch\_size, 10) - Logits for 10-class digit classification \\ % Escaped underscore
        \hline
        Total Parameters & $\approx$600K parameters \\ % Using \approx for ~
        \hline
        Layers & % Using \newline for breaks within the p{} cell
            First Conv Block: \texttt{Conv2d}(1$\rightarrow$32, 3$\times$3) $\rightarrow$ \texttt{BatchNorm} $\rightarrow$ \texttt{ReLU} $\rightarrow$ \texttt{MaxPool}(2$\times$2) $\rightarrow$ \texttt{Dropout} \newline
            Second Conv Block: \texttt{Conv2d}(32$\rightarrow$64, 3$\times$3) $\rightarrow$ \texttt{BatchNorm} $\rightarrow$ \texttt{ReLU} $\rightarrow$ \texttt{MaxPool}(2$\times$2) $\rightarrow$ \texttt{Dropout} \newline
            Third Conv Block: \texttt{Conv2d}(64$\rightarrow$128, 3$\times$3) $\rightarrow$ \texttt{BatchNorm} $\rightarrow$ \texttt{ReLU} $\rightarrow$ \texttt{MaxPool}(2$\times$2) $\rightarrow$ \texttt{Dropout} \newline
            Fully Connected: \texttt{Linear}(128$\times$3$\times$3$\rightarrow$256) $\rightarrow$ \texttt{BatchNorm} $\rightarrow$ \texttt{ReLU} $\rightarrow$ \texttt{Dropout} \newline
            Output: \texttt{Linear}(256$\rightarrow$10) \\
        \hline
        Regularization & \texttt{Dropout} (rate=0.3), \texttt{BatchNormalization} at each layer \\
        \hline
        Activation Functions & \texttt{ReLU} throughout the network \\
        \hline
        Loss Function & Cross-entropy \\
        \hline
        Training Epochs & 30 \\
        \hline
        Batch Size & 64 \\
        \hline
        Learning Rate & 0.001 \\
        \hline
        Optimizer & \texttt{Adam} \\
        \hline
        Special Features & Extensive regularization for robustness to corrupted images \\
        \hline
        Usage Context & Primary classification model for MNIST digits in conformal prediction framework \\
        \hline % Bottom rule
    \end{tabular}
\end{table}

\begin{table}[htb!] % Floating table environment
    \centering % Center the table
    \caption{ResNet20 Model Details (Standard Format)} % Updated caption
    \label{tab:resnet20_details_std} % Updated label
    % Adjust the width (e.g., 0.65\textwidth) as needed for your document layout
    \begin{tabular}{|l|p{0.65\textwidth}|} % Format: |l|p{width}| with rules
        \hline % Top rule
        \textbf{Component} & \textbf{Details} \\
        \hline % Rule below header
        Model Name & ResNet20 \\
        \hline
        Purpose & Classification model for CIFAR-10 images \\
        \hline
        Architecture Type & Residual Network (ResNet v1) with basic blocks \\
        \hline
        Input Shape & (Batch\_size, 3, 32, 32) - RGB CIFAR-10 images \\ % Escaped underscore
        \hline
        Output Shape & (Batch\_size, 10) - Logits for 10-class CIFAR-10 classification \\ % Escaped underscore
        \hline
        Total Parameters & $\approx$270K parameters \\ % Using \approx for ~
        \hline
        Depth & 20 layers (1 initial conv + 18 layers in blocks + 1 final linear) \\
        \hline
        Block Structure & \texttt{BasicBlock}: \texttt{Conv}$\rightarrow$\texttt{BN}$\rightarrow$\texttt{ReLU}$\rightarrow$\texttt{Conv}$\rightarrow$\texttt{BN} + shortcut connection, followed by \texttt{ReLU} \\ % Using \texttt and math arrows
        \hline
        Network Architecture & % Using \newline for breaks within the p{} cell
            Initial Layer: \texttt{Conv2d}(3$\rightarrow$16, 3$\times$3) $\rightarrow$ \texttt{BatchNorm} $\rightarrow$ \texttt{ReLU} \newline
            Stage 1: 3 \texttt{BasicBlocks} (16 channels, stride=1) \newline
            Stage 2: 3 \texttt{BasicBlocks} (32 channels, stride=2) \newline
            Stage 3: 3 \texttt{BasicBlocks} (64 channels, stride=2) \newline
            Output: \texttt{Global AvgPool} $\rightarrow$ \texttt{Linear}(64$\rightarrow$10) \\
        \hline
        Regularization & BatchNormalization in each block \\
        \hline
        Activation Functions & ReLU \\
        \hline
        Weight Initialization & Kaiming normal for convolutional layers, constant for batch normalization \\
        \hline
        Loss Function & Cross-entropy \\
        \hline
        Training Epochs & 30 \\
        \hline
        Batch Size & 64 \\
        \hline
        Learning Rate & 0.001  \\
        \hline
        Optimizer & Adam \\
        \hline
        Special Features & Skip connections (residual learning) for better gradient flow \\
        \hline
        Usage Context & Primary classification model for CIFAR-10 in conformal prediction framework \\
        \hline % Bottom rule
    \end{tabular}
\end{table}

\newpage

\vspace{4cm}

\section{Algorithms}\label{sec:alg}
% \SetKwComment{Comment}{/* }{ */}
% \newcommand\mycommfont[1]{\footnotesize\ttfamily\textcolor{blue}{#1}}
% \SetCommentSty{mycommfont}

Only limited algorithm pseudocode is provided at this time, focused on algorithms that explain how we penalized noninformative conformal prediction intervals (i.e., with anticonservative $p$-values). We aim to update the arXiv version of this paper with more comprehensive pseudocode.

\begin{algorithm}[tb!]
   \caption{Calculate weighted conformal prediction set for covariate shift \citep{tibshirani2019conformal}.}
   \label{alg:example1}
\begin{algorithmic}
   \STATE {\bfseries Input:} Calibration data $Z_{1:n}:=\{(X_i, Y_i)\}_{i=1}^n$; Test point input $X_{n+1}$; Score function $\hat{s}:(\mathcal{X}\times \mathcal{Y})^*\rightarrow (\mathbb{R}^{+})^*$; Density-ratio weight function $\hat{w}: \mathcal{X}^*\rightarrow [0, 1]^*$; Significance level (target miscoverage) $\alpha$. \\
    \vspace{0.1cm}
    \STATE {\bfseries Output:} Conformal prediction set $\widehat{C}_{n, \alpha}(X_{n+1}) \subseteq \mathcal{Y}$.\\
    \vspace{0.3cm}
   \STATE {\fontfamily{qcr}\selectfont Calculate $n$ nonconformity scores and $n+1$ density-ratio weights:} \\
    \STATE $v_{1}, ..., v_{n}=\hat{s}(Z_1, ..., Z_{n})$ \\
    \STATE $\tilde{w}_{1}, ..., \tilde{w}_{n+1}=\hat{w}(X_1, ..., X_{n+1})$ \\
    \vspace{0.3cm}
    \STATE {\fontfamily{qcr}\selectfont Calculate $1-\alpha$ quantile over weighted score distribution (with conservative adjustment $v_{n+1}=\infty$, as $Y_{n+1}$ is unknown):} \\
    \STATE $q_{1-\alpha} = \widehat{Q}_{1-\alpha}\big(\sum_{i=1}^n\tilde{w}_i\cdot \delta_{v_i} + \tilde{w}_{n+1}\cdot \delta_{\infty}\big)$ \qquad \COMMENT{{\fontfamily{qcr}\selectfont\color{red}Note: $\tilde{w}_{n+1} \geq \alpha\implies q_{1-\alpha}=\infty.$}}
    \vspace{0.3cm}
    \STATE {\fontfamily{qcr}\selectfont Calculate weighted conformal prediction set:} \\
    \STATE $\widehat{C}_{n, \alpha}(X_{n+1}) = \big\{y\in \mathcal{Y} : \hat{s}(X_{n+1}, y)\leq q_{1-\alpha}\big\}$ \qquad \COMMENT{{\fontfamily{qcr}\selectfont\color{red}Note: $q_{1-\alpha}=\infty \implies \widehat{C}_{n, \alpha}(X_{n+1}) = \mathcal{Y}.$}}
    \vspace{0.3cm}
    \STATE {\bfseries Return:} $\widehat{C}_{n, \alpha}(X_{n+1})$
   % \IF{$\widetilde{w}_{n+1} < \alpha$}
   % \STATE Calculate $X$-CTM
   % \STATE $noChange = false$
   % \ENDIF
   % \ENDFOR
   % \ENDWHILE
\end{algorithmic}
\end{algorithm}

\begin{algorithm}[tb!]
   \caption{Calculate weighted conformal $p$-value that penalizes noninformativeness.}
   \label{alg:example2}
\begin{algorithmic}
   \STATE {\bfseries Input:} Calibration scores $V_{1:n}:=\{V_1, ..., V_n\}_{i=1}^n$; Test point score $V_{n+1}$; Weight vector $\widetilde{w}:=(\widetilde{w}_1, ..., \widetilde{w}_{n+1})$; Significance level (target miscoverage) $\alpha$. \\
    % \STATE {\fontfamily{qcr}\selectfont If .} \\
    \STATE {\bfseries Output:} Weighted conformal $p$-value $p_{n+1}^{\tilde{w}}$.
   \IF{$\widetilde{w}_{n+1} < \alpha$}
    \STATE {\fontfamily{qcr}\selectfont Calculate exact weighted conformal $p$-value (Eq. \eqref{eq:weighted_p_vals_arbitrary_def}):} \\
    \STATE Sample $u_{n+1} \stackrel{iid}{\sim}\text{Unif}[0,1]$ \\
    \STATE $p_{n+1}^{\tilde{w}} = \sum_{i=1}^{n+1}\tilde{w}_i\Big[\mathbbm{1}\{v_i > v_{n+1}\} + u_{n+1}\mathbbm{1}\{v_i = v_{n+1}\}\Big].$
     \\
   \ELSE 
   \STATE \COMMENT{{\fontfamily{qcr}\selectfont\color{red}Note: This "else" condition implies the corresponding CP set for the test point was noninformative, i.e., $\widehat{C}_{n, \alpha}(X_{n+1}) = \mathcal{Y}$; thus, penalize noninformativeness by using derandomized, anticonservative $p$-value, equivalent to setting $u_{n+1}=0$. These will cause WCTM to raise an alarm more quickly, as they intentionally break IID uniformity based on the magnitude of $\tilde{w}_{n+1}$.}} \\
   \STATE $p_{n+1}^{\tilde{w}} = \sum_{i=1}^{n+1}\tilde{w}_i\Big[\mathbbm{1}\{v_i > v_{n+1}\}\Big].$
     \\
   \STATE 
   \ENDIF
   \STATE {\bfseries Return:} $p_{n+1}^{\tilde{w}}$
   % \ELSE 
   
   % \ENDELSE
   % \ENDFOR
   % \ENDWHILE
\end{algorithmic}
\end{algorithm}

% \begin{algorithm}[tb]
%    \caption{WCTMs for (1) adapting to mild shifts in $X$, (2) detecting harmful shifts, and (3) root-cause analysis.}
%    \label{alg:example3}
% \begin{algorithmic}
%    \STATE {\bfseries Input:} Calibration data $Z_{1:n}:=\{(X_i, Y_i)\}_{i=1}^n$; Test data stream $(X_{n+1}, Y_{n+1}), ..., (X_{n+T}, Y_{n+T})$; WCTM score function $\hat{s}_{Z}:\mathcal{X}\times\mathcal{Y}\rightarrow \mathbb{R}$; $X$-CTM score function $\hat{s}_{X}:\mathcal{X}\rightarrow \mathbb{R}$; WCTM Alarm threshold $c_{\text{alarm}}$; $X$-CTM adaptation threshold $c_{\text{adapt}}$; density-ratio estimator $\hat{w}:\mathcal{X}\rightarrow (0, 1)$.
%     \STATE $M_0^{(\text{x})} = 1$ \qquad \COMMENT{{\fontfamily{qcr}\selectfont Initialize X-CTM}}\\
%     \STATE $M_0^{(\text{w})} = 1$ \qquad \COMMENT{{\fontfamily{qcr}\selectfont Initialize WCTM}} \\
%     \STATE {\fontfamily{qcr}\selectfont For each time $t$: X-CTM $M_t^{(\text{x})}$ monitors $X$ \& WCTM $M_t^{(\text{w})}$ monitors for harmful shifts.}
%    % \WHILE{$M_0^{(\text{x})}<c_{\text{adapt}}$}
%    % \STATE Initialize $noChange = true$.\\
%    \FOR{$t=1$ {\bfseries to} $T$}
%    \IF{$x_i > x_{i+1}$}
%    \STATE Calculate $X$-CTM
%    \STATE $noChange = false$
%    \ENDIF
%    \ENDFOR
%    % \ENDWHILE
% \end{algorithmic}
% \end{algorithm}

% \begin{align}
%     p_{n+1}^{\tilde{w}} = \sum_{i=1}^{n+1}\tilde{w}_i\Big[\mathbbm{1}\{v_i > v_{n+1}\} + u_{n+1}\mathbbm{1}\{v_i = v_{n+1}\}\Big].
%     \label{eq:weighted_p_vals_arbitrary_def}
% \end{align}

% and explained in detail in \citet{prinster2024conformal})

\end{document}